\documentclass{article}
\PassOptionsToPackage{numbers,sort&compress}{natbib}
\PassOptionsToPackage{table}{xcolor}
\usepackage{preprint}

\usepackage[utf8]{inputenc}
\usepackage[T1]{fontenc}
\usepackage{textcomp}
\usepackage{url}
\usepackage{xcolor}
\usepackage{microtype}
\usepackage{graphicx}
\usepackage{subcaption}
\usepackage{wrapfig}
\usepackage{adjustbox}
\usepackage{booktabs}
\usepackage{tabularx}
\usepackage{array}
\usepackage{makecell}
\usepackage{multirow}
\usepackage{longtable}

\usepackage{amsmath}
\usepackage{amsfonts}
\usepackage{amssymb}
\usepackage{mathtools}
\usepackage{amsthm}
\usepackage{bm}
\usepackage{nicefrac}
\usepackage{nccmath}

\usepackage{enumitem}
\usepackage{float}
\usepackage{algorithm}
\usepackage{algorithmic}

\usepackage[most]{tcolorbox}
\usepackage[framemethod=TikZ]{mdframed}
\usepackage{tikz}
\usepackage{pdfrender}
\usepackage{pifont}
\usepackage{environ}
\usepackage{listings}

\definecolor{myTeal}{HTML}{8FC7BE}
\definecolor{lightTeal}{HTML}{F0F8F6}
\definecolor{faintblue}{rgb}{0,0.08,0.65}
\usepackage{hyperref}
\hypersetup{
  colorlinks=true,
  citecolor=faintblue,
  urlcolor=faintblue,
  linkcolor=faintblue
}
\usepackage[capitalize,noabbrev]{cleveref}

\setcitestyle{numbers,square,comma}

\newcounter{exptask}

\theoremstyle{plain}
\newtheorem{theorem}{Theorem}[section]
\newtheorem{proposition}[theorem]{Proposition}
\newtheorem{lemma}[theorem]{Lemma}
\newtheorem{corollary}[theorem]{Corollary}

\theoremstyle{definition}
\newtheorem{definition}[theorem]{Definition}
\newtheorem{assumption}[theorem]{Assumption}

\theoremstyle{remark}
\newtheorem{remark}[theorem]{Remark}

\definecolor{grayw}{HTML}{D3D3D3}
\definecolor{faintgray}{gray}{0.9}
\definecolor{dustyteal}{HTML}{4C9085}
\definecolor{teal}{HTML}{008080}
\definecolor{rows}{gray}{0.93}

\newcommand{\OPTSTAR}{OPT\ensuremath{\star}}

\newcommand{\OPTSTARfull}{OPTimization-based Scalable Tasks for Auto-verifiable Reasoning}

\newsavebox{\tablebox}
\definecolor{tablecolor}{named}{white}

\newcounter{researchQ}
\renewcommand{\theresearchQ}{\textbf{\textcircled{\raisebox{-0.2pt}{\alph{researchQ}}}}}

\graphicspath{{images/polyomino/}}
\newcolumntype{L}{>{\raggedright\arraybackslash}X}

\mdfsetup{%
  middlelinewidth = 1pt,
  roundcorner     = 2pt,
}

\newmdenv[
  middlelinecolor = none,
  backgroundcolor = teal!5,
  linecolor = teal!48,
]{tealbox}

\newmdenv[
  middlelinecolor = none,
  backgroundcolor = gray!7,
  linecolor = gray!56,
]{graybox}

\newmdenv[
  backgroundcolor = gray!9,
  linecolor = gray!40,
  topline=false,bottomline=false, rightline=false, leftline=true, linewidth=2.25pt,
]{bgraybox}

\newmdenv[
  backgroundcolor=teal!6,
  linecolor = teal!48,
  topline=false,bottomline=false, rightline=false, leftline=true, linewidth=2.25pt,
]{btealbox}

\definecolor{takeawaycolor}{RGB}{192, 192, 192}
\colorlet{takeawaycolor}{takeawaycolor!10}
\definecolor{takeawaycolor2}{RGB}{0, 128, 128}
\colorlet{takeawaycolor2}{takeawaycolor2!8}

\newcounter{takeawaycounter}

\newenvironment{takeaway}[1][]{%
  \refstepcounter{takeawaycounter}%
  \begin{tcolorbox}[
    enhanced,
    breakable,
    title=#1,
    colback=takeawaycolor,
    colframe=gray!70,
    colbacktitle=takeawaycolor,
    fonttitle=\bfseries,
    coltitle=black,
    attach boxed title to top left={yshift=-3mm, xshift=2mm},
    boxed title style={size=small, colback=takeawaycolor, frame hidden},
    sharp corners,
    rounded corners,
    arc=1mm,
    top=1mm,
    bottom=1mm,
    left=1mm,
    right=1mm,
    boxrule=0.6pt,
    width=\linewidth-.5mm,
  ]%
}{%
  \end{tcolorbox}
}

\tcbset{
  base/.style args={#1}{
    arc=0mm,
    bottomtitle=0.5mm,
    boxrule=0mm,
    colbacktitle=black!10!white,
    coltitle=black,
    fonttitle=\bfseries,
    left=1mm,
    leftrule=1mm,
    right=1mm,
    top=1mm,
    bottom=1mm,
    title={#1},
    toptitle=0.75mm,
    width=\textwidth
  }
}

\newtcolorbox{promptbox}[2][]{%
  enhanced,
  breakable,
  colback=gray!3,
  colframe=black!60,
  boxrule=0.6pt,
  arc=2mm,
  left=3mm,right=3mm,top=2mm,bottom=2mm,
  fonttitle=\bfseries,
  title={#2},
  #1
}

\tcbset{jsonlisting/.style={
  listing only,
  listing engine=listings,
  colback=white,
  colframe=black!40,
  boxrule=0.4pt,
  arc=1.5mm,
  left=2mm,right=2mm,top=1mm,bottom=1mm,
  breakable,
  listing options={
    basicstyle=\ttfamily\small,
    columns=fullflexible,
    keepspaces=true,
    showstringspaces=false,
    breaklines=true
  }
}}

\definecolor{cardgreen}{RGB}{103,171,159}
\definecolor{cardblue}{RGB}{103,149,171}
\definecolor{cardorange}{RGB}{244,199,186}
\definecolor{cardpurple}{RGB}{149,103,171}

\newlength{\TaskTableW}
\newlength{\IconW}
\newlength{\CardGap}
\newlength{\CardTitleH} 
\newlength{\CardScaleH} 
\newlength{\CardDescH}  
\newlength{\CardObjH}   

\newcommand{\taskcard}[5]{%
  \begin{minipage}[t]{\linewidth}
  \centering

  #1\par
  \vspace{0.12em}

  {\bfseries #2\par}
  \vspace{0.12em}

  {\scriptsize
  \textit{Scales:} #3\par
  \textit{Feasible:} #4\par
  \textit{Objective:} #5\par
  }

  \end{minipage}%
}
\newcommand{\sameiconbox}[2]{%
\resizebox{\IconW}{!}{%
\begin{tikzpicture}[
  x=1cm,
  y=1cm,
  every node/.style={font=\tiny},
  baseline=(current bounding box.center)
]
  \path[use as bounding box] (-0.10,-0.10) rectangle (3.10,1.75);
  \draw[
    rounded corners=2.5pt,
    fill=#1!12,
    draw=#1,
    line width=0.45pt
  ] (0,0) rectangle (3.00,1.65);
  #2
\end{tikzpicture}%
}%
}

\newcommand{\roleicon}{%
\sameiconbox{cardblue}{%
  \node[circle,fill=cardblue!75,inner sep=2.2pt] (a) at (0.55,1.15) {};
  \node[circle,fill=cardblue!45,inner sep=2.2pt] (b) at (1.50,1.15) {};
  \node[circle,fill=cardblue!25,inner sep=2.2pt] (c) at (2.45,1.15) {};

  \node[draw,rounded corners=1.5pt,fill=white,
        minimum width=0.52cm,minimum height=0.26cm]
        (r1) at (0.55,0.38) {$r_1$};
  \node[draw,rounded corners=1.5pt,fill=white,
        minimum width=0.52cm,minimum height=0.26cm]
        (r2) at (1.50,0.38) {$r_2$};
  \node[draw,rounded corners=1.5pt,fill=white,
        minimum width=0.52cm,minimum height=0.26cm]
        (r3) at (2.45,0.38) {$r_3$};

  \draw[cardblue,thick] (a) -- (r1);
  \draw[cardblue,thick] (b) -- (r2);
  \draw[cardblue,thick] (c) -- (r3);

  \draw[red!70!black,thick,dashed] (a) -- (b);
  \node[red!70!black,font=\scriptsize\bfseries] at (1.03,1.45) {!};
}%
}

\newcommand{\maxsaticon}{%
\sameiconbox{cardgreen}{%
  \foreach \y/\txt in {1.24/{$c_1$},0.82/{$c_2$},0.40/{$c_3$}}{
    \draw[fill=white,draw=cardgreen]
      (0.25,\y-0.12) rectangle (0.50,\y+0.12);
    \node[anchor=west] at (0.66,\y) {\txt};
  }

  \draw[cardgreen!80!black,thick]
    (0.29,1.24) -- (0.38,1.15) -- (0.53,1.37);
  \draw[cardgreen!80!black,thick]
    (0.29,0.82) -- (0.38,0.73) -- (0.53,0.95);

  \node[draw,circle,fill=white,inner sep=1.3pt] (t) at (2.05,1.12) {$t$};
  \node[draw,circle,fill=white,inner sep=1.3pt] (w) at (2.50,0.55) {$w$};
  \draw[cardgreen,thick,->] (t) -- (w);

  \node at (2.18,0.22) {$\max\sum s_i$};
}%
}

\newcommand{\schedicon}{%
\sameiconbox{cardorange}{%
  \draw[->,thick] (0.28,0.48) -- (2.75,0.48);

  \draw[fill=cardorange!45,draw=cardorange!85!black]
    (0.38,0.48) rectangle (0.95,0.98);
  \draw[fill=cardorange!25,draw=cardorange!85!black]
    (0.95,0.48) rectangle (1.63,0.98);
  \draw[fill=cardorange!62,draw=cardorange!85!black]
    (1.63,0.48) rectangle (2.32,0.98);

  \node at (0.66,0.74) {$j_1$};
  \node at (1.29,0.74) {$j_2$};
  \node at (1.98,0.74) {$j_3$};

  \draw[red!70!black,thick] (1.82,0.30) -- (1.82,1.22);
  \node[red!70!black] at (1.82,1.43) {$d_j$};

  \node at (1.55,0.16) {$\sum w_jT_j$};
}%
}

\newcommand{\polyicon}{%
\sameiconbox{cardpurple}{%
  \begin{scope}[shift={(0.48,0.25)},scale=0.42]
    \foreach \x in {0,...,4}{
      \draw[gray!45,line width=0.35pt] (\x,0) -- (\x,3);
    }
    \foreach \y in {0,...,3}{
      \draw[gray!45,line width=0.35pt] (0,\y) -- (4,\y);
    }

    \fill[cardpurple!58] (0,0) rectangle (1,1);
    \fill[cardpurple!58] (1,0) rectangle (2,1);
    \fill[cardpurple!58] (1,1) rectangle (2,2);

    \fill[cardpurple!30] (2,1) rectangle (3,2);
    \fill[cardpurple!30] (3,1) rectangle (4,2);
    \fill[cardpurple!30] (3,2) rectangle (4,3);

    \fill[red!65] (0.32,2.32) circle (0.13);
    \fill[red!65] (2.32,0.32) circle (0.13);
    \fill[black!65] (0,1) rectangle (1,2);
  \end{scope}

  \node at (2.43,0.25) {$K$ pieces};
}%
}

\title{Step-by-Step Optimization-like Reasoning in LLMs over Expanding Search Spaces}

\author{%
  Nicolas Astorga\thanks{Correspondence: \texttt{nja46@cam.ac.uk}} \\
  University of Cambridge
  \And
  Nabeel Seedat \\
  University of Cambridge
  \And
  Mihaela van der Schaar \\
  University of Cambridge
}

\begin{document}

\maketitle

\begin{abstract}
Verifiable reward training has improved mathematical and coding reasoning, but these domains capture only part of step-by-step decision making. Many real-world tasks require finding a high-value feasible plan among many valid alternatives. We introduce \OPTSTAR{}, a scalable family of optimization-style tasks for training and evaluating LLM step-by-step optimization-like reasoning along a complexity axis: each task provides a feasibility checker and evaluator, while a complexity parameter expands the search space without requiring new human labels. This motivates studying these tasks in two regimes: (i) solver-guided online policy optimization, which uses a solver as a value oracle for partial states and applies rank-based reward shaping to reinforce better next steps, and (ii) search-based offline RL when such solvers are unavailable. Theoretically, we relate success in large search spaces to the information a reasoner extracts per unit of search budget. Empirically, we ablate the ingredients that make search efficient on \OPTSTAR{} and show that training on \OPTSTAR{} improves step-by-step optimization-like reasoning.
\end{abstract}

\section{Introduction}

Large language models (LLMs) have shown remarkable success on diverse reasoning tasks \cite{wei2022cot, zhou2022least}, but they remain brittle on constrained step-by-step decision-making tasks. In such problems, a locally plausible step may make the remaining problem infeasible or force a low-quality completion \cite{lake2017buildingthinklikepeople, yao2023react}. This issue is especially visible in optimization-style tasks: there are often many feasible answers, but the goal is to construct a better answer under constraints, not just any feasible completion.

A common way to improve reasoning is \emph{verifiable-reward training}: given a question and an automatically checkable answer, the model samples candidate traces and reinforces those that lead to correct answers \citep{zelikman2022star,yuan2023scaling,singh2023beyond,chen2023teachingselfdebug,zelikman2024quietstar,hosseini2024v-star,li2023symbolicdistillcot}. This recipe is especially effective for math and code, where correctness can often be verified through exact answers and many established benchmarks exist. However, scaling the idea to develop other forms of reasoning faces two obstacles. First, it is difficult to generate many tasks that reliably require nontrivial reasoning. Second, even when such tasks are available, obtaining supervision that identifies which intermediate steps are useful is often expensive.

\paragraph{Optimization as a source of step-by-step supervision.}
We study constrained optimization tasks as a scalable source for developing optimization-like reasoning. Optimization problems naturally contain the ingredients that verifiable training needs: many instances can be generated automatically; partial actions can often be checked for feasibility; complete solutions can be scored by an objective; and difficulty can be increased by enlarging the decision space. They also differ from standard answer-correctness settings. In many optimization problems, there are many feasible solutions, but only some are high value. Thus, the model must learn not only to produce a valid solution, but also to choose intermediate steps that lead toward a better final outcome.
\begin{figure}[t]
  \centering
  \includegraphics[width=1.\textwidth]{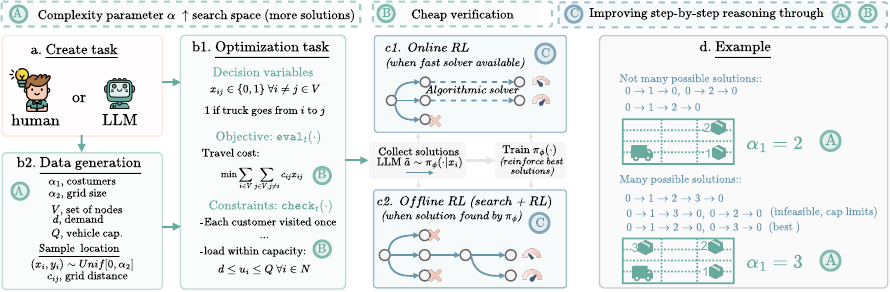}
  \vspace{-1.5em}
  \caption{\footnotesize \textbf{\OPTSTAR{} overview using a Traveling Salesman-style example.} \textbf{(a)} Tasks can be instantiated by humans, programs, or LLM-assisted generators. \textbf{(b1)} Classical optimization problems naturally expose objectives and constraints, making them auto-verifiable. \textbf{(b2)} Difficulty can be scaled by changing a complexity parameter $\alpha$, such as the number of customers or the structure of the instance. Increasing $\alpha$ enlarges the search space while feasibility and objective evaluation remain cheap. \textbf{(c)} We study two training recipes: solver-guided next-step supervision when a solver is available, and search-based discovery when only feasibility checks and terminal objective values are available. \textbf{(d)} A model action can correspond to fixing one or more structured decision variables.}
  \label{fig:main-fig}
  \vspace{-0.4cm}
\end{figure}

This perspective is useful both scientifically and practically. Scientifically, optimization tasks provide controlled environments for studying step-by-step decision making under constraints. Practically, many real workflows require optimization-like reasoning: assigning people to roles, scheduling jobs, routing deliveries, packing objects, selecting subsets under a budget, or coordinating multi-step plans. These tasks require models to balance hard constraints against soft preferences, local gains against future flexibility, and feasibility against objective value.

\paragraph{\OPTSTAR{} tasks.}
We formalize this setting as \OPTSTAR{}: \emph{\OPTSTARfull}. An \OPTSTAR{} task is generated by a procedure $\mathrm{Build}_\alpha(\cdot)$, where $\alpha$ controls task complexity. Each instance exposes two inexpensive verification routines: a feasibility checker $\texttt{chk}_t(\cdot)$ for partial actions and an outcome evaluator $\texttt{eval}_t(\cdot)$ for complete solutions. Increasing $\alpha$ enlarges the number of possible solutions or feasible trajectories, but does not require new human labels. This decouples reasoning difficulty from annotation effort.

\begin{graybox}[leftmargin=0pt, rightmargin=0pt, innerleftmargin=5.5pt, innerrightmargin=5.5pt, skipbelow=0pt]
\textbf{Key properties of \OPTSTAR{} tasks.} (See  Traveling Salesman Problem example in Figure \ref{fig:main-fig})

\textbf{1. Scalable generation.}
Task instances are sampled from a generator controlled by a complexity parameter $\alpha$. Increasing $\alpha$ increases the size or structure of the search space (e.g., \# of customers).

\textbf{2. Cheap verification.}
A feasibility checker $\texttt{chk}_t(\cdot)$ (e.g., enforcing load limits) validates partial actions, and an outcome evaluator $\texttt{eval}_t(\cdot)$ (e.g., Euclidean distance ) scores complete solutions. These routines are much cheaper than searching for an optimal solution.

\textbf{3. Step-level training signal.}
When a solver is available, it can score candidate next steps by their best possible completion. When no fast solver is available, search can still discover high-value trajectories using $\texttt{chk}_t(\cdot)$ and $\texttt{eval}_t(\cdot)$.
\end{graybox}
\vspace{-.5cm}

\paragraph{Research objective:} We study optimization-like reasoning over expanding search spaces as $\alpha$ grows, focusing on how performance is affected by search-space size and how to improve across different regimes. $\blacktriangleright$ \textbf{Empirically:} we use the \OPTSTAR{} structure in two complementary settings. In the large-search-space regime, a fast solver is unavailable or too expensive to call at every state. We therefore use structure-aware search to identify high-value complete trajectories, then distill them into the model. Feasibility checks prune invalid actions, while duplicate-action merging avoids wasting search budget on distinct text outputs that correspond to the same structured move. In the small-search-space regime, a solver is available. We use it as a value oracle for partial states: candidate next steps are scored by the best completion reachable after taking that step, and the policy is updated with rank-shaped rewards. $\blacktriangleright$ \textbf{Theoretically:} we analyze optimization-like reasoning capability and how its components are affected as the search space expands.

\textbf{Why learn reasoning when solvers exist?}
Even when solvers can verify or solve some generated optimization problems, learning a policy is valuable: it distills reusable optimization heuristics into general-purpose models and can be applied when exact solvers are unavailable, too slow to call repeatedly, or not integrated into the deployment environment.
\vspace{-0.25cm}

{\footnotesize
\begin{takeaway}[Contributions]
\textbf{\textcircled{\raisebox{-0.9pt}{1}} \; (Conceptual)}
In \S\ref{sec:vast}, we formalize \OPTSTAR{} as a family of \emph{auto-verifiable, scalable} optimization tasks, specifying components for task generation, constraint checking, and outcome evaluation. In \S\ref{sec:examples-vast}, we discuss what fits in \OPTSTAR{}, highlighting task diversity and the reasoning challenge when $\alpha$ increases.

\vspace{2pt}

\textbf{\textcircled{\raisebox{-0.9pt}{2}} \; (Algorithmic)}
In \S\ref{subsec:better-reasoning}, we identify the challenges of step-by-step reasoning on \OPTSTAR{}. Based on this, we develop complementary offline RL (\S\ref{sec:offline-rl}) and online RL (\S\ref{sec:online-rl}) procedures tailored to \OPTSTAR{}.

\vspace{4pt}

\textbf{\textcircled{\raisebox{-0.9pt}{3}} \; (Empirical)}
In Exp.~\ref{task:opte}--\ref{task:optf}, we show that \OPTSTAR{} improves step-by-step reasoning in optimization tasks. In offline RL, we examine how feasibility pruning via $\texttt{chk}_t(\cdot)$, grouping equivalent actions, and reward shaping based on $\texttt{eval}_t(\cdot)$ guide search toward promising reasoning traces. In online RL, Exp.~\ref{task:opta}--\ref{task:optc} show solver-guided improvements, transfer to related tasks, and curriculum effects as $\alpha$ increases.
\end{takeaway}
}

\vspace{-.15cm}

\section{\OPTSTARfull$~$(\OPTSTAR{})}
\label{sec:vast}

\paragraph{Preliminaries.}
A task instance is a finite- or countable-horizon constrained decision process
\begin{equation}
\label{eq:vast}
t=(S,A,P,s_0,\tau, \texttt{chk}_t,\texttt{eval}_t),
\end{equation}
with state space $S$, action space $A$, transition $P(\cdot\mid s,a)$, initial state $s_0$, and
terminal predicate $\tau:S\!\to\!\{0,1\}$. We write the terminal states as $\mathrm{Term}_t:=\{s\in S:\tau(s)=1\}$.

\begin{graybox}[leftmargin=0pt, rightmargin=0pt, innerleftmargin=5.5pt, innerrightmargin=5.5pt, skipbelow=0pt]
\begin{definition}[Task feasibility and admissibility]\label{def:feas}
An action $a$ is \emph{admissible} at state $s$ for a task $t$ if $\texttt{chk}_t(s,a)=1$. We define the set of admissible actions as $\mathcal{A}_t(s) := \{a \in A \mid \texttt{chk}_t(s,a)=1\}$. A trajectory is \emph{feasible} if it only uses admissible actions (from $\mathcal{A}_t(s)$).
\end{definition}

\begin{definition}[Outcome evaluator]\label{def:evaluator}
An \emph{outcome evaluator} $\texttt{eval}_t$ is a function that assigns a real-valued score to any terminal state:
{\setlength{\abovedisplayskip}{3pt}
 \setlength{\belowdisplayskip}{3pt}
\[
\texttt{eval}_t:\mathrm{Term}_t\longrightarrow\mathbb{R}
\]}
Without loss of generality, we assume this score is to be maximized.
\end{definition}

\begin{definition}[Auto-verifiable task]\label{def:auto-verifiable}
A task $t$ is \emph{auto-verifiable} if there exist algorithms
$\texttt{chk}_t$ and $\texttt{eval}_t$ such that, for all state-action pairs $(s,a)$ and terminal states $s_T \in \mathrm{Term}_t$,
\(\texttt{chk}_t(s,a)\) and \(\texttt{eval}_t(s_T)\) can be computed in time polynomial in the input size. Thus, checking feasibility and evaluating a proposed solution are fast, even when finding a high-scoring solution may be hard.
\end{definition}
\end{graybox}

\begin{definition}[\OPTSTARfull{}]\label{def:scalable}
For each complexity level $\alpha$, let $\Theta_\alpha$ be an instance-parameter space and let $\mathcal{D}_\alpha$ be a polynomial-time sampleable distribution over $\Theta_\alpha$. Let $\mathrm{Build}_\alpha:\Theta_\alpha\to\mathcal{T}$ construct an auto-verifiable task:
\[
t=\mathrm{Build}_\alpha(\theta)=(S,A,P,s_0,\tau,\texttt{chk}_t,\texttt{eval}_t),
\qquad \theta\sim\mathcal{D}_\alpha.
\]
We write $t\sim\mathcal{T}_\alpha$ for the induced task distribution. The family $\{\mathcal{T}_\alpha\}$ is \emph{scalable} if there exist an unbounded nondecreasing function $g$ and a small $\epsilon$ such that, with probability at least $1-\epsilon$ over $t\sim\mathcal{T}_\alpha$, (i) a chosen search-complexity measure $M(t)$ satisfies $M(t)\geq g(\alpha)$, and (ii) at least one feasible terminal state is reachable from $s_0$. Typical choices of $M(t)$ include the logarithm of the number of feasible trajectories or the logarithm of the number of reachable terminal states.
\end{definition}

This definition captures the main design goal: as $\alpha$ increases, the model faces a larger reasoning problem, but the task remains automatically checkable and comparable through $\texttt{chk}_t$ and $\texttt{eval}_t$.

\subsection{Why \OPTSTAR{}?}
\label{sec:examples-vast}

\textbf{Motivation.} As LLMs are used in more complex workflows, they will need skills beyond standard mathematical derivations and code generation.
Optimization under constraints includes resource allocation, scheduling, routing, packing, subset selection, and multi-step coordination.
In these settings, there is rarely a single correct answer.
Instead, many completions are feasible, and the value of reasoning comes from consistently choosing better trade-offs while satisfying hard constraints.

\textbf{Synthetic tasks.} Any task family satisfying the auto-verifiability and scalability conditions in \S\ref{sec:vast} can instantiate \OPTSTAR{}. Such tasks can be designed to target specific reasoning skills---for example, grid-based navigation with obstacles for spatial planning, discrete resource-allocation puzzles for combinatorial trade-offs, or rule-satisfaction games where agents must construct sequences that obey logical constraints (see App.~\ref{sec:task_examples}). By exposing $\texttt{chk}_t$ and $\texttt{eval}_t$ and scaling a complexity parameter $\alpha$ (e.g., grid size or horizon), these simulators yield large, auto-verifiable curricula. In this paper, we focus on synthetic tasks motivated mainly from traditional optimization problems.

\textbf{Traditional optimization tasks.} Mathematical optimization models are natural \OPTSTAR{} tasks: decision variables $\bm{x}$ are selected to maximize an objective $f(\bm{x})$ subject to equality constraints $h_j(\bm{x}) = 0$ and inequality constraints $g_i(\bm{x}) \geq 0$. Their explicit formulations enable direct feasibility checks and objective evaluations, satisfying \OPTSTAR{} conditions. The task-solving complexity $M(t)$ is decoupled from the complexity parameter $\alpha$: in the Traveling Salesman Problem (TSP, Fig.~\ref{fig:main-fig}), for example, increasing the number of cities/nodes ($\alpha$) dramatically enlarges the set of possible routes ($M(t)$), while route validity and length remain cheap to verify and compute. Here, a single action may fix one or more decision variables, i.e., choose a component or subset of $\bm{x}$.
\begin{figure}[t]
  \centering

  \scriptsize
  \setlength{\tabcolsep}{4pt}
  \renewcommand{\arraystretch}{0.95}
  \resizebox{\textwidth}{!}{%
    \begin{tabular}{p{1.8cm} p{3.0cm} p{2.0cm} p{5.3cm} p{1.9cm}}  \hline
  \textbf{Task family} & \textbf{Scales in $\alpha$} & \textbf{Search space $M(t)$} & \textbf{Objective \& sample hard rule} & \textbf{Reasoning} \\
  \hline
  \textbf{Role assignment}
  & roles $n$; extra cands; conflict
  & $n!$ perms
  & Max total fit; e.g. forbidden pairs or $\le 1$ per group.
  & comb. matching \\[1pt]

  \textbf{Task scheduling}
  & jobs $n$; precedence density
  & $n!$ job orders
  & Min total weighted tardiness; e.g. release dates
  & temporal planning \\[1pt]

  \textbf{Max satisfiability}
  & vars $n$; clauses $m$; clause mix
  & $2^n$ assignments
  & Max satisfied clause weight; e.g. budget $\sum_i x_i \le B$.
  & logical reasoning \\[1pt]

  \textbf{TSP}
  & cities $n$
  & $\approx (n{-}1)!/2$ tours
  & Shortest tour visiting all cities; e.g. time windows.
  & route planning \\[1pt]

  \textbf{2D packing}
  & $n$ pieces; bin size; rotations?
  & super-exponential
  & Feasible non-overlap packing (or max filled area).
  & spatial reasoning \\[1pt]

  \textbf{QAP}
  & facilities $n$; flow sparsity
  & $n!$ assignments
  & Min flow$\times$distance; e.g. cluster/capacity limits.
  & comb. optimization \\
  \hline
  \end{tabular}
  }

  \vspace{1mm}

  \includegraphics[width=\textwidth]{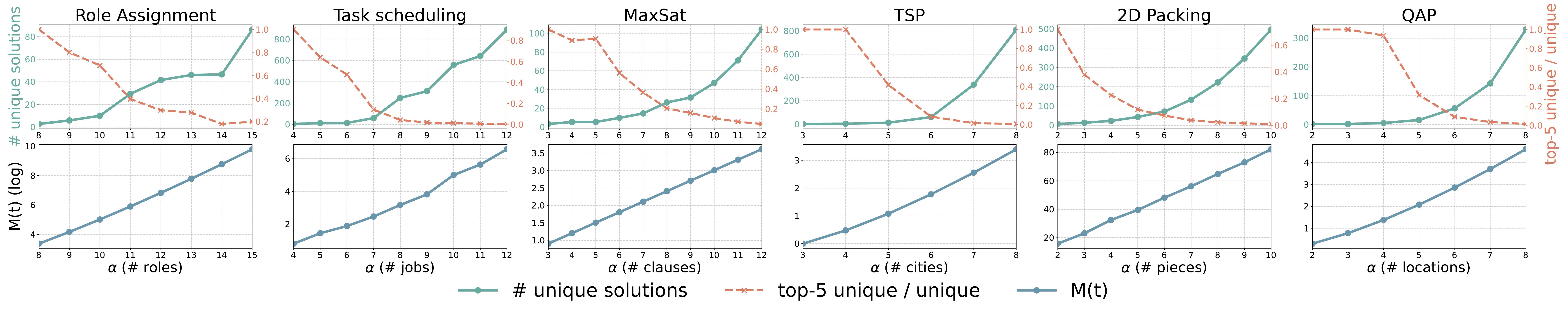}

  \vspace{-3mm}

  \caption{\textbf{Traditional optimization tasks.} \textbf{Top}: Task description detailing objectives, constraints, required reasoning, and scaling behavior with $\alpha$. \textbf{Bottom}: Solver experimental results averaged over eight runs. Plots illustrate search space growth, the number of unique solutions, and the fraction representing the top-5 solutions as a function of complexity parameter $\alpha$.}
  \label{fig:vast-at-a-glance-panel}
  \vspace{-0.25cm}
\end{figure}

\textbf{Complexity and diversity.}
Different optimization families elicit different reasoning skills. Routing problems require spatial planning and global consistency. Assignment and matching problems require trade-offs under capacity, preference, and conflict constraints. Scheduling problems require temporal reasoning and coordination. These tasks can also be enriched with additional structure, such as time windows, precedence relations, fairness requirements, or logical clauses, so that the same base family can elicit different forms of reasoning.

\textbf{Examples.} Figure~\ref{fig:vast-at-a-glance-panel} illustrates classical optimization problems and the reasoning capabilities required to solve them. As the complexity parameter $\alpha$ increases, both the search space $M(t)$ (blue line) and the number of unique solutions (teal line) grow, making the \textit{best solution} (orange line) increasingly difficult to find and thereby necessitating more advanced reasoning strategies or heuristics if these tasks were solved by a reasoning model.

\subsection{Reasoning on \OPTSTAR{} Tasks}
\label{sec:reasoning-vast}

\paragraph{From environment dynamics to language reasoning.}
\OPTSTAR{} tasks are defined as Eq. \ref{eq:vast}. However, reasoning is performed by an LLM in natural language. To enable this, we expose a language interface that converts states into prompts and parses model outputs back into actions.

\paragraph{Language interface.}
Let \(\Sigma\) be the token alphabet and let \(\Sigma^\ast\) denote the set of finite token strings. Let \(\mathcal{X}\subseteq\Sigma^\ast\) denote the set of well-formed prompt states containing the task description, constraints, and relevant history. Let \(\tilde{\mathcal{A}}\subseteq\Sigma^\ast\) denote model outputs containing a reasoning trace and a parsable action. The interface consists of a promptization map \(\psi_{\alpha}:\mathcal{T}_\alpha\times S\to\mathcal{X}\), implemented using a template family \(\Psi_{\alpha}\), and a deterministic parser \(\rho:\tilde{\mathcal{A}}\to A^{\leq K}\) that extracts up to \(K\) structured actions from a model output \(\tilde{a}_i \sim \pi_{\phi}(\cdot \mid x_i)\). In short:
\[
(t,s)\xrightarrow{\ \psi_{\alpha}\ } x\in\mathcal{X}
\ \xrightarrow{\ \mathrm{LLM}\ }\ \tilde{a}\in\tilde{\mathcal{A}}
\ \xrightarrow{\ \rho\ }\ a\in A^{\leq K}.
\]
The prompt \(x_i=\psi_{\alpha}(t,s_i)\) includes: (i) a natural-language task description, (ii) the constraints that must be respected (e.g., admissible actions), and (iii) any auxiliary instructions. When \(K=1\) the model acts step by step; for \(K>1\) it may emit a plan in one shot. Importantly, stating constraints in the prompt does not guarantee compliance, which motivates the method in the next section.

\textbf{State trajectories. } State updates follow the environment dynamics $s_{i+1}\sim P(\cdot | s_i, a_i)$, with current state $s_i$ and action $a_i =  \rho(\tilde{a}_i)$. We then reuse $\psi_{\alpha}$ to construct the next instruction prompt, i.e., $x_{i+1} = \psi_{\alpha}(t, s_{i+1})$. The language-level evolution mirrors the environment-level trajectory:
\begin{equation}
\label{eq:prompt-state-coupling}
 x_0 \xrightarrow{\tilde{a}_0} x_1 \xrightarrow{\tilde{a}_1} \cdots \xrightarrow{\tilde{a}_{N-1}} x_N \quad\implies\quad s_0 \xrightarrow{a_0} s_1 \xrightarrow{a_1} \cdots \xrightarrow{a_{N-1}} s_T.
\end{equation}
where $s_T\in\mathrm{Term}_t$ is terminal. For simplicity, in what follows we assume that $P$ is deterministic.

\vspace{-.125cm}
\begin{takeaway}[Theorem~\ref{thm:complexity_tradeoff}: scaling difficulty without scaling annotation cost]
\setlength{\abovedisplayskip}{1.7pt}
\setlength{\belowdisplayskip}{1.7pt}

Let \(Z^\star=Z^\star(T)\) be the canonical optimum of
\(T\sim\mathcal T_\alpha\), and suppose \(Z^\star\) is spread over
\(N_\alpha\) effective optima. For any reasoner \(R\), let \(H_j\) be
its history after budget \(j\), and let \(Y_j\) be its output. Then
\[
\Pr\!\left[Y_j=Z^\star\right]
\le
\left (I(Z^\star;H_j)+\log 2\right) / \log N_\alpha.
\]
Thus, as \(\alpha\) increases and the effective number of optima grows,
a constant success rate requires either more budget or more information
about \(Z^\star\) per unit of budget.

This bottleneck can be measured through the empirical effective branching
factor. For an \(\epsilon\)-good terminal set
\[
\mathcal G_\epsilon(t)
=
\{s_T:\texttt{eval}_t(s_T)\ge V_t^\star-\epsilon\},
\]
let \(p_{\mathrm{hit},R}(t)\) be the probability that a budget-\(B\)
search procedure \(R\) discovers some terminal state in
\(\mathcal G_\epsilon(t)\). Define
\[
p_{\epsilon,R}(t)
=
1-\left(1-p_{\mathrm{hit},R}(t)\right)^{1/B},
\qquad
b_{\mathrm{eff},R}(t)
=
1/p_{\epsilon,R}(t).
\]
Lower \(b_{\mathrm{eff}}\) means that the same search budget places more
effective mass on high-value terminal states. Therefore, \OPTSTAR{} makes
the theorem testable: useful model capacity or better search should reduce
\(b_{\mathrm{eff}}\) as the task complexity grows.
\end{takeaway}

\vspace{-0.125cm}

\subsection{What constitutes ``better'' step-by-step reasoning?}
\label{subsec:better-reasoning}

Let $\mathrm{Term}_t(s) \subseteq \mathrm{Term}_t$ be the set of terminal states reachable from $s$ by feasible trajectories with $\mathrm{chk}_t(\cdot,a) = 1$. For deterministic $P$, the optimal terminal value from any partial state
$s$ is given by \(V_t^\star(s):=\max_{s_T \in \mathrm{Term}_t(s)}\mathrm{eval}_t(s_T)\). In general, we want to reinforce responses \(\tilde{a}\) sampled from the model conditioned on prompt \(x_{i}\) (representing \(s_i\))
that lead to a higher downstream value:
\vspace{-.125cm}
\begin{equation}
\label{eq:oracle-step}
\tilde{a}_{i}^{\star} =
\underset{\tilde{a}_{i}\in\tilde{\mathcal{A}}_t(s_i)}{\arg\max}
\; V_t^\star\Bigl(P(s_i,\rho(\tilde{a}_{i}))\Bigr)
\end{equation}
\vspace{-.125cm}
where $\tilde{\mathcal{A}}_t(s_i) := \{\tilde{a}_{i,g}\}_{g=1}^G$ with $\tilde{a}_{i,g} \sim \pi(\cdot \mid x_i )$. Eq.~\ref{eq:oracle-step} raises some practical challenges:
\vspace{-.155cm}
\begin{itemize}[leftmargin=*]
    \item \textbf{[P1]} We note that an action $\tilde{a}_i\sim \pi(\cdot\mid x_i)$ does not necessarily imply that $\texttt{chk}_t(s_i, a_i = \rho(\tilde{a}_i)) = 1$, i.e., responses from the LLM can be invalid, failing to respect the dynamics of the problem.
    \item \textbf{[P2]} Finding the best solution from a partial state $s_i$, i.e., computing $V_t^\star(s_i)$, requires computing $\max_{s_T \in \mathrm{Term}_t(s)}$ over potentially large space. If these policies are LLM-based, the responses can be redundant, i.e., $\tilde{a}_{i,g_1} \neq \tilde{a}_{i, g_2}$ but $\rho(\tilde{a}_{i,g_1}) = \rho(\tilde{a}_{i, g_2})$.
\end{itemize}

\vspace{-.15cm}
\section{Offline and Online Reinforcement Learning with \OPTSTAR{}}
\label{sec:methods}
\vspace{-.125cm}

In this section we instantiate the framework from \S\ref{sec:vast} with practical algorithms that optimize Eq.~\ref{eq:oracle-step}. The idea is to turn the inexpensive terminal-only signal from the outcome evaluator $\mathrm{eval}_t(\cdot)$ into informative intermediate feedback that rewards partial decisions and steers subsequent actions. $\blacktriangleright$ \textbf{Developed techniques:} We explore two \textbf{\emph{complementary}} modalities to improve step-by-step reasoning capabilities: offline and online RL. We use offline RL in the general case where a fast solver is unavailable (or too slow), so we rely on the model policy itself to generate many candidate complete solutions that are compared post hoc using $\texttt{eval}_t(\cdot)$, and we reinforce the best-achieved trajectories. We use online RL when we have access to a solver that can compute, or tightly approximate within a fixed budget, the best completion value from a partial state, enabling fast feedback about which next action is best from a given intermediate state. \textbf{Why offline and online RL?} Offline training is broadly applicable because it only requires ranking completed solutions, whereas online methods can yield stronger improvements via on-policy updates when fast solver-based verification is available~\cite{lanchantin2025bridginggapofflineonline}. Our offline procedure is closest to search-guided rejection sampling and SFT on evaluator-selected trajectories. We use the term “offline RL” in a broad sense: trajectories are selected using programmatic rewards from $\texttt{eval}_t$, and training is performed on an offline-sampled set of trajectories.

\subsection{Offline RL: Search and bootstrapping top solutions with \OPTSTAR{}}
\label{sec:offline-rl}
\vspace{-0.125cm}

\textbf{Training}. For offline RL, we simplify Eq.~\ref{eq:oracle-step} by estimating \(V_t^\star(s_0) := \max_{s_T \in \mathrm{Term}_t(s_0)} \mathrm{eval}_t(s_T)\) only at the root state \(s_0\), turning learning into a search problem over trajectories from the root. After search, we \emph{reinforce the best-achieved trajectory} for each problem by supervised fine-tuning on each \((x_i, \tilde{a}_i)\) along the trajectory attaining the highest \(\mathrm{eval}_t(\cdot)\), using a rejection-sampling style objective. Other training objectives (e.g., DPO) are also compatible. This procedure distills search-time discoveries into \(\pi_\phi\), improving search performance at inference~\cite{silver2016mastering, silver2017mastering} and implicitly maximizing Eq.~\ref{eq:oracle-step}, while avoiding expensive data collection from all nodes in the search tree.

\textbf{Search}. To improve step-by-step reasoning, we approximate \(V_t^\star(s) := \max_{s_T \in \mathrm{Term}_t(s)} \mathrm{eval}_t(s_T)\) using a finite set of LLM-generated solutions. In principle, any search method could be used; here we study MCTS and beam search, adapted to \OPTSTAR{} as \(\nu\)MCTS and \(\nu\)BeamSearch. We modify their procedures to better balance exploration and exploitation, focusing on child creation to address \textbf{[P1--2]}. \textbf{[C1]} After expanding a node, we first prune any child with \(\texttt{chk}_t(s,a) = 0\) (addressing \textbf{[P1]}).  \textbf{[C2]} We then address \textbf{[P2]} by \emph{grouping} children with different textual responses \(\tilde{a}_{i,g_1} \neq \tilde{a}_{i,g_2}\) that map to the same structured action \(\rho(\tilde{a}_{i,g_1}) = \rho(\tilde{a}_{i,g_2})\), retaining a single response chosen uniformly at random. Note that \textbf{[C1--2]} are \textbf{\emph{plug-and-play}} methods that can be applied to any search method. We implement two MCTS variants and one beam-search variant, with details in App.~\ref{app:numcts-details}; relative to standard implementations, the main difference is that rewards are recomputed as new terminal scores are discovered. In App.~\ref{app:search-efficiency-theorems}, Theorem~\ref{thm:p1p2_smarter} explains why these components improve search.

\vspace{-0.125cm}
\subsection{Online RL: \OPTSTAR{} Solver-Guided Policy Optimization ($\nu$PO)}
\label{sec:online-rl}
\vspace{-0.125cm}

\begin{wrapfigure}{r}{0.46\textwidth}
\vspace{-1.2\intextsep}
\begin{minipage}{0.44\textwidth}
\captionsetup{type=algorithm,font=scriptsize,   aboveskip=1pt,
  belowskip=0pt}

{\scriptsize
\setlength{\parskip}{0pt}
\renewcommand{\baselinestretch}{0.88}\selectfont

\hrule
\vspace{1pt}
\caption{\textsc{\OPTSTAR{}-}$\nu$\textsc{PO}: solver-guided PO}
\label{alg:nupo}
\vspace{1pt}
\hrule
\vspace{-2pt}

\begin{algorithmic}[1]
\STATE \textbf{Input:} base $b\!\in\!\{\text{GRPO},\text{GSPO},\text{PPO}\}$, group size $G$, rewards $(r_{\mathrm{cmin}},r_{\min},r_{\max})$
\STATE Collect dataset $\mathcal{D}$
\FOR{$x\in\mathcal{D}$}
  \STATE Sample $\tilde a_{1:G}\!\sim\!\pi_{\phi_{\mathrm{old}}}(\cdot\mid x)$
  \STATE Parse actions $a_{1:G}$ and states $x'_{1:G}$
  \FOR{$j=1$ \TO $G$}
    \STATE $R_j\!\leftarrow\! V_B(x'_j)$
    \IF{$\mathrm{infeasible}(x,a_j)$}
      \STATE $R_j\!\leftarrow\! r_{\mathrm{cmin}}$
    \ENDIF
  \ENDFOR
  \STATE Sort $R^\downarrow$; let $k(j)$ be the rank of $R_j$
  \STATE $\tau_k \!\leftarrow\! r_{\max}-(k-1)\dfrac{r_{\max}-r_{\min}}{\max(G-1,1)}$
  \STATE $\tilde R_j\!\leftarrow\!\tau_{k(j)}$
  \STATE Update $\phi$ with $b$ on $\{(\tilde a_j,\tilde R_j)\}_{j=1}^G$
\ENDFOR
\end{algorithmic}

\vspace{1pt}
\hrule
}
\end{minipage}
\vspace{-1.2\intextsep}
\end{wrapfigure}

We propose \textbf{solver-based online RL}, which exploits optimization structure to convert hard-to-verify actions into verifiable subproblems. Our general recipe, $\nu$PO, couples \OPTSTAR{}'s feasibility checks and \textbf{solver} with a base policy-gradient optimizer. For each prompt state $x_i$, $\nu$PO samples a small set of candidate actions, evaluates each by the best completion value reachable from its next state under a fixed solver budget, and updates the policy using any surrogate objective, such as GRPO \citep{shao2024deepseekmath}, GSPO \citep{zheng2025gspo}, or PPO \citep{schulman2017proximal}. We denote each instance by prefixing $\nu$ to the base method, e.g., $\nu$GRPO.

\textbf{Collecting dataset $\mathcal{D}$.}
Let $t \sim \mathcal{T}_{\alpha}$ be a task instance. We construct a training set $\mathcal{D}$ of intermediate prompt states by invoking the solver once to obtain a high-value solution and collecting the partial states along those trajectories. \textbf{Addressing reward shaping.}
For each $x_i \in \mathcal{D}$, sample $G$ continuations from
$\pi_{\phi_{\mathrm{old}}}(\cdot \mid x_i)$. Parsed actions $a$ yield
next states $x_{i+1}$. Invalid transitions receive penalty
$r_{\mathrm{cmin}}$; valid ones are scored as
$R \leftarrow V_B(x_{i+1})$, where \(V_B\) is the best completion value found by the solver under budget \(B\). The returns
$\mathcal{R}_i=\{R^{(j)}\}_{j=1}^G$ are converted via a rank-based
linspace transform into per-candidate targets for online policy
optimization.

\vspace{-0.125cm}
\section{Related work}
\vspace{-0.125cm}
Recent work has made rapid progress on reasoning with verifiable feedback, including methods that optimize with verifiable rewards \cite{yu2025dapo,  shao2024deepseekmath}, automatically verifiable domains supported by compilers/solvers \cite{chen2025enigmata, wong2025logicpuzzlerl, wei2025satbench, zhu2025autologi, AlphaGeometryTrinh2024, DeepMind2024AlphaProof, patel2025getCHASE, chen2021evaluatingcodex, hendrycks2021apps, austin2021programsynthesis, li2022alphacode}, benchmarks with automatic problem generation and verification \cite{stojanovski2025GYMreasoning, li2025internbootcamp, saxton2019mathematicsdataset, hendrycks2021mathdataset}, and improved search strategies for reasoning \cite{yao2023tree,feng2023alphazero, xie2024monte, chen2024alphamathzerops, chen-etal-2024-step, luo2024improveomegaprm, zhang2024mctsrest, zhang2024accessing, guan2025rstar, besta2024graphofthoughts, kocsis2006uct, coulom2006efficient, browne2012survey}. Our work contributes in three ways: 1) we identify the conditions under which optimization-like tasks admit automatic generation and verification, yielding a principled testbed for improving step-by-step reasoning; this theoretical perspective complements but differs from recently proposed benchmarks \citep{albalak2025bigmath,stojanovski2025GYMreasoning}; 2) empirically, we focus on learning structured, incremental solution construction in these optimization-like tasks, whereas many existing benchmarks primarily evaluate direct sampling of complete solutions rather than exploiting intermediate structure to search efficiently; 3) algorithmically, we introduce offline-RL techniques that can be applied on top of any search method to improve efficiency in structured domains, and for online RL we propose a novel solver-guided elicitation to improve reasoning.
\begin{table}[t]
\centering
\caption{Visual overview of \OPTSTAR{} task families studied.}
\label{tab:task-families-visual}
\vspace{-0.10cm}

\begingroup
\footnotesize
\setlength{\tabcolsep}{1.2pt}
\renewcommand{\arraystretch}{0.95}

\newcommand{\taskcell}[1]{%
  \begin{minipage}[t]{0.245\linewidth}
  \centering
  #1
  \end{minipage}%
}

\begin{tabular}{@{}cccc@{}}
\toprule

\taskcell{%
\taskcard
  {\roleicon}
  {Role assignment\\+ conflicts}
  {$R$; extras $E$}
  {1 candidate/role; 1 role/candidate; conflicts}
  {fit $-$ conflicts}
}
&
\taskcell{%
\taskcard
  {\maxsaticon}
  {Constrained\\MaxSAT}
  {tasks/workers; resources}
  {select tasks; assign workers; hard clauses}
  {soft weight; tie-breaks}
}
&
\taskcell{%
\taskcard
  {\schedicon}
  {Single-machine\\scheduling}
  {$n$ jobs; proc. range}
  {no-idle, non-preemptive, one machine}
  {$\min\sum_j w_jT_j$}
}
&
\taskcell{%
\taskcard
  {\polyicon}
  {Polyomino\\target cover}
  {grid; budget $K$}
  {place/rotate $\le K$ pieces; no overlap}
  {covered targets}
}

\\
\bottomrule
\end{tabular}

\endgroup
\vspace{-0.45cm}
\end{table}

\vspace{-.75em}
\section{Experiments}
\label{sec:experiments}
\vspace{-.75em}

We study how expanding the search space, modulated by the complexity parameter $\alpha_i$, affects the model’s task-solving ability. Algorithmically, we investigate mitigation strategies via offline and online RL. For research purposes, we study offline RL in a regime where a solver can compute the optimal solution, allowing us to analyze: (1) search components \textbf{[C1-2]} that improve search, and (2) whether solver-based RL is consistently better. For online RL, we focus only on (2). We also study related questions, such as generalization and curriculum, in less depth. We organize the research as:\vspace{-.125cm}
\begingroup
\noindent
{\footnotesize
\setlength{\tabcolsep}{6pt}
\renewcommand{\arraystretch}{1.15}

\rowcolors{2}{lightTeal}{white}

\makebox[\linewidth][c]{%
\resizebox{0.925\linewidth}{!}{%
\begin{tabularx}{\linewidth}{@{}>{\bfseries}c l X@{}}
\rowcolor{myTeal}
\textcolor{white}{RL} &
\textcolor{white}{Topic} &
\textcolor{white}{Question} \\
\midrule

Offline &
Search &
\textit{Can structure enable efficient optimization search?}~\ref{task:opte} \\

Offline &
Solver signal &
\textit{Can correct search match solver-guided step improvements?}~\ref{task:optf} \\

Online &
Task design &
\textit{Should optimization reasoning be trained on optimization tasks?}~\ref{task:opta} \\

Online &
Generalization &
\textit{Can decomposable spatial tasks capture diverse optimization skills?}~\ref{task:optb} \\

Online &
Curriculum &
\textit{How do different $\alpha_i$ sequences affect learning?}~\ref{task:optc} \\

\bottomrule
\end{tabularx}
}%
}

}
\endgroup

\textbf{Data generation for training.} For each task, in both Offline and Online RL, we generate 1,000 training instances for each $\alpha_i$ across four complexity levels $\alpha_1, \dots, \alpha_4$, where higher indices indicate greater reasoning difficulty. Both methods use curriculum training, but their data collection strategies differ. In \textit{Online RL}, a solver collects trajectories $[s_0, \dots, s_T]$ that lead to the optimal response; we build the training set by sequentially concatenating all states visited by the solver for all instances at level $\alpha_i$, then proceeding to $\alpha_{i+1}$ without shuffling, so that difficulty increases over time. Conversely, \textit{Offline RL} operates in large batches, identifying the best trajectory via search from the root using $\texttt{eval}_t(\cdot)$ and applying SFT to all states and instances of $\alpha_i$ before proceeding to $\alpha_{i+1}$. Although there are always 1,000 unique instances, collecting all states along the best trajectory yields a final training dataset of 1,000 to 5,000 samples per $\alpha_i$, depending on the task and complexity.

\textbf{Instantiation details. }In the main text, as illustration, we instantiate Offline RL with $\nu$MCTS and Online RL with $\nu$GRPO. The techniques are general and can be applied to other search methods or on-policy algorithms for Offline and Online RL, respectively. Therefore, our objective is not to benchmark all possible instantiations, but to demonstrate the benefits of the general training recipes proposed in \S\ref{sec:offline-rl} and \S\ref{sec:online-rl}. For Online RL, we extend \texttt{verl} \citep{sheng2025hybridflow} with ranked reward shaping using 8 rollouts. For $\nu$MCTS, we use a custom implementation with 16 rollouts and 20 children per expansion. We report results for three instruction-tuned models: Qwen2.5-3B\citep{qwen25technicalreport}, Llama-3.2-3B\citep{meta2024llama32modelcard}, and Qwen2.5-7B. Appendix includes prompt examples, $\alpha_i$ configurations, and data-generation details.

\vspace{-.15cm}
\subsection{Offline RL on \OPTSTAR{}}
\vspace{-.15cm}

For these experiments we use Llama-3.2-3B-Instruct and evaluate on classical optimization problems. We report aggregate search diagnostics in Table~\ref{tab:offline-search-discovery}, highlight Knapsack in Fig.~\ref{fig:optimization}, and provide additional results in App.~\ref{sec:appendix:results-mcts}.

\refstepcounter{researchQ}
\textbf{\theresearchQ$~$Exp 1: Efficient search on \OPTSTAR{}.} \label{task:opte}$\blacktriangleright$ \textbf{Motivation.} \textbf{(A)} We first study the no-training regime, testing \textbf{[C1-2]}. This isolates the search \textbf{\emph{plug-and-play}} components that generate offline trajectories before training. \textbf{(B)} Second, we validate our theoretical finding in Sec.~\ref{thm:p1p2_smarter}. $\blacktriangleright$\textbf{Setup.} We evaluate $\nu$MCTS and $\nu$BeamSearch (plus one method in App. \ref{app:numcts-details}), with/without \textbf{[C1-2]}. A DFS solver serves as reference. Table~\ref{tab:offline-search-discovery} averages over Role Assig., MaxSat, Knapsack, and QAP. Metrics are the good-terminal mass $p_g$, effective branching factor $b_{\mathrm{eff}}$, estimated samples for $90\%$ success $k_{90}$, pass@16, terminal feasibility, and exact optimality.

\textbf{Results. }Table~\ref{tab:offline-search-discovery} shows that these two mechanisms are the main source of search efficiency. With both enabled, $\nu$MCTS has $b_{\rm eff}=15.04$ and $\nu$BeamSearch has $b_{\rm eff}=13.04$; removing either the checker or duplicate merging increases the effective branching factor and sharply reduces terminal feasibility and exact optimality. This directly matches the information bottleneck in \S\ref{sec:reasoning-vast}: as the terminal space expands with $\alpha$, useful model capacity appears as lower $b_{\rm eff}$, i.e., more search mass assigned to high-value terminals under the same budget.

The solver reference illustrates the tradeoff: it is faster in wall-clock time when available, but as a generator assigns little mass to $\epsilon$-good natural-language trajectories, with pass@16 below $20\%$ versus $88.9\%$ for $\nu$MCTS and $93.8\%$ for $\nu$BeamSearch. QAP is clearest: $\nu$MCTS achieves $b_{\rm eff}=24.27$ and pass@16 $=85.1\%$, while the solver reference has $b_{\rm eff}=727.9$ and pass@16 $=13.3\%$. Thus, offline RL is useful when solver labels are unavailable or costly at partial states, whereas online RL uses the solver to rank candidate actions, not generate trajectories. \textbf{Obs.} This depends on solver cost: solvers may be much more efficient than LLMs, making LLMs useful mainly when solvers are unavailable and full enumeration DFS-style methods whose cost grows sharply with complexity.

\refstepcounter{researchQ}
\theresearchQ~\textbf{Exp 2: Improving models.} \label{task:optf}
$\blacktriangleright$\textbf{Motivation.} We test whether \textbf{(A)} training on the best trajectory found by the model with \textbf{[C1-2]} improves over the base model and \textbf{(B)} solver-assisted training is consistently better.
$\blacktriangleright$\textbf{Setup.} For knapsack (task details in App.~\ref{app:knapsack}), we report: (a) the cumulative number of distinct solution groups found over 16 MCTS rollouts on the test set for each model, and (b) the percentage of samples recovering the best solution. Since this is an optimization problem, multiple valid solutions may exist.
$\blacktriangleright$\textbf{Results.} Figure~\ref{fig:optimization} shows difficulty increases with complexity. At higher complexity, $\nu$MCTS and solver-based MCTS substantially outperform the base model, especially at $\alpha_4$, where they find more \emph{high-value solutions}. Solver-based policy performs best, but $\nu$MCTS is close, suggesting that components \textbf{[C1-2]} largely recover the strongest model-improvement gains.

\begin{table}[t]
\centering
\vspace{-0.55em}
\caption{Search-only discovery metrics for offline RL data generation on \OPTSTAR{} ($\epsilon_{\mathrm{rel}}=0.05$). Arrows show direction of improvement; $p_g$, pass@16, Feas., and Exact are percentages.}
\label{tab:offline-search-discovery}
\scriptsize
\setlength{\tabcolsep}{2.65pt}
\renewcommand{\arraystretch}{0.70}
\begin{adjustbox}{max width=\linewidth}
\begin{tabular}{@{}lcccccc|cccccc@{}}
\toprule
& \multicolumn{6}{c|}{$\nu$MCTS} & \multicolumn{6}{c}{$\nu$BeamSearch} \\
\cmidrule(lr){2-7}\cmidrule(l){8-13}
Method
& $p_g\uparrow$ & $b_{\mathrm{eff}}\downarrow$ & $k_{90}\downarrow$ & pass@16$\uparrow$ & Feas.$\uparrow$ & Exact$\uparrow$
& $p_g\uparrow$ & $b_{\mathrm{eff}}\downarrow$ & $k_{90}\downarrow$ & pass@16$\uparrow$ & Feas.$\uparrow$ & Exact$\uparrow$ \\
\midrule
check+de-duplication
& 12.2 & 15.04 & 56.5 & 88.9 & 72.8 & 10.9
& 7.2 & 13.04 & 43.0 & 93.8 & 53.3 & 7.8 \\
no check/prune
& 4.0 & 25.70 & 73.8 & 84.8 & 16.4 & 3.7
& 1.4 & 53.07 & 180.7 & 65.4 & 10.2 & 2.0 \\
no deduplication
& 5.1 & 25.26 & 74.3 & 84.6 & 19.8 & 4.8
& 2.5 & 82.88 & 329.9 & 50.9 & 12.8 & 2.7 \\
sequential (outcome-based)
& 4.9 & 25.30 & 74.2 & 84.6 & 19.8 & 4.6
& 4.9 & 25.30 & 74.2 & 84.6 & 19.8 & 4.6 \\
solver ref.
& 0.2 & 520.1 & 1776 & 18.7 & 2.3 & 0.5
& 0.2 & 499.4 & 1743 & 19.4 & 2.5 & 0.6 \\
\bottomrule
\end{tabular}
\end{adjustbox}
\vspace{-1.25em}
\end{table}
\begin{figure}[t]
  \centering
  \includegraphics[width=1.\textwidth]{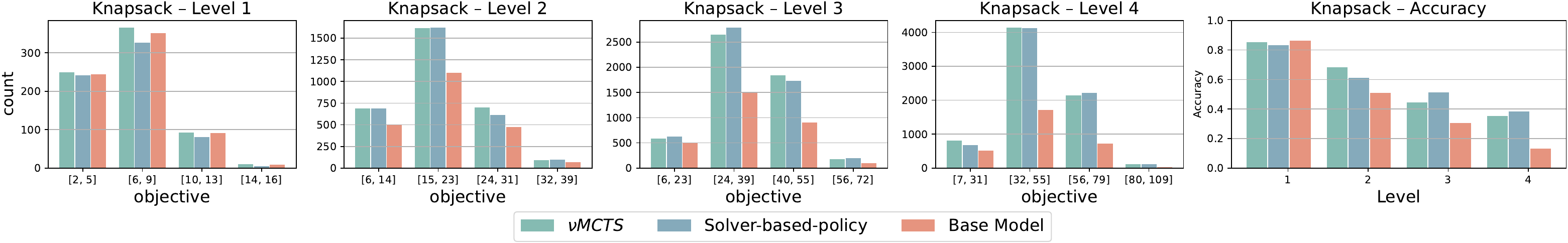}
  \vspace{-6mm}
  \caption{\footnotesize \textbf{Cols 1-4:} Distribution of solution quality (histograms). \textbf{Col 5:} Accuracy on Knapsack problems across four $\alpha_i$. The $\nu$MCTS method (orange) approaches solver-guided performance (green), particularly at lower complexities. This suggests that verification-guided self-improvement can recover much of the gain from solver-guided supervision.}
  \label{fig:optimization}
  \vspace{-1.5em}
\end{figure}

\vspace{-.125cm}
\subsection{Online RL with \OPTSTAR{}}
\label{sec:q1}
\vspace{-.125cm}

\textbf{Metrics.} At test time we evaluate every state on the path to the solver's best trajectory. A prediction is correct if the LLM's action \(\tilde{a}\) is a valid step (under $\texttt{chk}_t$) that leads to the best solution. We then report pass@\(k\) under this metric.

\begin{wraptable}[11]{r}{0.55\textwidth}
\vspace{-1.0em}
\centering
\scriptsize
\setlength{\tabcolsep}{2.0pt}
\renewcommand{\arraystretch}{1.06}
\resizebox{\linewidth}{!}{%
\begin{tabular}{ll *{4}{c} *{4}{c} *{4}{c}}
\toprule
& & \multicolumn{4}{c}{\textbf{Role Assignment (ID)}}
& \multicolumn{4}{c}{\textbf{MaxSAT (OOD)}}
& \multicolumn{4}{c}{\textbf{Scheduling (OOD)}} \\
\cmidrule(lr){3-6}
\cmidrule(lr){7-10}
\cmidrule(lr){11-14}
\textbf{Method} & \textbf{Type}
& $\alpha_1$ & $\alpha_2$ & $\alpha_3$ & $\alpha_4$
& $\alpha_1$ & $\alpha_2$ & $\alpha_3$ & $\alpha_4$
& $\alpha_1$ & $\alpha_2$ & $\alpha_3$ & $\alpha_4$ \\
\midrule
Base     & Base
& 32.5 & 25.9 & 20.7 & 17.8
& 79.9 & 51.6 & 52.1 & 45.1
& 41.6 & 28.7 & 25.0 & 24.8 \\

\cmidrule(lr){1-14}

MATH-500 & Domain
& 33.6 & 27.7 & 20.3 & 17.4
& 80.9 & 52.1 & 51.7 & 43.6
& 42.2 & 29.1 & 24.8 & 25.6 \\
Code     & Domain
& 33.7 & 25.3 & 21.0 & 17.9
& 80.5 & 55.7 & 53.8 & 44.6
& 42.4 & 28.7 & 24.5 & 25.3 \\

\cmidrule(lr){1-14}

Trained  & Ours
& \textbf{44.5} & \textbf{40.0} & \textbf{35.2} & \textbf{31.1}
& \textbf{91.2} & \textbf{67.4} & \textbf{68.3} & \textbf{62.9}
& \textbf{47.9} & \textbf{32.5} & \textbf{27.8} & \textbf{28.3} \\
\bottomrule
\end{tabular}%
}
\caption{Pass@1 (\%) on optimization benchmarks with Qwen2.5-3B-Instruct. The Type column distinguishes the base model, domain-transfer comparisons, alternative methodologies, and our trained model. Best results are bolded.}
\label{tab:qwen25-3b-pass1-optimization}
\vspace{-1.0em}
\end{wraptable}

\refstepcounter{researchQ}
\theresearchQ~\textbf{Exp 3: Solver-based RL.} \label{task:opta} $\blacktriangleright$\textbf{Motivation. }We test whether online RL over partial actions is possible with \textbf{solver-based RL}. Because $\nu$PO uses solver-derived partial-state values, We make our comparison mainly against other domains also trained with online RL verifiable reward training. $\blacktriangleright$\textbf{Setup. }(Domain comparison) For math, we use MATH-500, a 500-problem subset of MATH used in process-supervision evaluation \citep{hendrycks2021mathdataset,lightman2024letsverify}; for code, we use MBPP \citep{austin2021programsynthesis}. We evaluate in and out of distribution: we train and test on Role Assignment, and test out of distribution on MaxSat and Machine Scheduling. We select these tasks for their similarity while ensuring they are distinct. Table \ref{tab:qwen25-3b-pass1-optimization} reports results for Qwen2.5-3B-Inst. $\blacktriangleright$\textbf{Results. }The results shows that  training on other domains is not helpful, and consequently to develop optimization-like reasoning, we need to train on optimization tasks. We also note that training on Role Assignment enables models to generalize to similar optimization tasks, even on out-of-distribution.

\refstepcounter{researchQ}
\theresearchQ~\textbf{Exp 4: Generalization (spatial tasks).} \label{task:optb}$\blacktriangleright$\textbf{Motivation.} Optimization tasks can involve heterogeneous constraints, and solving them can require a broad set of skills. We demonstrate this using a 2D polyomino task (Table~\ref{tab:task-families-visual}), where models must select, rotate, and translate pieces to cover target cells. $\blacktriangleright$\textbf{Setup.} Qwen2.5-7B was the only model with sufficient capacity to sample rewardable solutions and bootstrap RL in this task. We evaluate acquired capabilities on (1) the held-out polyomino test set and (2) auxiliary spatial QA tasks covering translation, rotation, reflection, symmetry, and coverage. These auxiliary tasks are QA pairs rather than optimization problems, with two difficulty levels based on grid size (details in App.~\ref{sec:appendix:tasks}). $\blacktriangleright$\textbf{Results.} Fig.~\ref{fig:polyomino} shows that the curriculum improves both in-distribution polyomino performance and out-of-distribution spatial QA, suggesting transfer to the underlying spatial operations.

\textbf{Generalization (Math).} Training on Role Assig.+MaxSAT improves pass@8 on math benchmarks \citep{MAA_AMC_2023,MAA_AIME_2024,MAA_AIME_2025} for most 3B models: Llama 3.2-3B rises 45.0$\to$57.5 on AMC\textquotesingle{}23, 10.0$\to$16.7 on AIME\textquotesingle{}24, and 3.3$\to$6.7 on AIME\textquotesingle{}25; Qwen2.5-3B rises 65.0$\to$70.0 on AMC\textquotesingle{}23 and 6.7$\to$16.7 on AIME\textquotesingle{}25. See App.~\ref{app:math-problems}.

\refstepcounter{researchQ}
\theresearchQ~\textbf{Exp 5: How the $\alpha_i$ curriculum affects learned skills.} \label{task:optc} We study curriculum learning on the polyomino task by varying the complexity parameter $\alpha_i$. Using \OPTSTAR{} to generate data across complexity regimes, we compare three training strategies: easy-only ($\alpha_1$), hard-only ($\alpha_4$), and mixed curriculum training ($\alpha_1 \rightarrow \alpha_4$). Figure~\ref{fig:polyomino} (right) shows three trends. First, hard-only training underperforms, consistent with sparse rewards on hard tasks. Second, easy-only training is strong on $\alpha_1$ and $\alpha_2$ but degrades as difficulty increases. Third, curriculum training gives the best performance on the harder ID levels and the strongest OOD Spatial QA results. These gains indicate that curriculum training improves the underlying spatial skills more reliably than single-level training.

\begin{figure}[t]
    \centering

    \begin{subfigure}[t]{0.49\textwidth}
        \vspace{0pt}
        \centering
        \includegraphics[width=\textwidth]{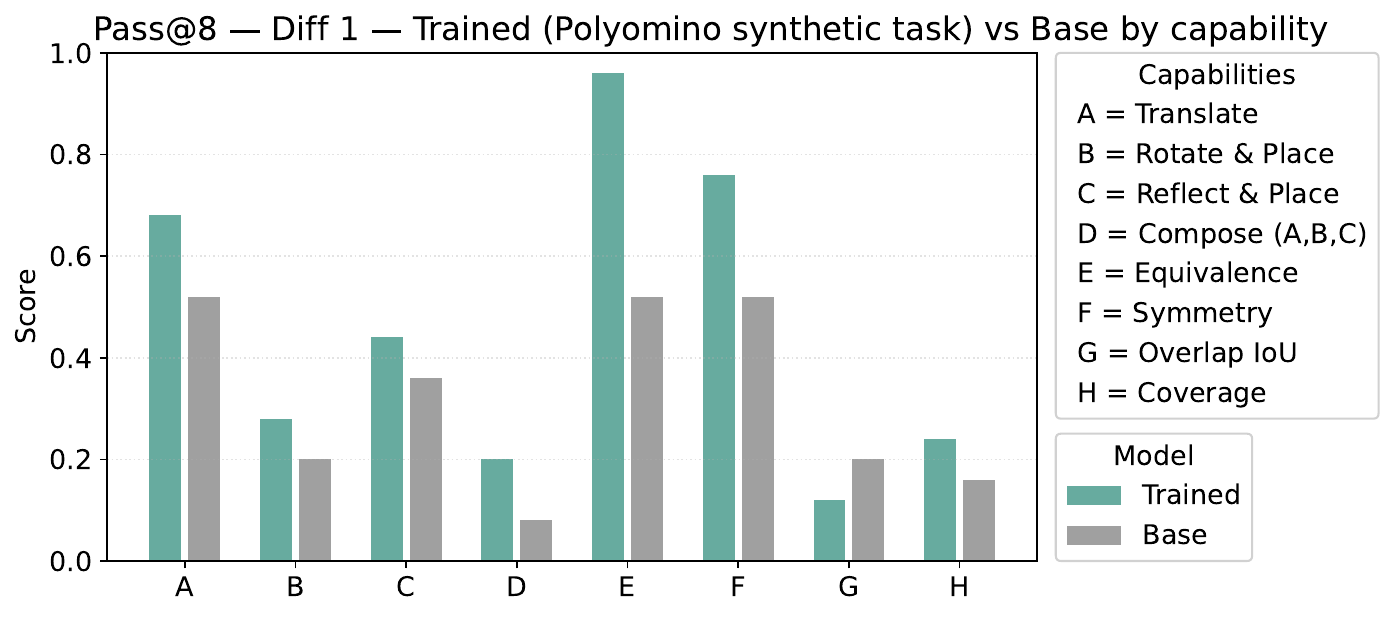}
    \end{subfigure}
    \hfill
    \begin{subfigure}[t]{0.49\textwidth}
        \vspace{0pt}
        \centering
        \scriptsize
        \setlength{\tabcolsep}{3.6pt}
        \renewcommand{\arraystretch}{0.95}
        \resizebox{0.85\textwidth}{!}{%
        \begin{tabular}{llcccc}
            \toprule
            & & \multicolumn{4}{c}{Training set} \\
            \cmidrule(lr){3-6}
            Evaluation task
            & Dist.
            & Base
            & \shortstack{$\alpha_1$--$\alpha_4$\\1k each}
            & \shortstack{$\alpha_1$ only\\4k}
            & \shortstack{$\alpha_4$ only\\4k} \\
            \midrule
            Polyomino $\alpha_1$ & ID  & 4.4 & 10.0 & \textbf{32.4} & 4.0 \\
            Polyomino $\alpha_2$ & ID  & 2.4 & 22.8 & \textbf{38.4} & 2.4 \\
            Polyomino $\alpha_3$ & ID  & 5.3 & \textbf{22.5} & 17.1 & 3.6 \\
            Polyomino $\alpha_4$ & ID  & 1.2 & \textbf{5.2}  & 2.5  & 0.4 \\
            \midrule
            Spatial QA (diff 1) & OOD & 32.0 & \textbf{46.0} & 28.0 & 29.5 \\
            Spatial QA (diff 2) & OOD & 24.5 & \textbf{34.0} & 31.0 & 27.5 \\
            \bottomrule
        \end{tabular}
        }
    \end{subfigure}

    \vspace{-0.25cm}
    \caption{\footnotesize \textbf{2D Spatial task.} Many skills can be learned by solving optimization problems. \textbf{Left:} Solving polyomino tasks improves the capabilities required for the task. \textbf{Right:} Pass@$8$ accuracy (\%) on polyomino and spatial capability tasks when training sets are generated using different strategies over $\alpha_i$. The mixed $\alpha_1$--$\alpha_4$ set contains 4k examples total, with 1k per difficulty; the $\alpha_1$-only and $\alpha_4$-only sets each contain 4k examples. Bold indicates the best result for each evaluation task.}
    \label{fig:polyomino}
    \vspace{-0.45cm}
\end{figure}

\vspace{-.125cm}
\section{Conclusion}
\vspace{-.125cm}

We propose \OPTSTAR{} to elicit optimization-like reasoning. Our main goal is to understand how enlarging the search space affects such reasoning (Theorem~\ref{thm:complexity_tradeoff}) and how training can compensate for the increased difficulty of finding high-value solutions. Motivated by this view, we introduce two \textbf{\emph{complementary}} methods: \textbf{(1) \emph{plug-and-play} search components [C1--2]}, applicable to any search method and shown empirically and theoretically to improve search efficiency; and \textbf{(2) solver-based RL}, which exploits fast solvers that provide rewards for partial actions. As expected, this near-oracle signal outperforms weaker transfer and self-rewarding references. We also briefly explore curricula enabled by unlimited task data and generalization; both remain promising directions for future work.

\textbf{Limitations and future work.} This work studies, both empirically and theoretically, how expanding the search space of optimization-like problems affects optimization-like reasoning. A promising direction is to \textbf{\emph{scale optimization reasoning}} tasks using LLM priors. Our setting is well suited to this because problem difficulty can be increased while evaluation remains simple. LLMs could automate the generation of substantially more complex and diverse tasks than those in our experiments. More broadly, although not our main focus, our results suggest that optimization tasks offer a useful testbed for studying generalization. Scaling both task generation and task complexity could therefore enable much broader studies of optimization-based generalization beyond the scope of this work.

\clearpage
\appendix
\onecolumn

\clearpage
\phantomsection
\section*{Appendix Index}
\label{app:index}
\addcontentsline{toc}{section}{Appendix Index}

\begin{tcolorbox}[
  enhanced,
  breakable,
  colback=teal!3,
  colframe=teal!95!black,
  title={\textbf{Quick guide to the appendix}},
  fonttitle=\bfseries,
  coltitle=black,
  arc=1.5mm,
  boxrule=0.6pt,
  left=2mm,
  right=2mm,
  top=1mm,
  bottom=1mm
]
\small
\setlength{\tabcolsep}{4pt}
\renewcommand{\arraystretch}{1.08}

\newcommand{\appidx}[3]{%
  \hyperref[#1]{\textbf{#2}} &
  \hyperref[#1]{#3}\\[2pt]
}
\newcommand{\appidxsub}[3]{%
  \hyperref[#1]{\hspace{1em}#2} &
  \hyperref[#1]{#3}\\[1pt]
}

\begin{tabularx}{\textwidth}{@{}p{1.6cm}X@{}}
\toprule
\textbf{Ref.} & \textbf{Content} \\
\midrule
\appidx{app:info_reasoning}{A}{Information-theoretic formalization of reasoning on \OPTSTAR{}}
\appidxsub{app:scalable-generators}{A.1}{Effective optima and scalable complexity}
\appidxsub{app:interactive-inference}{A.2}{Reasoners, histories, and success}
\appidxsub{app:core-theorems}{A.3}{Core information-theoretic results}
\appidxsub{app:adaptive-hit-curves}{A.4}{Effective branching and empirical estimation}
\appidxsub{app:search-efficiency-theorems}{A.5}{Checking and redundancy removal}

\appidx{sec:task_examples}{B}{Task examples}
\appidxsub{appendix:optimization-tasks-concise}{B.1}{Mathematical optimization tasks}
\appidxsub{appendix:logic-puzzles}{B.2}{Logic puzzle tasks}
\appidxsub{appendix:simulator-tasks}{B.3}{Simulator-based tasks}

\appidx{sec:appendix:tasks}{C}{Additional details on the generation process}
\appidxsub{app:knapsack}{C.1}{0--1 Knapsack}
\appidxsub{app:qap-cfeu}{C.2}{Quadratic Assignment / CF--EU Manhattan}
\appidxsub{app:role-assignment-conflicts}{C.3}{Role Assignment with Conflicts}
\appidxsub{app:maxsat}{C.4}{MaxSat}
\appidxsub{app:single-machine-scheduling}{C.5}{Single-machine scheduling}
\appidxsub{app:polyomino-target-cover}{C.6}{Polyomino Target Cover}
\appidxsub{app:spatial-2d-tasks}{C.7}{Auxiliary Grid-Spatial tasks}

\appidx{app:prompts}{D}{Prompt examples}
\appidx{app:polyomino-examples}{E}{Polyomino examples}

\appidx{sec:models-tasks}{F}{Additional details and extra results}
\appidxsub{app:numcts-details}{F.1}{Search ablations and additional details of $\nu$MCTS}
\appidxsub{sec:policy-training-implementation}{F.2}{Policy training implementation for RFT}
\appidxsub{sec:rl-training-implementation}{F.3}{Policy training GRPO}

\appidx{sec:online-rl-results}{G}{Online RL results}
\appidxsub{sec:spatial-reasoning-results}{G.1}{Spatial reasoning results}
\appidxsub{app:math-problems}{G.2}{Math benchmark results}

\appidx{app:optimization-results}{H}{Optimization problem results}
\appidxsub{app:qwen-25-3b-results}{H.1}{Qwen2.5-3B results}
\appidxsub{app:qwen-25-7b-results}{H.2}{Qwen2.5-7B results}
\appidxsub{app:llama-32-3b-results}{H.3}{Llama3.2-3B results}
\appidxsub{sec:appendix:results-mcts}{H.4}{Additional MCTS results}
\bottomrule
\end{tabularx}
\end{tcolorbox}

\clearpage

\section{Information-theoretic formalization of reasoning on \OPTSTAR{}}
\label{app:info_reasoning}

\paragraph{Scope and notation.}
All logarithms are natural, so information is measured in nats.
We use the \OPTSTAR{} task definition from the main paper and add only
the probabilistic notation needed for the theory. For a task \(t\), let
\(\mathrm{Term}_t(s)\) denote the set of terminal states reachable from
state \(s\) by feasible trajectories. We write \(s_0\) for the initial
state and use \(\xi\) for a complete rollout, with terminal state
\(s_T(\xi)\).

The results below are stated for finite effective solution spaces. This
covers the finite combinatorial tasks used in our experiments. Countable
or very large spaces can be handled by truncating the search space or by
grouping terminal states into finitely many effective equivalence classes.

\subsection{Effective optima and scalable complexity}
\label{app:scalable-generators}

For a realized task \(t\), let
\[
\mathcal Z_t := \mathrm{Term}_t(s_0)
\]
be its reachable terminal set, or a finite quotient of that set into
effective solution classes. Fix a deterministic tie-breaking rule
\(\mathrm{tb}(\cdot)\) that selects one element from any nonempty finite set.
The canonical optimal terminal state is
\[
Z^\star(t)
:=
\mathrm{tb}
\left(
\arg\max_{z\in\mathcal Z_t}\texttt{eval}_t(z)
\right),
\qquad
V_t^\star
:=
\texttt{eval}_t(Z^\star(t)).
\]
When \(T\sim\mathcal T_\alpha\), the canonical optimum
\[
Z^\star := Z^\star(T)
\]
is a random variable.

\begin{assumption}[Effective uniform optimum]
\label{ass:effective_uniform}
For each complexity level \(\alpha\) considered in the theorem statements,
there exists a finite set \(\mathcal Z_\alpha^\star\) of size
\(N_\alpha\ge 2\) such that
\[
Z^\star\sim\operatorname{Unif}(\mathcal Z_\alpha^\star).
\]
Equivalently,
\[
H(Z^\star)=\log N_\alpha .
\]
The same assumption can be applied after replacing terminal states by
effective solution classes, for example after deterministic tie-breaking,
symmetry reduction, or \(\epsilon\)-optimal grouping.
\end{assumption}

\subsection{Reasoners, histories, and success}
\label{app:interactive-inference}

A reasoner \(R\) may be an LLM policy, a search procedure using an LLM, or
a hybrid method such as beam search, MCTS, feasibility-pruned search, or
solver-guided search. After budget \(j\), the reasoner has produced a
history
\[
H_j .
\]
This history may contain sampled text actions, parsed structured actions,
checker outputs, duplicate-removal decisions, terminal candidates, terminal
values, and internal search statistics.

Let
\[
\mathcal Z_R(t,j)\subseteq \mathrm{Term}_t(s_0)
\]
be the set of feasible terminal states discovered by \(R\) on task \(t\)
by budget \(j\). The reasoner's output is denoted by \(Y_j\). When at least
one terminal state has been found, we take
\[
Y_j
:=
\mathrm{tb}
\left(
\arg\max_{z\in\mathcal Z_R(t,j)}
\texttt{eval}_t(z)
\right).
\]
When no terminal state has been found, we set \(Y_j=\bot\). In all cases,
\(Y_j\) is measurable with respect to \(H_j\).

For \(\epsilon\ge 0\), define the anytime success curve
\[
S_R(j;\alpha,\epsilon)
:=
\Pr_{T\sim\mathcal T_\alpha,R}
\left[
\exists z\in \mathcal Z_R(T,j):
\texttt{eval}_T(z)\ge V_T^\star-\epsilon
\right].
\]
For exact optimality, take \(\epsilon=0\).

The reasoning information accumulated by budget \(j\) is
\[
\mathcal I_R(j;\alpha)
:=
I(Z^\star;H_j).
\]
If the history is revealed incrementally as
\(H_j=(O_1,\ldots,O_j)\), define the per-step information gain
\[
\mathrm{IG}_i
:=
I(Z^\star;O_i\mid H_{i-1}).
\]
By the chain rule,
\begin{equation}
\label{eq:info_chain}
\mathcal I_R(j;\alpha)
=
\sum_{i=1}^j \mathrm{IG}_i .
\end{equation}

\subsection{Core information-theoretic results}
\label{app:core-theorems}

\begin{lemma}[Fano-style bound for identifying the optimum]
\label{lem:fano_opt}
Assume Assumption~\ref{ass:effective_uniform}. For any estimator \(Y\)
of \(Z^\star\),
\[
\Pr[Y=Z^\star]
\le
\frac{I(Z^\star;Y)+\log 2}{\log N_\alpha}.
\]
Equivalently, if
\[
\Pr[Y=Z^\star]\ge 1-\delta,
\]
then
\[
I(Z^\star;Y)
\ge
(1-\delta)\log N_\alpha-\log 2.
\]
\end{lemma}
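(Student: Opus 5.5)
The approach is standard Fano. Let $P_e:=\Pr[Y\neq Z^\star]$. Since $Y$ may take the value $\bot$ or values outside $\mathcal Z_\alpha^\star$, we first reduce to an estimator valued in $\mathcal Z_\alpha^\star$. Fix any element $z_0\in\mathcal Z_\alpha^\star$. Define $\hat Y := Y$ if $Y\in\mathcal Z_\alpha^\star$, and $\hat Y := z_0$ otherwise.

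This map has two useful properties.
\begin{itemize}
\item It can only increase the probability of success: whenever $Y=Z^\star$, we have $Y\in\mathcal Z_\alpha^\star$, so $\hat Y=Z^\star$. Hence $\Pr[\hat Y=Z^\star]\ge\Pr[Y=Z^\star]$.
\item Since $\hat Y$ is a deterministic function of $Y$, the data-processing inequality gives $I(Z^\star;\hat Y)\le I(Z^\star;Y)$.
\end{itemize}
It therefore suffices to prove the bound for $\hat Y$.

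Next, apply Fano's inequality. With $E=\mathbf 1[\hat Y\neq Z^\star]$, the chain rule gives $H(Z^\star\mid\hat Y)\le H(E)+\Pr[E=1]\log(N_\alpha-1)$. Using $H(E)\le\log 2$ and $\log(N_\alpha-1)\le\log N_\alpha$, this becomes
\[
H(Z^\star\mid \hat Y)\le \log 2+\hat P_e\log N_\alpha .
\]
Assumption~\ref{ass:effective_uniform} gives $H(Z^\star)=\log N_\alpha$. Combining this with $I(Z^\star;\hat Y)=H(Z^\star)-H(Z^\star\mid\hat Y)$ yields $\log N_\alpha - I(Z^\star;\hat Y)\le \log 2+(1-\Pr[\hat Y=Z^\star])\log N_\alpha$. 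Rearranging:
\[
\Pr[\hat Y=Z^\star]\le \frac{I(Z^\star;\hat Y)+\log 2}{\log N_\alpha}.
\]
We divide by $\log N_\alpha$, which is positive because $N_\alpha\ge 2$. The chain $\Pr[Y=Z^\star]\le\Pr[\hat Y=Z^\star]$ and $I(Z^\star;\hat Y)\le I(Z^\star;Y)$ then gives the claim for $Y$.

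The equivalent form is pure algebra: substitute $\Pr[Y=Z^\star]\ge 1-\delta$ into the bound and multiply through by $\log N_\alpha$.

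The only delicate step is the handling of $\bot$ and of values outside the support. Fano's conditional-entropy bound needs the error event to leave at most $N_\alpha-1$ alternatives, which the projection to $\hat Y$ secures. One should also note that $I(Z^\star;Y)$ is well defined: $Y$ is a discrete random variable because it is $H_j$-measurable and takes values in a countable set.
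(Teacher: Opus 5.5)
Your proof is correct and follows the paper's route: Fano's inequality via the error indicator $E$ and the chain rule, combined with $H(Z^\star)=\log N_\alpha$ and rearrangement. The projection to $\hat Y$ is harmless but unnecessary, since the paper bounds $H(Z^\star\mid Y,E)\le P_e\log N_\alpha$ directly. That bound holds for arbitrary $Y$ (including $\bot$) because $Z^\star$ alone is supported on $N_\alpha$ points, and you discard the $\log(N_\alpha-1)$ refinement the projection secures immediately anyway.
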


\begin{proof}
Let
\[
P_e:=\Pr[Y\neq Z^\star].
\]
Since \(Z^\star\) is uniform on \(N_\alpha\) possibilities,
\[
H(Z^\star)=\log N_\alpha .
\]
Let \(E=\mathbf 1\{Y\neq Z^\star\}\). Then
\[
H(Z^\star\mid Y)
\le
H(E\mid Y)+H(Z^\star\mid Y,E)
\le
\log 2 + P_e\log N_\alpha .
\]
Therefore,
\[
I(Z^\star;Y)
=
H(Z^\star)-H(Z^\star\mid Y)
\ge
\log N_\alpha-\log 2-P_e\log N_\alpha .
\]
Since \(1-P_e=\Pr[Y=Z^\star]\), rearranging gives
\[
\Pr[Y=Z^\star]
\le
\frac{I(Z^\star;Y)+\log 2}{\log N_\alpha}.
\]
The second statement follows by rearranging the same inequality.
\end{proof}

\begin{theorem}[Complexity--reasoning tradeoff for \OPTSTAR{}]
\label{thm:complexity_tradeoff}
Assume Assumption~\ref{ass:effective_uniform}. Consider any reasoner \(R\)
that, after budget \(j\), outputs \(Y_j\) measurable with respect to \(H_j\).
Then
\[
\Pr[Y_j=Z^\star]
\le
\frac{\mathcal I_R(j;\alpha)+\log 2}{\log N_\alpha}
=
\frac{\sum_{i=1}^j \mathrm{IG}_i+\log 2}{\log N_\alpha}.
\]
Consequently, maintaining any fixed positive probability of identifying the
canonical optimum as \(N_\alpha\) grows requires
\[
\mathcal I_R(j;\alpha)=\Omega(\log N_\alpha).
\]
Thus, as the effective search space expands, a reasoner must either use more
budget or extract more information per unit of budget.
\end{theorem}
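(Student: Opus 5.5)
The plan is to reduce the theorem to Lemma~\ref{lem:fano_opt} and then use the data-processing inequality together with the chain rule \eqref{eq:info_chain}. I will first fix the estimator $Y:=Y_j$. By hypothesis $Y_j$ is measurable with respect to $H_j$, so $Y_j=f(H_j)$ for some deterministic map $f$. If the reasoner uses internal randomness in producing $Y_j$ from $H_j$, I will fold that randomness into $H_j$, which is allowed since the history may contain internal search statistics. The case $Y_j=\bot$ is handled by enlarging the range of the estimator by one symbol. This does not affect Lemma~\ref{lem:fano_opt}: its proof only uses $H(Z^\star)=\log N_\alpha$ and $H(Z^\star\mid Y,E=1)\le\log N_\alpha$, and both remain valid because $Z^\star$ still takes values in $\mathcal Z_\alpha^\star$.

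Applying Lemma~\ref{lem:fano_opt} gives
\[
\Pr[Y_j=Z^\star]\le \frac{I(Z^\star;Y_j)+\log 2}{\log N_\alpha}.
\]
Since $Z^\star\to H_j\to Y_j$ is a Markov chain ($Y_j$ is a function of $H_j$), the data-processing inequality yields $I(Z^\star;Y_j)\le I(Z^\star;H_j)=\mathcal I_R(j;\alpha)$. Substituting this gives the first inequality. The equality with $\sum_{i=1}^j\mathrm{IG}_i$ is exactly the chain rule \eqref{eq:info_chain}, applied when $H_j=(O_1,\dots,O_j)$.

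For the consequence, I will suppose that $\Pr[Y_j=Z^\star]\ge c>0$ along a sequence of $\alpha$ with $N_\alpha\to\infty$. Rearranging the bound gives $\mathcal I_R(j;\alpha)\ge c\log N_\alpha-\log 2$. For $N_\alpha$ large enough that $\log 2\le \tfrac c2\log N_\alpha$, this is at least $\tfrac c2\log N_\alpha$, so $\mathcal I_R(j;\alpha)=\Omega(\log N_\alpha)$. The final interpretive sentence then follows from the chain rule: if every per-step gain satisfies $\mathrm{IG}_i\le \bar\iota$, then $j\,\bar\iota\ge \sum_i \mathrm{IG}_i=\Omega(\log N_\alpha)$. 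So either the budget $j$ or the average information per step $\bar\iota$ must grow.

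The only step needing care is justifying the data-processing inequality when the reasoner's randomness and the task distribution are coupled. I will address it by conditioning on nothing extra: all quantities are taken under the joint law of $(T,R)$, and $Y_j=f(H_j)$ holds pointwise, so the Markov chain holds without further assumptions. The rest is routine.
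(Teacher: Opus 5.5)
Your proposal is correct and follows the paper's proof: apply the Fano-style Lemma~\ref{lem:fano_opt} to $Y_j$, use measurability to get $I(Z^\star;Y_j)\le I(Z^\star;H_j)$ by the data-processing inequality, invoke the chain rule for the sum form, and rearrange to obtain $\Omega(\log N_\alpha)$. Your additional remarks are sound refinements rather than a different route: handling the $\bot$ output, making the constant explicit in the rearrangement, and the bounded-per-step-gain reading (which the paper states separately as Corollary~\ref{cor:budget_tradeoff}).
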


\begin{proof}
Apply Lemma~\ref{lem:fano_opt} with \(Y=Y_j\). Since \(Y_j\) is measurable
with respect to \(H_j\), the data-processing inequality gives
\[
I(Z^\star;Y_j)
\le
I(Z^\star;H_j)
=
\mathcal I_R(j;\alpha).
\]
Substituting this into Lemma~\ref{lem:fano_opt} yields
\[
\Pr[Y_j=Z^\star]
\le
\frac{\mathcal I_R(j;\alpha)+\log 2}{\log N_\alpha}.
\]
The equality with \(\sum_{i=1}^j\mathrm{IG}_i\) follows from
\eqref{eq:info_chain}. The asymptotic statement follows by rearranging the
bound for any fixed nonzero target success probability.
\end{proof}

\begin{corollary}[Budget lower bound]
\label{cor:budget_tradeoff}
Assume Assumption~\ref{ass:effective_uniform}. Suppose there exists
\(\kappa_\alpha>0\) such that, for all \(i\le j\),
\[
\mathrm{IG}_i
=
I(Z^\star;O_i\mid H_{i-1})
\le
\kappa_\alpha .
\]
Then
\[
\Pr[Y_j=Z^\star]
\le
\frac{j\kappa_\alpha+\log 2}{\log N_\alpha}.
\]
Equivalently, achieving
\[
\Pr[Y_j=Z^\star]\ge 1-\delta
\]
requires
\[
j
\ge
\frac{(1-\delta)\log N_\alpha-\log 2}{\kappa_\alpha},
\]
whenever the numerator is positive.
\end{corollary}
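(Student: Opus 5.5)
The plan is to derive the corollary directly from Theorem~\ref{thm:complexity_tradeoff}, using the chain-rule decomposition \eqref{eq:info_chain} of the accumulated information. All the information-theoretic work (Fano plus data processing) is already done in the theorem and in Lemma~\ref{lem:fano_opt}. What remains is to bound the sum of per-step gains using the uniform cap $\kappa_\alpha$, and then rearrange.

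\textbf{Step 1.} Write
\[
\mathcal I_R(j;\alpha)=I(Z^\star;H_j)=\sum_{i=1}^j \mathrm{IG}_i .
\]
This holds because $H_j=(O_1,\ldots,O_j)$ is revealed incrementally, which is exactly the setting of \eqref{eq:info_chain}.

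\textbf{Step 2.} Apply the hypothesis $\mathrm{IG}_i\le\kappa_\alpha$ for every $i\le j$. This gives $\mathcal I_R(j;\alpha)\le j\kappa_\alpha$.

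\textbf{Step 3.} Substitute this bound into Theorem~\ref{thm:complexity_tradeoff}. Since $Y_j$ is $H_j$-measurable, we get
\[
\Pr[Y_j=Z^\star]\le \frac{j\kappa_\alpha+\log 2}{\log N_\alpha}.
\]
Here $\log N_\alpha>0$ because $N_\alpha\ge 2$ under Assumption~\ref{ass:effective_uniform}, so the bound is well defined.

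\textbf{Step 4.} For the budget form, suppose $\Pr[Y_j=Z^\star]\ge 1-\delta$. Chaining with Step 3 gives
\[
1-\delta\le \frac{j\kappa_\alpha+\log 2}{\log N_\alpha}.
\]
Multiplying through by $\log N_\alpha>0$ yields $(1-\delta)\log N_\alpha-\log 2\le j\kappa_\alpha$. Dividing by $\kappa_\alpha>0$ gives the stated lower bound on $j$. The proviso that the numerator be positive only records when this bound is nontrivial; the inequality itself holds regardless.

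\textbf{Main obstacle.} The only subtle point is Step 1: the chain rule requires that $H_j$ carry no information beyond $(O_1,\ldots,O_j)$. Any internal randomness of $R$ not recorded in the history must either be included in the $O_i$ or be independent of $Z^\star$. Otherwise $I(Z^\star;H_j)$ and $\sum_i \mathrm{IG}_i$ could differ.

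I would handle this in one of two ways. The first is to adopt the paper's convention that $H_j$ is by definition the tuple $(O_1,\ldots,O_j)$, so \eqref{eq:info_chain} applies verbatim. The second is to fold any initial randomness into $O_1$. Either choice preserves Step 3, since $Y_j$ remains a function of $H_j$. No other step needs more than arithmetic.
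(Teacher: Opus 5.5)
Your proposal is correct and follows the paper's proof essentially verbatim: use the chain rule to get $\mathcal I_R(j;\alpha)=\sum_{i\le j}\mathrm{IG}_i\le j\kappa_\alpha$, substitute into Theorem~\ref{thm:complexity_tradeoff}, then rearrange. Your remark about unrecorded internal randomness is a sensible precaution, but the paper's convention $H_j=(O_1,\ldots,O_j)$ already covers it.
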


\begin{proof}
By the chain rule,
\[
\mathcal I_R(j;\alpha)
=
\sum_{i=1}^j \mathrm{IG}_i
\le
j\kappa_\alpha .
\]
Substituting this into Theorem~\ref{thm:complexity_tradeoff} gives the first
claim. The budget lower bound follows by rearranging.
\end{proof}

\paragraph{Approximate optimality.}
The same argument applies to \(\epsilon\)-optimality by replacing
\(Z^\star\) with a finite effective class variable that indexes the relevant
\(\epsilon\)-good solution region. In that case, \(N_\alpha\) should be read
as the number of effective high-value regions the reasoner must distinguish.

\subsection{Effective branching and empirical estimation}
\label{app:adaptive-hit-curves}

Theorem~\ref{thm:complexity_tradeoff} states that a harder effective search
space requires more informative histories. We now connect this to the
empirical quantities used in the experiments.

\begin{definition}[\(\epsilon\)-good terminal set]
\label{def:good-terminal-set}
For a task instance \(t\) and tolerance \(\epsilon\ge 0\), define
\[
\mathcal G_\epsilon(t)
=
\left\{
s_T\in\mathrm{Term}_t(s_0):
\texttt{eval}_t(s_T)\ge V_t^\star-\epsilon
\right\}.
\]
When the exact optimum \(V_t^\star\) is unavailable, it can be replaced by a
reference value obtained from the best solution found under a large search
budget. The resulting quantity should then be interpreted as
reference-good mass rather than true \(\epsilon\)-optimal mass.
\end{definition}

\begin{definition}[Budgeted hit probability]
\label{def:budgeted-hit}
Fix a search budget \(B\). For a possibly adaptive search procedure \(R\),
define
\[
p_{\mathrm{hit},R}(t;B)
=
\Pr_R
\left[
\mathcal Z_R(t,B)\cap\mathcal G_\epsilon(t)\neq\emptyset
\mid t
\right].
\]
Equivalently,
\[
p_{\mathrm{hit},R}(t;B)
=
\Pr_R
\left[
\max_{z\in\mathcal Z_R(t,B)}
\texttt{eval}_t(z)
\ge
V_t^\star-\epsilon
\mid t
\right],
\]
with the convention that the maximum over an empty discovered set is
\(-\infty\).
\end{definition}

\begin{definition}[Good-terminal mass and effective branching]
\label{def:restart-equivalent-mass}
For fixed budget \(B\), define the budget-normalized good-terminal mass
\[
p_{\epsilon,R}(t;B)
=
1-
\left(
1-p_{\mathrm{hit},R}(t;B)
\right)^{1/B}.
\]
The empirical effective branching factor is
\[
b_{\mathrm{eff},R}(t;B)
=
\frac{1}{p_{\epsilon,R}(t;B)} ,
\]
with the convention that \(b_{\mathrm{eff},R}(t;B)=+\infty\) when
\(p_{\epsilon,R}(t;B)=0\).
When \(B\) is fixed by the experiment, we omit \(B\) and write
\(p_{\mathrm{hit},R}(t)\), \(p_{\epsilon,R}(t)\), and
\(b_{\mathrm{eff},R}(t)\).
\end{definition}

\paragraph{Interpretation.}
The quantity \(p_{\epsilon,R}(t;B)\) is a calibration, not an independence
assumption. It asks: what independent per-budget-unit success probability
would yield the same budget-\(B\) hit probability? Thus it can compare plain
rollouts, beam search, MCTS, feasibility-pruned search, and duplicate-aware
search under a common scale.

\begin{proposition}[Independent restarts as a special case]
\label{prop:independent-special-case}
Suppose \(R\) consists of \(B\) independent rollouts, and a single rollout
hits \(\mathcal G_\epsilon(t)\) with probability \(p\). Then
\[
p_{\mathrm{hit},R}(t;B)
=
1-(1-p)^B,
\]
and therefore
\[
p_{\epsilon,R}(t;B)=p.
\]
\end{proposition}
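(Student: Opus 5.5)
The plan is a direct computation from Definitions~\ref{def:budgeted-hit} and~\ref{def:restart-equivalent-mass}. I would first fix the task \(t\) and write the event that \(R\) hits the good set as a union over rollouts. Let \(\xi_1,\ldots,\xi_B\) be the \(B\) independent rollouts. Since \(\mathcal Z_R(t,B)\) consists of the feasible terminal states \(s_T(\xi_b)\) reached by these rollouts, we have
\[
\left\{\mathcal Z_R(t,B)\cap\mathcal G_\epsilon(t)\neq\emptyset\right\}
=\bigcup_{b=1}^{B}\left\{s_T(\xi_b)\in\mathcal G_\epsilon(t)\right\}.
\]
A rollout ending infeasibly, or with no terminal state, simply counts as a miss. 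This is consistent with the hypothesis that a single rollout hits with probability \(p\).

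Next I would pass to the complement. The miss event is the intersection of the \(B\) per-rollout miss events. Conditional on \(t\), these events are independent, each with probability \(1-p\). Hence \(1-p_{\mathrm{hit},R}(t;B)=(1-p)^B\), which gives the first claim.

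For the second claim, I would substitute this into the definition:
\[
p_{\epsilon,R}(t;B)=1-\bigl((1-p)^B\bigr)^{1/B}=1-(1-p)=p,
\]
using that \(1-p\in[0,1]\), so the real \(B\)-th root is taken of a nonnegative number and \(x\mapsto x^{1/B}\) inverts \(x\mapsto x^B\) on \([0,1]\). The edge cases \(p=0\) and \(p=1\) are covered by the same identity. When \(p=0\), the convention \(b_{\mathrm{eff}}=+\infty\) in Definition~\ref{def:restart-equivalent-mass} applies.

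The only step needing care is the first: the set identity must hold under the definition of \(\mathcal Z_R(t,B)\), and the independence must be conditional on \(t\) (the probability in Definition~\ref{def:budgeted-hit} is \(\Pr_R[\cdot\mid t]\)). I would state explicitly that "independent rollouts" means the \(\xi_b\) are i.i.d. given \(t\) and that the output aggregation does not affect which terminals are discovered. Everything else is routine algebra.
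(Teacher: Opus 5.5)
Your proposal is correct and matches the paper's proof: the complement of the hit event is the intersection of $B$ independent misses, which has probability $(1-p)^B$, and substituting into Definition~\ref{def:restart-equivalent-mass} gives $p_{\epsilon,R}(t;B)=p$. The extra care you take is sound; the paper leaves these points implicit. This covers the set identity for $\mathcal Z_R(t,B)$, independence conditional on $t$, and taking the real $B$-th root on $[0,1]$.
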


\begin{proof}
The probability that all \(B\) independent rollouts miss the
\(\epsilon\)-good set is \((1-p)^B\). Taking the complement gives
\(p_{\mathrm{hit},R}(t;B)=1-(1-p)^B\). Substituting into
Definition~\ref{def:restart-equivalent-mass} gives
\(p_{\epsilon,R}(t;B)=p\).
\end{proof}

\begin{proposition}[Hit curves equal anytime success]
\label{prop:adaptive-hit-success-curve}
For any possibly adaptive reasoner \(R\),
\[
S_R(B;\alpha,\epsilon)
=
\mathbb E_{T\sim\mathcal T_\alpha}
\left[
p_{\mathrm{hit},R}(T;B)
\right].
\]
\end{proposition}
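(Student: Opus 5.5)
The identity follows from the tower property of conditional expectation. I will write the success event as an indicator and condition on the realized task. Let
\[
E_B(T):=\left\{\exists z\in\mathcal Z_R(T,B):\ \texttt{eval}_T(z)\ge V_T^\star-\epsilon\right\},
\]
so that $S_R(B;\alpha,\epsilon)=\Pr_{T\sim\mathcal T_\alpha,R}[E_B(T)]=\mathbb E_{T,R}[\mathbf 1_{E_B(T)}]$, where the probability is over the joint law of the task draw and the reasoner's internal randomness (sampling, tie-breaking, search choices). The first step is to note that $E_B(T)$ is literally the event in Definition~\ref{def:budgeted-hit}. The event $\mathcal Z_R(T,B)\cap\mathcal G_\epsilon(T)\neq\emptyset$ holds exactly when some discovered terminal state has value at least $V_T^\star-\epsilon$, because discovered states are by definition feasible reachable terminals in $\mathrm{Term}_T(s_0)$, which is the ground set of $\mathcal G_\epsilon(T)$. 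The same holds for the max-based form, with the $-\infty$ convention covering the case where nothing has been discovered.

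The second step applies the tower property: $\mathbb E_{T,R}[\mathbf 1_{E_B(T)}]=\mathbb E_{T}\bigl[\mathbb E_R[\mathbf 1_{E_B(T)}\mid T]\bigr]=\mathbb E_T[\Pr_R[E_B(T)\mid T]]$. The inner conditional probability is exactly $p_{\mathrm{hit},R}(T;B)$. This requires nothing about adaptivity, because the reasoner's entire random trajectory, however adaptive, is absorbed into the conditional law given $T$. The budget-$B$ history $H_B$, and hence $\mathcal Z_R(T,B)$, is a measurable function of $T$ and the reasoner's randomness.

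The main obstacle is measure-theoretic and mild. One needs a joint probability space on which $T$ and the reasoner's randomness live, and $p_{\mathrm{hit},R}(t;B)$ must be a measurable function of $t$, i.e.\ a regular conditional probability. Because the paper restricts to finite effective solution spaces, and the instance parameters are drawn from $\mathcal D_\alpha$ over $\Theta_\alpha$, I would take the reasoner as a Markov kernel from tasks to histories. Then $t\mapsto\Pr_R[E_B(t)]$ is measurable by the standard kernel property, and Fubini/tower applies to the bounded indicator. With that in place, the displayed identity follows directly.
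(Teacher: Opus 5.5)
Your proposal is correct and follows essentially the same route as the paper: condition on $T=t$, recognize the inner conditional probability as $p_{\mathrm{hit},R}(t;B)$ from Definition~\ref{def:budgeted-hit}, and average over $T\sim\mathcal T_\alpha$. Your extra steps only make explicit what the paper's one-line proof leaves implicit: identifying the success event via $\mathcal Z_R(T,B)\subseteq\mathrm{Term}_T(s_0)$, and checking measurability of $t\mapsto p_{\mathrm{hit},R}(t;B)$ by modeling $R$ as a Markov kernel.
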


\begin{proof}
Condition on \(T=t\). By Definition~\ref{def:budgeted-hit},
\(p_{\mathrm{hit},R}(t;B)\) is exactly the conditional probability that
\(R\) discovers an \(\epsilon\)-good terminal state by budget \(B\).
Taking expectation over \(T\sim\mathcal T_\alpha\) gives the result.
\end{proof}

\paragraph{Empirical estimation.}
For each held-out task instance \(t\), run \(R\) independently \(m\) times
with budget \(B\). Let
\[
c_t
=
\sum_{\ell=1}^m
\mathbf 1
\left[
\mathcal Z_R^{(\ell)}(t,B)\cap\mathcal G_\epsilon(t)\neq\emptyset
\right].
\]
Then
\[
\widehat p_{\mathrm{hit},R}(t;B)=\frac{c_t}{m}.
\]
The corresponding estimates are
\[
\widehat p_{\epsilon,R}(t;B)
=
1-
\left(
1-\widehat p_{\mathrm{hit},R}(t;B)
\right)^{1/B},
\qquad
\widehat b_{\mathrm{eff},R}(t;B)
=
\frac{1}{\widehat p_{\epsilon,R}(t;B)}.
\]
For numerical stability when \(c_t=0\), one may use the smoothed estimate
\[
\widetilde p_{\mathrm{hit},R}(t;B)
=
\frac{c_t+1/2}{m+1}.
\]

\paragraph{Connection to the main theorem.}
Theorem~\ref{thm:complexity_tradeoff} applies to the full history \(H_B\)
of any search procedure, including adaptive methods such as MCTS or beam
search. The empirical hit curve measures whether the search procedure places
enough probability mass on high-value terminal states within the available
budget. Thus, at fixed \(B\), better search or a stronger learned policy
should increase \(p_{\mathrm{hit},R}\), increase \(p_{\epsilon,R}\), and
decrease \(b_{\mathrm{eff},R}\).

\subsection{Checking and redundancy removal}
\label{app:search-efficiency-theorems}

We now isolate the local effect of the two search components used in the
paper: feasibility checking and duplicate structured-action removal.

\begin{theorem}[Checker and redundancy removal increase trace efficiency]
\label{thm:p1p2_smarter}
Consider a fixed state with feasible structured action classes
\[
\mathcal F,\qquad |\mathcal F|=K\ge 2.
\]
Let \(W\) be the structured action class whose continuation is optimal under
a fixed tie-breaking rule, and assume
\[
W\sim\operatorname{Unif}(\mathcal F).
\]
A raw text action \(\tilde A_i\) is parsed into a structured class
\[
C_i=\rho(\tilde A_i).
\]
The test succeeds when \(C_i=W\).

Compare two procedures with \(j\le K\):
\begin{itemize}[leftmargin=*]
\item \textbf{Baseline without P1/P2:} draw \(j\) raw text actions from a
fixed distribution independent of \(W\). The parsed classes may be infeasible
or repeated.
\item \textbf{P1+P2:} use the checker to reject infeasible classes and the
parser to reject repeated feasible classes, until \(j\) distinct feasible
classes have been accepted.
\end{itemize}
Let \(H_j\) denote the resulting transcript. For the baseline, let
\[
U_j
=
\left|
\{C_1,\ldots,C_j\}\cap\mathcal F
\right|
\]
be the number of distinct feasible classes tested. Then
\[
\Pr[\text{baseline hits } W]
=
\mathbb E\!\left[\frac{U_j}{K}\right]
\le
\frac{j}{K}
=
\Pr[\text{P1+P2 hits } W],
\]
and
\[
I(W;H_j^{\mathrm{P1+P2}})
\ge
I(W;H_j^{\mathrm{base}}).
\]
The success inequality is strict whenever infeasible or repeated classes
make \(\mathbb E[U_j]<j\). The information inequality is strict whenever
the baseline leaves at least two feasible classes untested with positive
probability.
\end{theorem}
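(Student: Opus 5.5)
Both halves of the statement follow from the fact that $W$ is uniform on $\mathcal F$ and independent of which classes are tested. We first compute the hit probabilities and then compare the mutual informations.

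\textbf{Success probabilities.} Write $S^{\mathrm{base}}=\{C_1,\ldots,C_j\}\cap\mathcal F$ for the set of distinct feasible classes the baseline tests, so that $|S^{\mathrm{base}}|=U_j$. The baseline hits $W$ exactly when $W\in S^{\mathrm{base}}$. The raw draws come from a fixed distribution independent of $W$, so $S^{\mathrm{base}}\perp W$. Conditioning on $S^{\mathrm{base}}$ and using uniformity gives $\Pr[W\in S^{\mathrm{base}}\mid S^{\mathrm{base}}]=U_j/K$, and taking expectations yields $\mathbb E[U_j/K]$. Since $U_j\le j$ deterministically, this is at most $j/K$. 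It is strictly smaller exactly when $\mathbb E[U_j]<j$, that is, when infeasible or repeated draws occur with positive probability.

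For P1+P2, we model the accepted set $S^{\mathrm{P}}$ as generated without access to $W$. The checker depends only on the state, and the parser's duplicate test depends only on previously accepted classes, so $S^{\mathrm{P}}\perp W$. This is the one modeling point that must be stated explicitly: the checker reveals feasibility, not optimality. Because $|S^{\mathrm{P}}|=j\le K$ surely, the same conditioning argument gives a hit probability of exactly $j/K$.

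\textbf{Information.} We need to specify what the transcript reveals. The natural reading is that $H_j$ records, for each accepted class, the outcome of the test $\mathbf 1\{C_i=W\}$, together with the raw text, parsed classes and checker outputs, all of which are independent of $W$. Conditional on the tested set $S$, the only information about $W$ is the revealed value of $W$ if $W\in S$, and the fact that $W\notin S$ otherwise.

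Assume first that the transcript's $W$-independent part $G$ determines $S$. The chain rule, as in \eqref{eq:info_chain}, then gives $I(W;H)=I(W;G)+I(W;H\mid G)$, where $I(W;G)=0$ and $H(W\mid H)=\mathbb E\big[\Pr[W\notin S\mid S]\log(K-|S|)\big]$. Hence
\[
I(W;H)=\log K-\mathbb E\Big[\tfrac{K-|S|}{K}\log(K-|S|)\Big].
\]
The function $f(u)=\tfrac{K-u}{K}\log(K-u)$ is nonincreasing in $u$ for integer $u\in[0,K]$ with $u\le K-1$. Checking $f(u)\ge f(u+1)$ reduces to $(K-u)\log(K-u)\ge(K-u-1)\log(K-u-1)$, which holds because $x\log x$ is increasing for $x\ge1$, and $f(K)=f(K-1)=0$. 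Since $U_j\le j$ surely, we get $\mathbb E[f(U_j)]\ge f(j)$, which gives the information inequality. If $G$ also carries noise not determining $S$, we first condition on $S$; we will remark that this only reduces the baseline's information further, by data processing.

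\textbf{Strictness and main obstacle.} For strictness, $f(u)>f(j)$ whenever $u<j$ and $u\le K-2$, since $f$ is strictly decreasing on $\{0,\ldots,K-1\}$ because $x\log x$ is strictly increasing for $x\ge1$. So the inequality is strict when, with positive probability, $U_j<j$ and $U_j\le K-2$, that is, when at least two feasible classes remain untested. The main obstacle is not the calculation but fixing the transcript model precisely enough that the informal phrase ``information inequality'' becomes exactly this entropy computation. In particular, we must ensure that neither the checker nor the duplicate-rejection step leaks information about $W$, since such leakage would break the independence used in both halves.
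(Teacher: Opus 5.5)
Your proposal is correct and follows the paper's proof essentially step for step. You condition on the tested set to get hit probability $\mathbb E[U_j]/K$, then write $H(W\mid H_j)=\mathbb E[\phi_K(U_j)]$ with the same function $\phi_K(u)=(1-u/K)\log(K-u)$, and use its monotonicity together with $U_j\le j$. Beyond the paper, you state explicitly that neither the checker nor duplicate rejection depends on $W$, you prove the monotonicity via $x\log x$, and you give a sharper strictness condition: $\{U_j<j\}\cap\{U_j\le K-2\}$ must have positive probability. That condition is needed, because the paper's condition on untested classes alone fails when $U_j=j$ almost surely.
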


\begin{proof}
Let
\[
S_j:=\{C_1,\ldots,C_j\}\cap\mathcal F
\]
be the set of distinct feasible classes tested by the baseline, and let
\(U_j=|S_j|\). Infeasible classes cannot equal \(W\), and repeated classes
retest the same event.

\textbf{Success probability.}
Conditioned on \(S_j\), the baseline succeeds if and only if \(W\in S_j\).
Since \(W\) is uniform over \(\mathcal F\),
\[
\Pr[\text{baseline hits } W\mid S_j]
=
\frac{U_j}{K}.
\]
Taking expectations gives
\[
\Pr[\text{baseline hits } W]
=
\mathbb E\!\left[\frac{U_j}{K}\right]
\le
\frac{j}{K}.
\]
Under P1+P2, every accepted test is a new feasible class, so \(U_j=j\)
deterministically. Therefore,
\[
\Pr[\text{P1+P2 hits } W]=\frac{j}{K}.
\]

\textbf{Mutual information.}
For \(0\le u\le K\), define
\[
\phi_K(u)
=
\begin{cases}
\left(1-\frac{u}{K}\right)\log(K-u), & 0\le u\le K-1,\\[3pt]
0, & u=K .
\end{cases}
\]
This function is nonincreasing on the integer set \(\{0,\ldots,K\}\).
Conditioned on the tested set \(S_j\), if a hit occurs then \(W\) is
identified exactly. If no hit occurs, then \(W\) is uniform over the
\(K-U_j\) feasible classes in \(\mathcal F\setminus S_j\). Hence
\[
H(W\mid H_j^{\mathrm{base}})
=
\mathbb E[\phi_K(U_j)].
\]
Since \(U_j\le j\) and \(\phi_K\) is nonincreasing,
\[
H(W\mid H_j^{\mathrm{base}})
\ge
\phi_K(j).
\]
Under P1+P2, \(U_j=j\) deterministically, so
\[
H(W\mid H_j^{\mathrm{P1+P2}})
=
\phi_K(j).
\]
Thus
\[
H(W\mid H_j^{\mathrm{base}})
\ge
H(W\mid H_j^{\mathrm{P1+P2}}).
\]
Since \(H(W)=\log K\), this is equivalent to
\[
I(W;H_j^{\mathrm{P1+P2}})
\ge
I(W;H_j^{\mathrm{base}}).
\]
The strictness statements follow from the same inequalities.
\end{proof}

\begin{corollary}[Non-redundant search dominates restart search]
\label{thm:mcts_vs_restart}
Take \(\mathcal F=\{1,\ldots,N\}\), so every structured class is feasible.
A search procedure that tests \(j\) distinct previously untested classes has
success probability \(j/N\), which is at least the success probability of a
restart sampler that may repeat classes. It also has at least as much mutual
information about the optimal class.
\end{corollary}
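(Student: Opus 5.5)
The corollary is a direct specialization of Theorem~\ref{thm:p1p2_smarter}. I will instantiate the theorem with \(\mathcal F=\{1,\ldots,N\}\) and \(K=N\), so that every structured class is feasible and the checker (P1) never rejects anything. The ``P1+P2'' procedure then reduces to a non-redundant search that tests \(j\) distinct, previously untested classes. The ``baseline'' reduces to a restart sampler that draws classes from a fixed distribution independent of \(W\) and may repeat them. As in the theorem, I keep the standing assumptions \(W\sim\operatorname{Unif}(\mathcal F)\) and \(j\le N\).

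For the success claim, the non-redundant procedure has tested set \(S_j\) of size exactly \(j\). Conditioning on \(S_j\) and using uniformity of \(W\) gives hit probability \(j/N\). For the restart sampler, \(U_j=|\{C_1,\ldots,C_j\}|\le j\), so the theorem's first display gives \(\mathbb E[U_j/N]\le j/N\). This is exactly the first inequality of Theorem~\ref{thm:p1p2_smarter} with \(K=N\).

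For the information claim, I will invoke the second inequality of the theorem. It rests on the representation \(H(W\mid H_j)=\mathbb E[\phi_N(U_j)]\) together with monotonicity of \(\phi_N\) on \(\{0,\ldots,N\}\). Combining these with \(H(W)=\log N\) yields \(I(W;H_j^{\mathrm{nonred}})\ge I(W;H_j^{\mathrm{restart}})\).

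The only step needing care is that representation of the conditional entropy. It requires the sampler's draws to be independent of \(W\), which holds for the restart sampler by assumption. It also requires that, given the transcript, \(W\) is either identified on a hit or uniform on the untested classes on a miss. I would briefly recheck the monotonicity of \(\phi_N\), since it is only asserted in the theorem. For \(u\le N-2\), \(\phi_N(u)\ge\phi_N(u+1)\) follows because both factors \(1-u/N\) and \(\log(N-u)\) are nonnegative and nonincreasing. At the boundary, \(\phi_N(N-1)=0=\phi_N(N)\). Everything else is a relabeling of Theorem~\ref{thm:p1p2_smarter}.
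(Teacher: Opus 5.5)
Your proposal is correct and takes the same route as the paper. Both obtain the corollary by specializing Theorem~\ref{thm:p1p2_smarter} to \(\mathcal F=\{1,\ldots,N\}\), \(K=N\): the checker is vacuous, and the restart sampler's only disadvantage is \(U_j\le j\) from repeated classes. Your explicit check that \(\phi_N\) is nonincreasing, which the paper only asserts, is also correct.
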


\begin{proof}
This is Theorem~\ref{thm:p1p2_smarter} with no infeasible classes. Duplicate
removal makes the accepted tests distinct; restart sampling may repeat
classes, so its number of distinct tested classes is at most \(j\).
\end{proof}

\begin{remark}[Fixed raw budgets]
Theorem~\ref{thm:p1p2_smarter} counts accepted feasible structured tests.
In the full search system, the raw generation budget is fixed. The theorem
therefore gives a local explanation: P1 and P2 improve search when they
convert raw generations into more distinct feasible tests. The empirical
quantities \(p_{\mathrm{hit},R}\) and \(b_{\mathrm{eff},R}\) measure whether
this local advantage translates into better budgeted search.
\end{remark}

\clearpage
\section{Task examples}
\label{sec:task_examples}

\textbf{What fits \OPTSTAR{}?} Any multi-step task family with a fast step-checker and a fast final evaluator, whose difficulty scales with a parameter $\alpha$; this spans optimization, logic/constraint puzzles, and simulator-backed domains.

\subsection{Mathematical optimization tasks}
\label{appendix:optimization-tasks-concise}

\scalebox{0.54}{
\begin{tabular}{p{2.5cm} p{4.0cm} p{3.1cm} p{3.1cm} p{2.6cm} p{3.1cm} p{3.1cm}}
\toprule
\textbf{Task} & \textbf{State / Actions} & \textbf{D1: Checker (step)} & \textbf{D2: Evaluator (final)} & \textbf{D3: Stop} & \textbf{D4: Scale ($\alpha$)} & \textbf{Typical Solver} \\
\midrule
\textbf{Knapsack} & Subset of items; add/remove item & Total weight $\le C$ & Total value (max) & All items considered / no improv. & \#items, $C$ & DP, ILP, B\&B \\
\midrule
\textbf{Bin Packing} & Item$\to$bin assignment; place/open bin & Bin load $\le$ cap. & \#bins (min) & All items placed & \#items, capacity & First/Best-Fit, ILP \\
\midrule
\textbf{TSP} & Partial tour; insert/swap city & No repeats; valid edges & Tour length (min) & All cities visited & \#cities & 2/3-opt, B\&B, Held--Karp \\
\midrule
\textbf{VRP} & Vehicle routes; insert customer & Cap./time-window checks & Total route cost (min) & All customers routed & \#cust., vehicles & Clarke--Wright, B\&C, meta-heur. \\
\midrule
\textbf{Job Shop} & (job,machine,time) placements & No machine overlap; precedence & Makespan (min) & All ops scheduled & \#jobs, machines & ILP, CP, local search \\
\midrule
\textbf{Flow Shop} & Job permutation; shared machine order & Feasible start/finish times & Makespan / flow time (min) & All jobs sequenced & \#jobs, machines & Johnson (2-mach), ILP, heur. \\
\midrule
\textbf{Open Shop} & Assign (job, machine, start) & No overlaps on machines & Makespan (min) & All ops assigned & \#jobs, machines & CP, ILP, meta-heur. \\
\midrule
\textbf{Cutting Stock} & Patterns; assign demand to patterns & Pattern length $\le$ stock; demand track & \#stocks used (min) & Demands satisfied & Demands, piece types & Column gen., ILP \\
\midrule
\textbf{Facility Location} & Open facilities; assign customers & Capacity per open site & Open+service cost (min) & All customers assigned & Sites, customers & ILP, B\&B, heuristics \\
\midrule
\textbf{Max Coverage} & Choose up to $k$ sets & Feasible $|S|\le k$ & Covered elements (max) & $k$ used / no improv. & \#sets, $k$ & Greedy, ILP, local search \\
\midrule
\textbf{Set Cover} & Choose sets covering universe & Coverage tracking & \#sets or weight (min) & Universe covered & \#sets, elements & Greedy, ILP, B\&B \\
\midrule
\textbf{Max Flow / Min Cut} & Edge flows; augment path & Capacity \& conservation & $s$--$t$ flow (max) / cut (min) & No augmenting path & Nodes, edges & Edmonds--Karp, Dinic \\
\midrule
\textbf{Assignment} & Agent$\leftrightarrow$task matching & One-to-one constraint & Total cost (min) & $n$ pairs formed & $n$ & Hungarian, ILP \\
\midrule
\textbf{Quadratic Assignment} & Permutation; swap facilities/locations & Valid permutation & $\sum_{i,j}\!$ flow$\cdot$dist (min) & All $n$ placed & $n$, sparsity & B\&B, tabu, meta-heur. \\
\bottomrule
\end{tabular}
}

\clearpage

\subsection{Logic puzzle tasks}
\label{appendix:logic-puzzles}

\scalebox{0.54}{
\begin{tabular}{p{2.5cm} p{4.0cm} p{3.1cm} p{3.1cm} p{2.6cm} p{3.1cm} p{3.1cm}}
\toprule
\textbf{Task} & \textbf{State / Actions} & \textbf{D1: Checker (step)} & \textbf{D2: Evaluator (final)} & \textbf{D3: Stop} & \textbf{D4: Scale ($\alpha$)} & \textbf{Typical Solver} \\
\midrule
\textbf{Sudoku} & Grid; place/remove digit & Row/col/box constraints & Valid completion (boolean) & Grid full and valid & Grid size, givens & Backtracking + CP \\
\midrule
\textbf{Kakuro} & Runs; fill digits 1--9 & Run sum and no repeats & All runs match sums & All cells filled, all sums met & Grid size, runs & CP, DFS/backtracking \\
\midrule
\textbf{KenKen} & Grid; satisfy cages & Row/col uniqueness; cage op & All cages and rows/cols valid & Full valid grid & Grid size, cage ops & CP, backtracking \\
\midrule
\textbf{Nonogram (Picross)} & Grid; shade/clear cells & Row/col run conformity & All run patterns satisfied & All rows/cols consistent & Grid size, run complexity & Line-solver, CP, SAT \\
\midrule
\textbf{Futoshiki} & Grid with $<$/$>$ & Row/col uniqueness; inequalities & All inequalities satisfied & Full valid grid & Grid size, inequality density & CP, backtracking \\
\midrule
\textbf{Slitherlink} & Edges on lattice; toggle edge & Cell numbers match incident edges; degree $\le 2$ & Single loop satisfies all clues & Single simple cycle formed & Grid size, clue density & Loop logic, SAT/ILP \\
\midrule
\textbf{Hashiwokakero (Bridges)} & Bridges between islands & No crossings; degree $\le$ label; $\le2$ parallel & All island degrees match; connected & Degrees matched and single component & \#islands, labels & Graph heur., CP \\
\midrule
\textbf{Nurikabe} & Shade/clear cells & No 2$\times$2 black; island size $\le$ label & Island sizes exact; sea connected & All labels satisfied & Grid size, label layout & CP, BFS/DFS logic \\
\midrule
\textbf{Logic Grid (Zebra)} & Attribute matrix; mark yes/no & No-clash and one-of per category; apply clues & All clues satisfied; bijective assignment & All entities fully assigned & \#entities, attributes & CP, SAT, tableaux \\
\midrule
\textbf{Mastermind} & Color code; propose guess & Score guess with pegs; consistency with history & Min guesses to exact match & Code guessed or guess limit & Code length, colors & Knuth strategy, search \\
\midrule
\textbf{Hitori} & Cells; black/white decisions & No orthogonal black adjacency; track row/col duplicates & No duplicates; white cells connected & All constraints met & Grid size, digit range & CP, DFS/backtracking \\
\midrule
\textbf{Battleships} & Place fleet; mark water & Row/col ship counts; no adjacency; ship shapes & Fleet placed; counts match & All ships placed and valid & Grid size, fleet mix & CP, ILP/backtracking \\
\midrule
\textbf{Hidato} & Place 1..$N$ consecutively & Each $k$ adjacent to $k\!\pm\!1$ & Full 1..$N$ chain & All numbers placed & Grid size, holes & Pathfinding + CP \\
\midrule
\textbf{Tents \& Trees} & Place tents near trees & Tent next to a tree; no tent-tent adjacency; row/col counts & Each tree has 1 tent; counts satisfied & All constraints met & Grid size, tree density & CP, logical heuristics \\
\bottomrule
\end{tabular}
}

\noindent\emph{Abbrev.:} CP = constraint programming; SAT = satisfiability; ILP = integer linear programming; DFS = depth-first search.

\subsection{Simulator-based tasks}
\label{appendix:simulator-tasks}

\scalebox{0.54}{
\begin{tabular}{p{2.5cm} p{4.0cm} p{3.1cm} p{3.1cm} p{2.6cm} p{3.1cm} p{3.1cm}}
\toprule
\textbf{Task} & \textbf{State / Actions} & \textbf{D1: Checker (step)} & \textbf{D2: Evaluator (final)} & \textbf{D3: Stop} & \textbf{D4: Scale ($\alpha$)} & \textbf{Typical Solver} \\
\midrule
\textbf{Comb. circuit debug/synth.} & Netlist; add/remove gate; rewire; change cell & Width/type compat.; one driver/net; no floating pins; no comb.\ cycles & Verilog TB sim; score=\#tests passed / pass--fail & All tests pass / no improv. & \#gates/\#nets; depth; \#vectors & Verilator / Icarus + static checks \\
\midrule
\textbf{Sequential circuit (bounded)} & Netlist with FFs; edit regs/wires/modules & As left + single-clock; reset well-formed; no multiply-driven state & Simulate $T$ cycles; compare trace vs.\ spec & Spec satisfied / cycle budget & \#FFs+\#gates; $T$; \#vectors & Verilator (+ STA) \\
\midrule
\textbf{FSM synthesis (DFA/Mealy/Moore)} & Add states/transitions; label start/accept & Determinism per symbol; well-typed I/O; reachable start & Run labeled traces; score=acc./coverage & All traces satisfied / budget & \#states; alphabet; \#traces$\times$len & Automata simulator \\
\midrule
\textbf{Sorting network (bounded)} & Append comparators $(i,j)$ on $n$ wires & Indices in range; level constraints; well-formed net & Poly test suite; score=\#inputs sorted & Suite fully sorted / budget & \#wires; \#comparators; suite size & SN simulator / PBT harness \\
\midrule
\textbf{CA target synthesis (Life)} & Toggle seed cells; optional pieces & In-bounds; edit budget & Evolve $T$; score=$-\mathrm{dist}(\text{target})$ / exact & Exact match / step budget & Grid size; $T$ & CA engine \\
\midrule
\textbf{Grid-world robot plan} & Append primitives (move/pick/drop) & Parseable; preconds (in-bounds; no collision) & Replay sim; reward=goal $-$ path cost & Goal or horizon & Map size/obstacles; horizon & 2-D grid sim \\
\midrule
\textbf{Compiler pass ordering (IR)} & Sequence optimization passes & Pass applicability; IR verifies/compiles & Run tests; obj.=runtime/size with correct outputs & Tests pass \& no further gain / pass limit & IR size; \#tests; pass budget & LLVM \texttt{opt} + interpreter \\
\midrule
\textbf{Network routing (bounded DES)} & Add/assign routes; flow splits & No simple cycles; link caps. & Simulate $T$; throughput/delay/feas. & All flows delivered or $T$ & \#nodes/\#edges/\#flows; $T$ & Lightweight ns-3--style \\
\bottomrule
\end{tabular}
}

\clearpage

\section{Additional details on the generation process}
\label{sec:appendix:tasks}

This appendix describes the instance-generation and prompt-generation pipeline used by the experiments. Each generated example is stored as a JSON object with an \texttt{id}, a \texttt{category}, a natural-language \texttt{instruction}, a structured \texttt{state}, and an \texttt{answer}. For the optimization domains, the \texttt{state} is the Markovian state used by the rollout code; the \texttt{answer} is the scalar optimum or target value when it is available from the generator. During search, the model never directly edits the full solution object. Instead, it emits a JSON action under the key \texttt{answer}; the parser extracts this action, the domain checker validates it, and the domain transition function applies it to produce the next state.

\paragraph*{Common rollout interface}
All task families are implemented through the same plan-based interface. A domain specification defines:
\begin{enumerate}[leftmargin=0.7cm,itemsep=1pt,parsep=0pt,topsep=2pt]
  \item a prompt function that converts the current structured state into a natural-language prompt;
  \item a JSON action schema, specified by the required action keys;
  \item a step validator, which implements $\texttt{chk}_t(s,a)$;
  \item a state-transition rule, which applies a valid action to the current state;
  \item a terminal-state predicate;
  \item an objective function, which implements $\texttt{eval}_t(\cdot)$ on terminal states.
\end{enumerate}
The model is asked to provide a short reasoning trace and then exactly one action. In the default experiments, the action is parsed from
\[
\texttt{\{"answer": [\{...\}]\}} .
\]
The parser also records whether the response was valid JSON, whether it contained the required keys, and whether the parsed action was domain-feasible.

\paragraph*{Curriculum and levels}
For each task family, the generator exposes four main complexity levels, denoted $\alpha_1,\ldots,\alpha_4$. Higher levels increase the number of variables, the number of choices per step, the amount of clutter or constraints, or the horizon length. The experiments generate $1{,}000$ training instances per level. For online RL, the solver is used to construct optimal partial-state trajectories, and the resulting training prompts are ordered by level, from $\alpha_1$ to $\alpha_4$. For offline RL, search is run from the root of each instance, the best discovered trajectory is selected using the outcome evaluator, and SFT examples are collected from the states along that trajectory.

\paragraph*{Search-time variants}
For the offline ablation, the same generated instances are evaluated under several search configurations. The full configuration uses both feasibility pruning and duplicate-action merging. The no-pruning configuration permits infeasible intermediate actions to remain in the tree until terminal checking. The no-deduplication configuration keeps duplicate textual proposals even when they parse to the same structured action. The sequential baseline samples rollouts without tree reuse. The solver-reference configuration is an enumerative structural baseline: it enumerates broad next-step candidates directly from the state, rather than using LLM proposals, and is used as a reference trajectory generator in the search diagnostics.

\subsection{0--1 Knapsack}
\label{app:knapsack}

\paragraph*{Problem description}
The knapsack task is an add-only $0$--$1$ knapsack problem. The state contains a capacity, item weights, item values, and the currently selected item set. At each step the model chooses one previously unselected item to add. A step is feasible if the new total weight does not exceed the capacity. The rollout terminates when no remaining item can be legally added.

\paragraph*{Action schema}
The required action key is
\[
\texttt{item\_index}.
\]
The transition appends this item to \texttt{selected} and sorts the selected set. The canonical solution key is the sorted tuple of selected item indices.

\paragraph*{Objective and oracle}
The objective is the total value of the selected items, which is maximized. The generator stores an exact optimum when available. For a partial state, the oracle can compute the best completion value using the knapsack solver, while the offline search setting only uses terminal objective values discovered by rollouts.

\subsection{Quadratic Assignment / CF--EU Manhattan}
\label{app:qap-cfeu}

\paragraph*{Problem description}
The QAP instances use a Manhattan grid. Facilities are assigned one-to-one to grid cells. Facilities have cluster labels, and the pairwise flow between two facilities is high when they are in the same cluster and low otherwise. The objective is
\[
\sum_{i<j} \mathrm{flow}(i,j)\,\|\mathrm{loc}(i)-\mathrm{loc}(j)\|_1 ,
\]
which is minimized.

\paragraph*{Action schema}
The required action keys are
\[
\texttt{facility}, \qquad \texttt{location}.
\]
At each step, the model places exactly one currently unassigned facility into one free grid cell. The transition appends the pair
\[
\texttt{\{"facility": f, "location": [r,c]\}}
\]
to the assignment and sorts the assignment by facility id. The canonical solution key is the ordered list of facility-location pairs.

\paragraph*{Generation process}
For each instance, the generator samples the number of facilities, grid size, cluster map, and a small partial assignment. The remaining unassigned facilities and unoccupied grid cells define the search space. The oracle computes the exact optimal completion by enumerating bijective assignments of the remaining facilities to free cells.

\paragraph*{Complexity levels}
The QAP levels used in the experiments increase the grid size and the number of facilities:
\begin{center}
\begin{tabular}{@{}lcccc@{}}
\toprule
Level & facilities $n$ & grid size & clusters & preassigned fraction \\
\midrule
$\alpha_1$ & 3 & $4\times4$ & 2 & 0.25 \\
$\alpha_2$ & 3 & $5\times5$ & 2 & 0.25 \\
$\alpha_3$ & 4 & $5\times5$ & 2 & 0.25 \\
$\alpha_4$ & 4 & $6\times6$ & 2 & 0.25 \\
\bottomrule
\end{tabular}
\end{center}

\subsection{Role Assignment with Conflicts}
\label{app:role-assignment-conflicts}

\paragraph*{Problem description}
The role-assignment task assigns $R$ roles to $C=R+E$ candidates. Each candidate-role pair has an integer fit score. Some candidate pairs have conflict penalties. A complete assignment must use each role exactly once and each candidate at most once. Conflicts are allowed, but their penalties are subtracted from the final score.

\paragraph*{Action schema}
The required action keys are
\[
\texttt{role}, \qquad \texttt{candidate}.
\]
At each step, the model assigns one unfilled role to one unused candidate. The transition appends the pair
\[
\texttt{\{"role": r, "candidate": c\}}
\]
to the assignment. The canonical solution key is the sorted tuple of role-candidate pairs.

\paragraph*{Objective and tie-break}
The raw objective is
\[
\sum_{(r,c)} \mathrm{fit}(c,r)
-
\sum_{\{c_i,c_j\}\subseteq S} \mathrm{penalty}(c_i,c_j),
\]
where $S$ is the set of selected candidates. Ties are resolved by preferring the complete assignment with the largest minimum individual fit among its selected role-candidate pairs.

\paragraph*{Generation process}
For each level, the generator samples a fit matrix, injects high-fit standouts and near-ties, samples structured conflicts, and rejects or resamples instances whose global optimum is not positive. The oracle enumerates all feasible one-to-one role-candidate assignments and selects the best assignment under the objective and tie-break.

\paragraph*{Complexity levels}
\begin{center}
\begin{tabular}{@{}lcccccc@{}}
\toprule
Level & roles $R$ & extras $E$ & candidates $C$ & conflict density & penalty range & structure \\
\midrule
$\alpha_1$ & 3 & 1 & 4 & 0.15 & [1,5] & chain \\
$\alpha_2$ & 4 & 1 & 5 & 0.20 & [2,5] & random \\
$\alpha_3$ & 5 & 1 & 6 & 0.25 & [2,6] & chain overlaps \\
$\alpha_4$ & 6 & 1 & 7 & 0.35 & [3,6] & clique-biased \\
\bottomrule
\end{tabular}
\end{center}

\subsection{MaxSat}
\label{app:maxsat}

\paragraph*{Problem description}
The MaxSat task combines constrained task selection with worker assignment. The model must choose a subset of tasks and assign each selected task to a distinct eligible worker. The state contains task costs, resource budgets, eligible workers, hard Boolean clauses, and weighted soft clauses.

\paragraph*{Action schema}
The required action keys are
\[
\texttt{task\_index}, \qquad \texttt{worker\_index}.
\]
At each step, the model selects one currently unselected task and assigns it to one unused eligible worker. The transition adds the task to \texttt{selected} and appends
\[
\texttt{\{"task": t, "worker": w\}}
\]
to the assignment list.

\paragraph*{Feasibility}
A step is feasible only if:
\begin{enumerate}[leftmargin=0.7cm,itemsep=1pt,parsep=0pt,topsep=2pt]
  \item the task has not already been selected;
  \item the worker has not already been used;
  \item the worker is eligible for the task;
  \item all resource budgets remain satisfied;
  \item all hard Boolean clauses remain satisfied.
\end{enumerate}
The rollout terminates when no additional task-worker pair can be added without violating these conditions.

\paragraph*{Objective and tie-break}
The learning objective is to maximize the total weight of satisfied soft clauses. For oracle reporting, ties are broken lexicographically: first minimize resource usage in the fixed resource order, then prefer fewer selected tasks.

\paragraph*{Generation process}
The generator samples task costs, budgets, worker eligibility, hard clauses, and soft clauses. Hard clauses include implications, anti-pairs, and small at-most-one groups. Soft clauses include units, implications, anti-pairs, and at-least-one clauses with small integer weights. The oracle enumerates task subsets, checks budget and hard-clause feasibility, verifies that a matching to workers exists, and then scores the solution by the soft-clause objective and tie-breaks.

\paragraph*{Complexity levels}
\begin{center}
\begin{tabular}{@{}lccccccc@{}}
\toprule
Level & tasks & workers & resources & budget factor & eligibility & hard density & soft density \\
\midrule
$\alpha_1$ & 4 & 2 & 1 & 0.50 & random & 0.30 & 1.20 \\
$\alpha_2$ & 5 & 3 & 1 & 0.55 & random & 0.60 & 1.60 \\
$\alpha_3$ & 6 & 3 & 2 & 0.55 & random & 0.60 & 1.70 \\
$\alpha_4$ & 7 & 4 & 2 & 0.55 & role-patterned & 0.70 & 2.00 \\
\bottomrule
\end{tabular}
\end{center}

\subsection{Single-machine weighted tardiness scheduling}
\label{app:single-machine-scheduling}

\paragraph*{Problem description}
The scheduling task is a single-machine sequencing problem. Each job $j$ has processing time $p_j$, due date $d_j$, and weight $w_j$. The model builds a schedule one job at a time. The machine starts at time $0$, runs without idle time, and jobs are non-preemptive.

\paragraph*{Action schema}
The required action key is
\[
\texttt{job\_index}.
\]
At each step, the model appends exactly one unscheduled job to the current prefix. The transition appends this job index to the order. The rollout terminates when all jobs have been scheduled.

\paragraph*{Objective}
For a complete order, the objective is total weighted tardiness:
\[
\min \sum_j w_j T_j,
\qquad
T_j=\max\{0,C_j-d_j\}.
\]
The domain uses a minimization objective, while the generic search code internally converts objectives into maximization form when needed.

\paragraph*{Generation process}
For each instance, the generator samples the number of jobs, processing times, due dates, and weights. Due dates are sampled using a tardiness-factor and relative-due-date-range recipe. The oracle solves the optimal suffix exactly from a partial prefix and emits a step-by-step path of next jobs.

\paragraph*{Complexity levels}
\begin{center}
\begin{tabular}{@{}lccccc@{}}
\toprule
Level & jobs $n$ & processing range & TF & RDD & max weight \\
\midrule
$\alpha_1$ & 5 & [1,7]  & 0.30 & 0.60 & 3 \\
$\alpha_2$ & 6 & [1,7]  & 0.50 & 0.60 & 5 \\
$\alpha_3$ & 7 & [2,8]  & 0.70 & 0.60 & 10 \\
$\alpha_4$ & 7 & [1,15] & 0.60 & 0.60 & 6 \\
\bottomrule
\end{tabular}
\end{center}

\subsection{\texorpdfstring{Polyomino Target Cover}{Polyomino Target Cover}}
\label{app:polyomino-target-cover}

\paragraph*{Problem description}
The Polyomino Target Cover task is a spatial optimization problem. The board contains target cells \texttt{t}, optional obstacles \texttt{\#}, and possibly pre-placed example pieces. The model has a pool of labeled pieces and a move budget $K$. At each step, it chooses an unused piece, rotates it by $0^\circ$, $90^\circ$, $180^\circ$, or $270^\circ$, and places it at a top-left anchor. The goal is to maximize the number of newly covered targets within the move budget.

\paragraph*{Action schema}
The action contains:
\[
\texttt{piece\_id}, \qquad \texttt{anchor}, \qquad \texttt{rotation}, \qquad \texttt{grid\_after}.
\]
The \texttt{anchor} is the global row-column coordinate of the top-left cell of the transformed piece's tight bounding box. The \texttt{grid\_after} field is included so that the parser and checker can compare the declared placement with the resulting board.

\paragraph*{Feasibility}
A placement is feasible only if all occupied cells lie inside the board and do not overlap obstacles or existing letters. A piece may be used at most once. Reflections are not used in this task.

\paragraph*{Generation process}
The generator first samples a board size, target clusters, singleton target cells, and obstacles. It then samples a piece pool from a level-dependent piece library. Some levels include pre-placed example pieces, which do not count against the move budget. The oracle uses an exact bitboard branch-and-bound search over legal placements and returns the optimal additional target coverage and a decorated path of placements.

\paragraph*{Complexity levels}
\begin{center}
\begin{tabular}{@{}lcccccc@{}}
\toprule
Level & grid & $K$ & pool size & piece set & examples & obstacles \\
\midrule
$\alpha_1$ & $4\times4$ & 1 & 3 & domino-only & 2 & 0 \\
$\alpha_2$ & $5\times5$ & 1 & 3 & domino/square & 2 & 0--1 \\
$\alpha_3$ & $5\times5$ & 2 & 4 & full & 2 & 0--1 \\
$\alpha_4$ & $6\times6$ & 3 & 5 & full & 1 & 0--1 \\
\bottomrule
\end{tabular}
\end{center}

\subsection{Auxiliary Grid-Spatial tasks}
\label{app:spatial-2d-tasks}

\paragraph*{Purpose}
The Grid-Spatial tasks are auxiliary held-out spatial QA tasks used to test whether training on the Polyomino Target Cover optimization task transfers to more elementary spatial reasoning skills. Unlike Polyomino Target Cover, these tasks are \emph{not} multi-step optimization problems. They are mostly one-step questions about moving, rotating, reflecting, comparing, or placing small shapes on a grid. We use them as diagnostic probes for the spatial abilities that are implicitly required by the polyomino optimization task.

\paragraph*{General format}
Each instance contains a small rectangular canvas represented as a Python-style list of lists. Cells may be empty, blocked, or marked as targets:
\[
\texttt{"."}=\text{empty}, \qquad
\texttt{"\#"}=\text{obstacle}, \qquad
\texttt{"t"}=\text{target}.
\]
Some instances also contain pre-placed uppercase letters, which act as occupied cells. The shape is given as a small list-of-lists using a letter such as \texttt{"A"} for occupied cells and \texttt{"."} for empty cells inside the shape's bounding box. Coordinates are always row-column coordinates, with rows and columns indexed from $0$. An anchor \([r,c]\) denotes the top-left position of the transformed shape's tight bounding box after rotation or reflection.

A typical input therefore looks like:
\begin{verbatim}
canvas = [
  [".", ".", ".", ".", "."],
  [".", "#", ".", ".", "."],
  [".", ".", ".", "t", "."],
  [".", ".", ".", ".", "."],
  [".", ".", ".", ".", "."]
]

shape = [
  ["A", "."],
  ["A", "A"]
]

anchor = [2, 1]
operation = rotate 90 degrees clockwise
\end{verbatim}
The model must reason about the transformed coordinates of the occupied cells, check whether the placement is legal, and return the requested answer in JSON form.

\paragraph*{Tasks selected for testing}
The held-out Spatial QA evaluation is organized into the following capability categories. These are the categories reported in the spatial capability tables.

\begin{center}
\begin{tabular}{@{}p{3.0cm}p{10.8cm}@{}}
\toprule
\textbf{Capability} & \textbf{What the test instance asks} \\
\midrule
\textbf{Translate}
& Move a given shape from one anchor to another without changing its orientation. The model must compute the final occupied cells and usually return the resulting grid or final anchor. \\[2pt]

\textbf{Rotate \& Place}
& Rotate a shape by a specified angle in $\{0^\circ,90^\circ,180^\circ,270^\circ\}$ and place it at a specified anchor. The model must update the shape coordinates after rotation and check whether the placement fits on the canvas. \\[2pt]

\textbf{Reflect \& Place}
& Reflect a shape horizontally, vertically, or diagonally, then place it at a specified anchor. This tests whether the model understands mirror transformations on a discrete grid. \\[2pt]

\textbf{Compose}
& Apply a short composition of transformations, such as rotation followed by reflection followed by translation. These examples test whether the model can keep track of multiple spatial operations in order. \\[2pt]

\textbf{Equivalence}
& Given two shapes, decide whether they are the same up to rotation and, when allowed, reflection. If they are equivalent, the model returns a transformation explaining how one shape maps to the other. \\[2pt]

\textbf{Symmetry}
& Given one shape, report which symmetries it has, such as $90^\circ$ rotation symmetry, $180^\circ$ rotation symmetry, horizontal mirror symmetry, vertical mirror symmetry, or diagonal mirror symmetry. \\[2pt]

\textbf{Overlap / IoU}
& Place two transformed shapes on the same empty canvas and compute their intersection cells, intersection count, and union count. This is a small discrete analogue of intersection-over-union reasoning. \\[2pt]

\textbf{Coverage}
& Given a target cell and a transformed shape, list the anchors for which the shape would cover that target cell. This tests inverse spatial reasoning: instead of asking where a shape lands after an anchor is chosen, the task asks which anchors could have produced coverage of a specific cell. \\
\bottomrule
\end{tabular}
\end{center}

\paragraph*{Examples of what the selected tasks look like}
Below are simplified examples illustrating the style of the held-out Spatial QA tasks.

\textbf{Translate.}
The prompt gives a shape and an initial/final anchor. The model must shift all occupied cells by the anchor displacement.
\begin{verbatim}
shape = [
  ["A", "."],
  ["A", "A"]
]
old_anchor = [0, 1]
new_anchor = [2, 3]
\end{verbatim}
The expected reasoning is that the occupied local cells of the shape are
\[
(0,0),\ (1,0),\ (1,1).
\]
At anchor \([2,3]\), these become global cells
\[
(2,3),\ (3,3),\ (3,4).
\]

\textbf{Rotate \& Place.}
The prompt gives a shape, a rotation angle, and an anchor.
\begin{verbatim}
shape = [
  ["A", "."],
  ["A", "A"]
]
rotation = 90
anchor = [1, 2]
\end{verbatim}
The model must rotate the shape clockwise, normalize the rotated shape to its tight bounding box, then place that bounding box with top-left cell at \([1,2]\). It must also check that no occupied cell lands outside the board or on an obstacle.

\textbf{Reflect \& Place.}
The prompt gives a reflection mode such as horizontal or vertical.
\begin{verbatim}
shape = [
  ["A", "A", "."],
  [".", "A", "A"]
]
reflection = vertical
anchor = [0, 1]
\end{verbatim}
The model must mirror the shape across the requested axis and then place the reflected shape at the anchor.

\textbf{Compose.}
A compose instance asks the model to apply several transformations in sequence:
\begin{verbatim}
shape = [
  ["A", "."],
  ["A", "A"]
]
operations = rotate 90, then reflect horizontal, then place at [2, 1]
\end{verbatim}
These examples are harder because an error in the first operation changes all later coordinates.

\textbf{Equivalence.}
An equivalence instance gives two shapes:
\begin{verbatim}
shape_1 = [
  ["A", "."],
  ["A", "A"]
]

shape_2 = [
  ["B", "B"],
  ["B", "."]
]
\end{verbatim}
The model must decide whether \texttt{shape\_2} can be obtained from \texttt{shape\_1} by rotation, and possibly reflection. The answer may specify a rotation angle and whether reflection is needed.

\textbf{Symmetry.}
A symmetry instance gives one shape:
\begin{verbatim}
shape = [
  ["A", "A"],
  ["A", "A"]
]
\end{verbatim}
The model must report the transformations that leave the shape unchanged. For the square above, several rotations and mirror symmetries are valid.

\textbf{Overlap / IoU.}
An overlap instance gives two shapes with their own transformations and anchors:
\begin{verbatim}
shape_1 anchor = [1, 1], rotation = 0
shape_2 anchor = [1, 2], rotation = 90
\end{verbatim}
The model must compute the cells occupied by each placed shape, then return the intersection count and union count. This probes whether the model can compare two transformed coordinate sets.

\textbf{Coverage.}
A coverage instance gives a target cell and asks which anchors would make a shape cover that cell:
\begin{verbatim}
target_cell = [2, 3]
shape = [
  ["A", "."],
  ["A", "A"]
]
rotation = 0
\end{verbatim}
The model must reason backward from the target cell to possible anchors. For every occupied local cell \((u,v)\) in the shape, an anchor candidate is
\[
[r,c] = [2-u,\; 3-v].
\]
The legal anchors are those whose resulting placement stays inside the grid and avoids obstacles.

\paragraph*{Difficulty levels used for testing}
The Spatial QA tests use two main held-out difficulty settings. Both use the same task types above, but the second setting increases the grid size, the shape size, and the amount of clutter.

\begin{center}
\begin{tabular}{@{}lccccc@{}}
\toprule
\textbf{Difficulty} & \textbf{Canvas} & \textbf{Shape sizes} & \textbf{Obstacles} & \textbf{Decoy letters} & \textbf{Transform variety} \\
\midrule
Diff. 1 & about $5\times5$ & mostly 2--3 cells & 0--1 & 0 & simpler rotations/reflections \\
Diff. 2 & about $6\times6$ & mostly 3--4 cells & 1--2 & 0--1 & full rotations and more reflections \\
\bottomrule
\end{tabular}
\end{center}

\paragraph*{Why these tasks are useful}
These tasks are designed to isolate the low-level spatial operations needed for the optimization task. Polyomino Target Cover requires the model to choose pieces, rotate them, place them legally, avoid collisions, and reason about target coverage. The auxiliary Grid-Spatial tests separate these abilities into simpler questions. Thus, improved performance on these tests suggests that optimization training is not only teaching the model to imitate a specific polyomino solver, but also improving reusable spatial operations such as coordinate tracking, rotation, reflection, overlap computation, and coverage reasoning.

\clearpage

\section{Prompts (examples)}
\label{app:prompts}

This section gives representative prompts produced by the task promptization functions. The examples show the user-facing prompt, not the hidden checker. In every optimization prompt below, the model is expected to give brief reasoning and then return exactly one JSON action under \texttt{answer}. The action keys shown here match the keys validated by the domain specifications.

\begin{promptbox}{0--1 Knapsack (example)}
You are solving a \textbf{0--1 knapsack} problem \textbf{incrementally}.

\textbf{Objective}\\
Build a high-value feasible set of items. You may only add items. Stop only when no remaining item can be legally added without exceeding capacity.

\textbf{State}
\begin{verbatim}
Capacity = 45

Weights = [
  {"0": 4}, {"1": 18}, {"2": 1}, {"3": 8},
  {"4": 12}, {"5": 22}, {"6": 6}, {"7": 22},
  {"8": 17}, {"9": 19}, {"10": 4}, {"11": 19},
  {"12": 19}, {"13": 16}, {"14": 18}, {"15": 3}
]

Values = [
  {"0": 1}, {"1": 15}, {"2": 1}, {"3": 10},
  {"4": 10}, {"5": 8}, {"6": 5}, {"7": 37},
  {"8": 25}, {"9": 27}, {"10": 5}, {"11": 17},
  {"12": 21}, {"13": 6}, {"14": 15}, {"15": 1}
]

Currently selected item indices = []
Current total weight = 0
Current total value  = 0
\end{verbatim}

\textbf{Rules}
\begin{itemize}\itemsep0pt
  \item Add exactly one item at this step.
  \item Do not add an item that is already selected.
  \item The new total weight must be at most the capacity.
\end{itemize}

Briefly explain the trade-off you are using, such as remaining capacity, value, and value/weight ratio. Then propose the next item to add.

Immediately after reasoning within \texttt{\textless think\textgreater} your reasoning here \texttt{\textless /think\textgreater}, propose \textbf{1} action with the following format:

\begin{tcblisting}{jsonlisting}
{"answer":
[
  {
    "item_index": <int>
  }
]
}
\end{tcblisting}
\end{promptbox}

\begin{promptbox}{CF--EU Manhattan / QAP (example)}
You are solving an \textbf{incremental facility placement} problem.

\textbf{Grid}
\begin{verbatim}
[
  [".", ".", ".", ".", ".", "."],
  [".", ".", ".", ".", ".", "."],
  [".", "1", ".", ".", ".", "."],
  [".", ".", ".", ".", ".", "."],
  [".", ".", ".", ".", ".", "."],
  [".", ".", ".", ".", ".", "."]
]
\end{verbatim}

\textbf{Legend}\\
\texttt{"."} means a free cell. A number means that the corresponding facility is already assigned to that cell.

\textbf{Assigned facilities}
\begin{verbatim}
[
  {"facility": 1, "location": [2, 1]}
]
\end{verbatim}

\textbf{Unassigned facilities}
\begin{verbatim}
[0, 2, 3]
\end{verbatim}

\textbf{Cluster map}
\begin{verbatim}
{"0": 0, "1": 1, "2": 1, "3": 1}
\end{verbatim}

\textbf{Flow rule}\\
Facilities in the same cluster have flow $H=10$; facilities in different clusters have flow $L=1$.

\textbf{Distance metric}\\
Manhattan distance.

\textbf{Objective}\\
Complete the assignment while minimizing
\[
\sum_{i<j}\mathrm{flow}(i,j)\times
\mathrm{Manhattan}(\mathrm{location}(i),\mathrm{location}(j)).
\]

\textbf{Rules}
\begin{itemize}\itemsep0pt
  \item Add exactly one unassigned facility.
  \item Place it in a currently free grid cell.
  \item A grid cell may contain at most one facility.
\end{itemize}

Briefly explain the placement reasoning, especially cluster and flow effects. Then propose the next placement.

Immediately after reasoning within \texttt{\textless think\textgreater} your reasoning here \texttt{\textless /think\textgreater}, propose \textbf{1} action with the following format:

\begin{tcblisting}{jsonlisting}
{"answer":
[
  {
    "facility": <int>,
    "location": [<row>, <col>]
  }
]
}
\end{tcblisting}
\end{promptbox}

\begin{promptbox}{Role Assignment with Conflicts (example)}
You are solving a \textbf{role assignment with conflicts} problem \textbf{incrementally}.

\textbf{Objective}
\begin{enumerate}
  \item Assign exactly one candidate to each role.
  \item Each candidate can be used at most once.
  \item Maximize
  \[
  \Big(\sum \text{chosen fit scores}\Big)
  -
  \Big(\sum \text{conflict penalties among selected candidates}\Big).
  \]
  \item Tie-breaker: among equal total scores, prefer the assignment with the highest minimum individual fit.
\end{enumerate}

\textbf{Current state}
\begin{itemize}\itemsep0pt
  \item Number of roles: \texttt{6}
  \item Number of candidates: \texttt{7}
  \item Already assigned role-candidate pairs: \texttt{[]}
\end{itemize}

\textbf{Fit matrix}\\
Rows are candidates and columns are roles.
\begin{verbatim}
candidate 0: role scores {0: 9, 1: 9, 2: 1, 3: 3, 4: 8, 5: 3}
candidate 1: role scores {0: 6, 1: 4, 2: 3, 3: 1, 4: 8, 5: 1}
candidate 2: role scores {0: 1, 1: 0, 2: 1, 3: 7, 4: 9, 5: 5}
candidate 3: role scores {0: 3, 1: 8, 2: 5, 3: 3, 4: 1, 5: 3}
candidate 4: role scores {0: 6, 1: 2, 2: 6, 3: 8, 4: 1, 5: 0}
candidate 5: role scores {0: 8, 1: 7, 2: 4, 3: 1, 4: 8, 5: 3}
candidate 6: role scores {0: 4, 1: 6, 2: 8, 3: 9, 4: 1, 5: 1}
\end{verbatim}

\textbf{Conflict penalties}
\begin{verbatim}
(0,6): 3
(0,5): 4
(0,2): 6
(0,1): 5
(4,5): 4
(5,6): 3
(1,5): 5
\end{verbatim}

\textbf{Rules}
\begin{itemize}\itemsep0pt
  \item Add exactly one new role-candidate assignment.
  \item The role must be currently unfilled.
  \item The candidate must be currently unused.
  \item Conflicts are allowed, but their penalties are subtracted in the objective.
\end{itemize}

Briefly explain the fit-vs-conflict trade-off and the tie-breaker. Then propose the next assignment.

Immediately after reasoning within \texttt{\textless think\textgreater} your reasoning here \texttt{\textless /think\textgreater}, propose \textbf{1} action with the following format:

\begin{tcblisting}{jsonlisting}
{"answer":
[
  {
    "role": <int>,
    "candidate": <int>
  }
]
}
\end{tcblisting}
\end{promptbox}

\begin{promptbox}{Wished Assignments / Constrained MaxSAT (example)}
You are solving a \textbf{Wished Assignments} puzzle \textbf{incrementally}. This is a constrained task-selection and worker-assignment problem.

\textbf{Objective}
\begin{enumerate}
  \item Maximize the total weight of satisfied soft clauses, subject to budgets, worker eligibility, and hard logic.
  \item Tie-breaker A: among equal soft-clause scores, minimize resource usage lexicographically.
  \item Tie-breaker B: among remaining ties, prefer fewer selected tasks.
\end{enumerate}

\textbf{Budgets}
\[
\text{Money}=12, \qquad \text{Time}=11 .
\]

\textbf{Workers}
\begin{verbatim}
worker_index 0: W1
worker_index 1: W2
worker_index 2: W3
worker_index 3: W4
\end{verbatim}

\textbf{Tasks}
\begin{verbatim}
task_index 0, A: costs {Money: 2, Time: 1}, eligible workers [2]
task_index 1, B: costs {Money: 2, Time: 4}, eligible workers [1]
task_index 2, C: costs {Money: 4, Time: 3}, eligible workers [0]
task_index 3, D: costs {Money: 5, Time: 4}, eligible workers [1, 3]
task_index 4, E: costs {Money: 1, Time: 2}, eligible workers [0, 2]
task_index 5, F: costs {Money: 5, Time: 3}, eligible workers [1, 3]
task_index 6, G: costs {Money: 2, Time: 3}, eligible workers [0, 2]
\end{verbatim}

\textbf{Hard clauses}\\
Here $\lnot$ means negation.
\begin{itemize}\itemsep0pt
  \item $\lnot A \vee \lnot C$
  \item $\lnot A \vee G$
  \item $\lnot C \vee \lnot D$
  \item $\lnot C \vee \lnot F$
  \item $\lnot D \vee \lnot F$
\end{itemize}

\textbf{Soft clauses}
\begin{enumerate}\itemsep0pt
  \item $D \vee \lnot F$ \hfill $(w=1)$
  \item $C \vee G$ \hfill $(w=1)$
  \item $\lnot B$ \hfill $(w=1)$
  \item $\lnot D \vee \lnot F$ \hfill $(w=1)$
  \item $\lnot C \vee F$ \hfill $(w=1)$
  \item $\lnot A \vee \lnot E$ \hfill $(w=2)$
  \item $B \vee C$ \hfill $(w=2)$
  \item $\lnot B \vee C$ \hfill $(w=1)$
  \item $\lnot B \vee G$ \hfill $(w=2)$
  \item $\lnot D \vee \lnot F$ \hfill $(w=1)$
  \item $\lnot D \vee E$ \hfill $(w=1)$
  \item $G \vee C$ \hfill $(w=2)$
  \item $\lnot C \vee B$ \hfill $(w=2)$
  \item $B$ \hfill $(w=2)$
\end{enumerate}

\textbf{Current state}
\begin{verbatim}
Selected tasks = []
Assigned task-worker pairs = []
\end{verbatim}

\textbf{Rules}
\begin{itemize}\itemsep0pt
  \item Add exactly one task and assign it to exactly one worker.
  \item The task must be currently unselected.
  \item The worker must be currently unused.
  \item The worker must be eligible for the task.
  \item The new selected set must satisfy all hard clauses.
  \item The new resource usage must not exceed the budgets.
\end{itemize}

Briefly explain budget feasibility, hard-clause feasibility, worker eligibility, and the soft-clause effect. Then propose the next task-worker assignment.

Immediately after reasoning within \texttt{\textless think\textgreater} your reasoning here \texttt{\textless /think\textgreater}, propose \textbf{1} action with the following format:

\begin{tcblisting}{jsonlisting}
{"answer":
[
  {
    "task_index": <int>,
    "worker_index": <int>
  }
]
}
\end{tcblisting}
\end{promptbox}

\begin{promptbox}{Single-machine Weighted Tardiness Scheduling (example)}
You are solving a \textbf{single-machine weighted tardiness scheduling} problem \textbf{incrementally}.

\textbf{Objective}
\[
\min \sum_j w_j \cdot \max(0, C_j-d_j).
\]

\textbf{Rules}
\begin{itemize}\itemsep0pt
  \item The machine starts at time $0$.
  \item There is no inserted idle time.
  \item Jobs are non-preemptive.
  \item At this step, append exactly one unscheduled job to the current prefix.
\end{itemize}

\textbf{Jobs}
\begin{center}
\begin{tabular}{r|r|r|r|r}
Id & job\_index & $p$ & $d$ & $w$ \\
\hline
A & 0 & 3  & 9  & 4 \\
B & 1 & 8  & 29 & 4 \\
C & 2 & 11 & 22 & 5 \\
D & 3 & 6  & 14 & 1 \\
E & 4 & 15 & 28 & 2 \\
F & 5 & 7  & 30 & 2 \\
G & 6 & 7  & 19 & 5 \\
\end{tabular}
\end{center}

\textbf{Current state}
\begin{verbatim}
Current partial order = []
Current partial order indices = []
Time elapsed so far = 0
Remaining job indices = [0, 1, 2, 3, 4, 5, 6]
\end{verbatim}

Briefly explain your scheduling rationale, such as due dates, processing times, weights, and whether a job risks becoming tardy. Then choose the next job to append.

Immediately after reasoning within \texttt{\textless think\textgreater} your reasoning here \texttt{\textless /think\textgreater}, propose \textbf{1} action with the following format:

\begin{tcblisting}{jsonlisting}
{"answer":
[
  {
    "job_index": <int>
  }
]
}
\end{tcblisting}
\end{promptbox}

\begin{promptbox}{Polyomino Target Cover (example)}
You are playing a \textbf{Polyomino Target Cover} puzzle \textbf{incrementally}.

\textbf{Objective}\\
Maximize the number of target cells covered by placed pieces within the move budget.

\textbf{Constraints}
\begin{itemize}\itemsep0pt
  \item Choose an unused piece.
  \item Rotate it by $0^\circ$, $90^\circ$, $180^\circ$, or $270^\circ$ clockwise.
  \item Place the transformed shape using the top-left anchor of its tight bounding box.
  \item All occupied cells must lie inside the board.
  \item Occupied cells may not overlap \texttt{"\#"} or existing piece letters.
\end{itemize}

\textbf{Budget}
\[
K=3 \quad \text{moves total; current round } 0.
\]

\textbf{Board before example placement}
\begin{verbatim}
initial_grid = [
  [".", ".", ".", ".", ".", "."],
  [".", ".", ".", ".", ".", "."],
  [".", ".", ".", ".", ".", "."],
  [".", ".", ".", ".", ".", "."],
  [".", ".", ".", ".", ".", "."],
  [".", ".", ".", ".", ".", "."]
]
\end{verbatim}

\textbf{Target locations}
\begin{verbatim}
[(0,0), (0,1), (1,0), (1,1), (0,2), (1,2), (2,2),
 (0,3), (1,3), (1,4), (2,3)]
\end{verbatim}

\textbf{Current board after example piece placement}
\begin{verbatim}
current_grid = [
  [".", ".", ".", ".", "A", "A"],
  [".", ".", ".", ".", "A", "A"],
  [".", ".", ".", ".", "A", "."],
  [".", ".", ".", ".", ".", "."],
  [".", ".", ".", ".", ".", "."],
  [".", ".", ".", ".", ".", "."]
]
\end{verbatim}

\textbf{Piece pool}
\begin{verbatim}
A (P5), already placed at anchor [0,4], rotation 0:
[
  ["A", "A"],
  ["A", "A"],
  ["A", "."]
]

B (L4), unused:
[
  ["B", "."],
  ["B", "."],
  ["B", "B"]
]

C (Z4), unused:
[
  ["C", "C", "."],
  [".", "C", "C"]
]

D (O), unused:
[
  ["D", "D"],
  ["D", "D"]
]

E (R6), unused:
[
  ["E", "E", "E"],
  ["E", "E", "E"]
]
\end{verbatim}

Briefly explain the placement reasoning, including feasibility and newly covered targets. Then return exactly one placement.

Immediately after reasoning within \texttt{\textless think\textgreater} your reasoning here \texttt{\textless /think\textgreater}, propose \textbf{1} action with the following format:

\begin{tcblisting}{jsonlisting}
{"answer":
[
  {
    "piece_id": "<ID>",
    "anchor": [<row>, <col>],
    "rotation": <0|90|180|270>,
    "grid_after": [
      [".", ".", "..."],
      ["...", "...", "..."]
    ]
  }
]
}
\end{tcblisting}
\end{promptbox}

\begin{promptbox}{Auxiliary Grid-Spatial QA (example)}
You are solving a \textbf{single-step grid-spatial reasoning} task.

\textbf{Task name}\\
\texttt{rotate\_place}

\textbf{Canvas}
\begin{verbatim}
[
  [".", ".", ".", ".", "."],
  [".", "#", ".", ".", "."],
  [".", ".", ".", "t", "."],
  [".", ".", ".", ".", "."],
  [".", ".", ".", ".", "."]
]
\end{verbatim}

\textbf{Shape}
\begin{verbatim}
[
  ["A", "."],
  ["A", "A"]
]
\end{verbatim}

\textbf{Operation}\\
Rotate the shape by $90^\circ$ clockwise and place it with global anchor \texttt{[2, 1]}.

\textbf{Rules}
\begin{itemize}\itemsep0pt
  \item The anchor is the top-left coordinate of the transformed shape's tight bounding box.
  \item The placement must stay inside the canvas.
  \item The placement must avoid \texttt{"\#"} and existing letters.
\end{itemize}

Return the transformed placement and the resulting grid.

Immediately after reasoning within \texttt{\textless think\textgreater} your reasoning here \texttt{\textless /think\textgreater}, propose the answer with the following format:

\begin{tcblisting}{jsonlisting}
{"answer":
[
  {
    "anchor": [<row>, <col>],
    "rotation": 90,
    "legal": <true|false>,
    "solution_grid": [
      [".", ".", "..."],
      ["...", "...", "..."]
    ]
  }
]
}
\end{tcblisting}
\end{promptbox}

\clearpage

\section{Polyomino examples}
\label{app:polyomino-examples}

\begin{figure}[h]
  \centering

  \begin{subfigure}{0.48\textwidth}
    \centering
    \includegraphics[width=\linewidth]{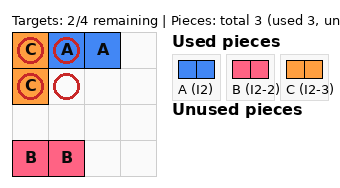}
    \caption{Level 1}
  \end{subfigure}\hfill
  \begin{subfigure}{0.48\textwidth}
    \centering
    \includegraphics[width=\linewidth]{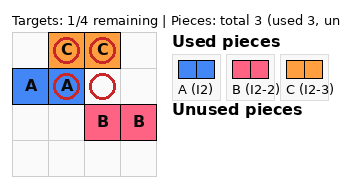}
    \caption{Level 1}
  \end{subfigure}

  \medskip

  \begin{subfigure}{0.48\textwidth}
    \centering
    \includegraphics[width=\linewidth]{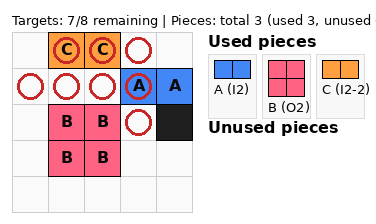}
    \caption{Level 2}
  \end{subfigure}\hfill
  \begin{subfigure}{0.48\textwidth}
    \centering
    \includegraphics[width=\linewidth]{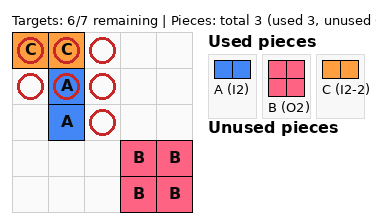}
    \caption{Level 2}
  \end{subfigure}

  \medskip

  \begin{subfigure}{0.48\textwidth}
    \centering
    \includegraphics[width=\linewidth]{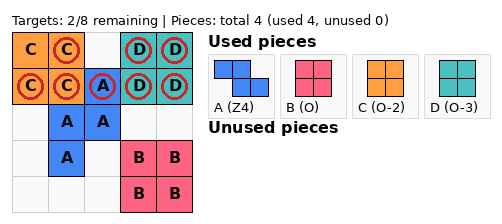}
    \caption{Level 3}
  \end{subfigure}\hfill
  \begin{subfigure}{0.48\textwidth}
    \centering
    \includegraphics[width=\linewidth]{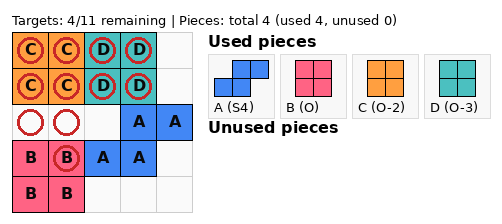}
    \caption{Level 3}
  \end{subfigure}

  \medskip

  \begin{subfigure}{0.48\textwidth}
    \centering
    \includegraphics[width=\linewidth]{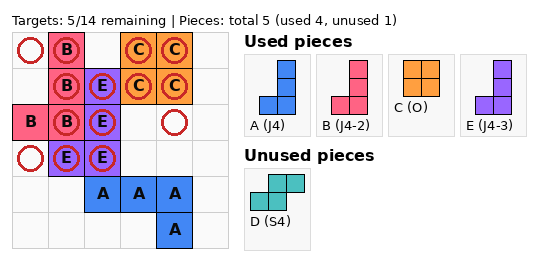}
    \caption{Level 4}
  \end{subfigure}\hfill
  \begin{subfigure}{0.48\textwidth}
    \centering
    \includegraphics[width=\linewidth]{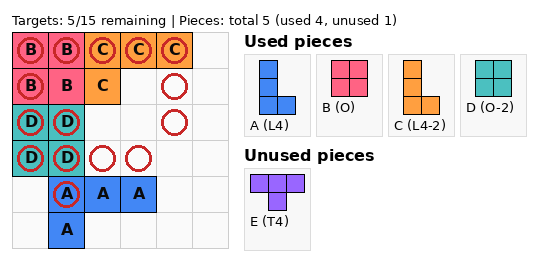}
    \caption{Level 4}
  \end{subfigure}

  \caption{Polyomino target cover examples. This shows the optimal solution found by the solver. The first three levels have two randomly positioned pieces that serve as examples. The last level has only one example.}
  \label{fig:polyomino-grid}
\end{figure}

\clearpage

\section{Additional Details}
\label{sec:models-tasks}
\subsection{Search Ablations}
\label{app:numcts-details}
\label{sec:search-ablations}

\paragraph{Task representation.}
We evaluate search methods for structured combinatorial optimization problems. The experiments use four task families: \texttt{qap}, \texttt{knapsack}, MaxSat, and \texttt{role\_assignment}. For each family, we evaluate difficulty levels 1--4.  Each instance is treated as a partial construction problem: a node in the search tree stores the current partial state, and a child node corresponds to one proposed next action.  The language model proposes actions in a structured format, which is then parsed into a domain state.  The domain checker verifies whether the output is valid JSON, whether it is structurally valid for the task, whether the partial path is feasible, and whether the node is a terminal answer.  Terminal feasible answers are scored by their normalized objective value, with all objectives converted to a maximization convention.

\paragraph{Expansion and filtering.}
When a model-based search node is expanded, the language model proposes \(16\) candidate next actions.  Each candidate receives an edge log-probability from the model.  We use the average token log-probability both as an edge score and, after exponentiation, as the prior \(P_i\).  Invalid parses are discarded.  Depending on the strategy, domain-infeasible partial actions may either be pruned immediately or allowed to remain in the tree until terminal checking.  When redundancy removal is enabled, children with the same canonical solution/state key are merged so that duplicate proposals do not occupy multiple search slots.  Terminal feasibility is always enforced in these ablations: infeasible final answers receive a negative reward rather than being counted as successful solutions.

\paragraph{Standard MCTS.}
The main method is Monte Carlo Tree Search (MCTS).  Each MCTS rollout starts at the root, repeatedly selects a child, expands an unexpanded node, and backpropagates the terminal reward.  Child selection uses a PUCT-style rule:
\[
  \mathrm{score}(i)
  =
  Q_i^{\mathrm{blend}}
  +
  c_{\mathrm{puct}} P_i
  \frac{\sqrt{N_{\mathrm{parent}}}}{1 + N_i + N_i^{\mathrm{bad}}},
\]
where \(Q_i^{\mathrm{blend}}\) is the value used for selection, \(P_i\) is the model prior, \(N_i\) is the child visit count, and \(N_i^{\mathrm{bad}}\) counts visits through incorrect or infeasible partial paths.  The exploration coefficient is \(c_{\mathrm{puct}}=5\).  Feasible terminal solutions contribute objective-based rewards to the path.  Incorrect or invalid final states receive reward \(-1\).  Depth-limited non-answer rollouts receive reward \(0\) and are down-weighted in the value estimate by the depth-discount factor \(0.25\).

\paragraph{Parent-initial-value MCTS.}
The parent-initial-value variant uses the same search procedure as the standard MCTS ablation: it still performs rollouts, uses the PUCT selection formula, maintains visit counts, and backpropagates rewards.  The difference is only in the value term used by PUCT before enough rewards have been observed.  After a node is expanded, each child receives an initial value \(I_i\) from the model score of that child.  The default source is the average token log-probability.  The parent also stores the average initial value of its children.  The value term becomes
\[
  Q_i^{\mathrm{blend}} = \lambda Q_i + (1-\lambda) I_i.
\]
In the standard MCTS ablation, \(\lambda=1.0\), so the initial value is ignored and PUCT uses the observed MCTS value \(Q_i\).  In the parent-initial-value ablation, \(\lambda=0.5\), so the selection value is an equal blend of the observed MCTS value and the model-derived initial value.  The scale parameter is \(\mu=1.0\), meaning the stored initial value is used directly.  This variant is therefore best understood as MCTS with a model-score warm start, not as beam search.

\paragraph{Beam search ablation.}
Beam search is different from the parent-initial-value MCTS run.  Instead of running PUCT rollouts and backpropagating rewards, beam search proceeds layer by layer.  At each depth, it expands every node currently in the beam, collects all non-terminal children, ranks them by cumulative path log-probability, and keeps only the top 4 nodes for the next depth.  The beam score is simply the path log-probability, i.e., the sum of edge average log-probabilities along the partial solution.  The beam run still uses the same parsing, feasibility-pruning, redundancy-removal, and terminal-feasibility options as the corresponding strategy preset, but it does not use PUCT visit counts, PUCT backpropagation, or the parent-initial-value blend.  Unlike MCTS, the beam computation is controlled by maximum depth and beam width rather than by a rollout budget.

\begin{table}[t]
\centering
\small
\begin{tabular}{p{0.22\linewidth}p{0.23\linewidth}p{0.43\linewidth}}
\hline
Ablation setting & Search mode & Main behavior \\
\hline
Standard ablation & MCTS, sequential baseline, solver reference & Uses the full strategy set.  \texttt{S1}--\texttt{S3} are MCTS variants, \texttt{S4} is a sequential non-MCTS baseline, and \texttt{S5} is the solver-reference run.  MCTS uses PUCT and reward backpropagation. \\
Parent-initial-value ablation & MCTS with parent initial values & Uses \texttt{S1}--\texttt{S3} and \texttt{S5}.  The search is still MCTS.  The only algorithmic change is that PUCT uses a \(0.5/0.5\) blend of observed value and model-derived initial value. \\
Beam-search ablation & Beam search plus solver reference & Uses \texttt{S1}--\texttt{S3} and \texttt{S5}.  Search is layer-based: expand the current beam, rank by cumulative log-probability, and keep the top 4 partial solutions.  It does not use MCTS visit counts or reward backpropagation for selection. \\
\hline
\end{tabular}
\caption{Difference between the standard MCTS run, the parent-initial-value MCTS run, and the beam-search ablation.}
\label{tab:search-mode-comparison}
\end{table}

\paragraph{Strategy presets.}
The strategy names define the ablation axis.  The same names are reused across MCTS and beam search so that the effect of pruning and duplicate removal can be compared under different search modes.

\begin{table}[t]
\centering
\small
\begin{tabular}{p{0.15\linewidth}p{0.77\linewidth}}
\hline
Strategy & Configuration \\
\hline
\texttt{S1} & Full search variant: expansion-time feasibility pruning, redundancy removal, and terminal feasibility enforcement. \\
\texttt{S2} & No expansion-time feasibility pruning; infeasible partial paths may continue until terminal checking; redundancy removal remains enabled. \\
\texttt{S3} & No expansion-time feasibility pruning and no redundancy removal; infeasible partial paths may continue until terminal checking. \\
\texttt{S4} & Sequential non-MCTS baseline used only in the standard ablation.  It samples one path at a time without PUCT tree selection. \\
\texttt{S5} & Solver-reference search with a depth-first frontier.  This run enumerates solver-generated actions rather than sampling language-model actions, and is used mainly to provide reference solutions for evaluation. \\
\hline
\end{tabular}
\caption{Strategy presets used in the ablations.}
\label{tab:ablation-strategy-presets}
\end{table}

\paragraph{Default hyperparameters.}
The default experimental settings are summarized below.  Unless otherwise stated, these values are shared across the ablations.

\begin{table}[H]
\centering
\small
\begin{tabular}{p{0.34\linewidth}p{0.58\linewidth}}
\hline
Hyperparameter & Default value \\
\hline
Base language model & \texttt{meta-llama/Llama-3.2-3B-Instruct} \\
Problem families & \texttt{qap}, \texttt{knapsack}, MaxSat, \texttt{role\_assignment} \\
Difficulty levels & 1--4 \\
Maximum search depth & 6 for all difficulty levels \\
MCTS rollout budget & 16 rollouts per instance and strategy \\
Solver-reference budget & 500 solver-reference rollouts per instance \\
Children per expansion & 20 \\
PUCT coefficient \(c_{\mathrm{puct}}\) & 5 for MCTS variants \\
Beam width & 4 for the beam-search ablation \\
Generation settings & temperature 0.7, top-\(p=0.95\), maximum 768 new tokens \\
Repeats and seeds & one repeat with seed list \(\{0\}\) \\
Terminal handling & terminal feasibility enforced; incorrect or invalid final rewards set to \(-1\) \\
Depth-limited rollouts & non-answer depth-limit reward 0, with depth-based value discount 0.25 \\
Parallelism & GPUs 0--3 with 4 workers per GPU; solver reference uses 4 CPU workers \\
\hline
\end{tabular}
\caption{Default hyperparameters used by the ablation experiments.}
\label{tab:ablation-hyperparameters}
\end{table}

\paragraph{Reporting and smartness metrics.}
All smartness quantities are computed as post-processing from the saved ablation metrics; this step does not rerun search.  The same reporting procedure is applied to the standard MCTS ablation, the parent-initial-value MCTS ablation, and the beam-search ablation.

The report forms reference pools of unique terminal solutions.  \texttt{solverref} contains solutions found by \texttt{S5}, while \texttt{unionref} contains all unique terminal solutions found by any strategy.  The primary report setting uses \texttt{unionref}, relative-gap threshold \(\epsilon=0.05\), and \(\delta=0.10\), so the target success probability is \(1-\delta=0.90\).  A solution is counted as \(\epsilon\)-good if it is feasible and within \(5\%\) relative gap of the best-known normalized objective for that instance.

For each problem and difficulty level, the report estimates an effective difficulty \(D_{\alpha,\epsilon}\) from the fraction of reference-pool solutions that are \(\epsilon\)-good.  It then estimates \(\tau\), the rollout budget or wall-clock time required for a strategy to reach the target success probability.  The main smartness score is
\[
  \mathrm{smartness} = \frac{D_{\alpha,\epsilon}}{\tau},
\]
so larger values mean that a strategy makes more progress per rollout or per second.  The enhanced report also exports pass@\(\{8,16,32,64\}\), an effective-branching proxy, invalid proposal rates, duplicate proposal rates, feasible-terminal rates, and exact-optimal rates.

\subsection{Policy Training Implementation for RFT}
\label{sec:policy-training-implementation}

\paragraph{Training objective.}
Policy training is performed as supervised fine-tuning of the base causal language model.  The goal is to teach the model to produce the next search action from a partial problem state.  Each training example contains a system prompt, a user prompt describing the current partial state, and an assistant target containing the next action and its accompanying reasoning.  The model is trained with the standard autoregressive next-token objective on the chat-formatted sequence.  Padding tokens are masked from the loss.

\paragraph{Constructing supervision from search.}
Training data are generated from successful search trajectories.  For each problem instance, search first builds a tree of partial solutions.  Terminal nodes are filtered to keep valid terminal answers, and the best terminal node is selected according to the task objective.  The path from the root to this terminal node is then decomposed into one supervised example per edge.  If a trajectory contains \(d\) actions, it contributes \(d\) training examples.  For an edge \(s_t \rightarrow s_{t+1}\), the input prompt is built from the partial state \(s_t\), and the target is the exact action text stored at \(s_{t+1}\).  This turns a complete discovered solution into step-level imitation data.

Only terminal answer trajectories are used for the default next-step policy data.  If a search run does not find a valid terminal solution for an instance, that instance does not contribute a supervised trajectory.  When multiple terminal solutions are available, the selected trajectory is the one with the best objective value, with ties broken randomly.  The saved examples also contain metadata such as node depth, visit count, and value estimates, but the policy fine-tuning objective uses only the system prompt, user prompt, and next-step target text.

\paragraph{Training variants.}
We use the same fine-tuning procedure for the MCTS-generated and solver-style datasets.  In the MCTS-generated setting, the demonstrations come from the best terminal paths found by the MCTS search policy.  In the solver-style setting, the demonstrations come from a more direct solver-guided trajectory generator.  A no-training baseline is also evaluated by running the base model directly under the same evaluation procedure.  Thus, the training comparison isolates whether policy fine-tuning on search-generated trajectories improves subsequent search and solution quality.

\paragraph{Tokenization and formatting.}
Examples are formatted with the chat template of the base model tokenizer.  The formatted conversation contains a system message, a user message, and an assistant message.  The assistant message is filled with the selected next-step target during training.  Sequences are padded or truncated to a maximum length of 2048 tokens.  The tokenizer uses the end-of-sequence token as the padding token when no dedicated padding token is available.

\paragraph{Optimization.}
Fine-tuning is implemented with the Hugging Face causal language-model training stack.  The default runs fine-tune the base model directly rather than using parameter-efficient adapters.  Mixed precision is selected automatically: bfloat16 is used when supported by the GPU, otherwise float16 is used on CUDA.  The optimizer is AdamW with a linear learning-rate schedule, warmup ratio 0.05, and weight decay 0.01.  Intermediate checkpoints are not saved; the final model and tokenizer are saved after training.

\begin{table}[t]
\centering
\small
\begin{tabular}{p{0.34\linewidth}p{0.58\linewidth}}
\hline
Training setting & Default value \\
\hline
Base model & \texttt{meta-llama/Llama-3.2-3B-Instruct} \\
Training objective & causal language-model supervised fine-tuning \\
Training examples & one next-step target per edge on a selected successful trajectory \\
Trajectory selection & best valid terminal solution found by search \\
Learning rate & \(5\times 10^{-6}\) \\
Epochs & 3 \\
Per-device batch size & 4 \\
Gradient accumulation & 16 steps \\
Maximum sequence length & 2048 tokens \\
Optimizer & AdamW \\
Learning-rate schedule & linear schedule with 0.05 warmup ratio \\
Weight decay & 0.01 \\
Precision & bfloat16 when available; otherwise float16 on CUDA \\
Checkpointing & final checkpoint only \\
Evaluation during training & once per epoch when a validation set is provided \\
\hline
\end{tabular}
\caption{Default supervised fine-tuning settings for the policy model.}
\label{tab:policy-training-hyperparameters}
\end{table}

\paragraph{Evaluation protocol.}
When validation data are provided, the model is evaluated before fine-tuning, at the end of each epoch, and once more after training.  After fine-tuning, the resulting policy is used as the proposal model in the same search procedure used for data generation and evaluation.  This keeps the comparison focused on the effect of the learned proposal policy rather than changes in the downstream search algorithm.

\paragraph{Optional adapter path.}
The training code also supports an optional low-rank adapter mode.  When enabled, LoRA adapters are attached to the query and value projection matrices with rank \(r=8\), scaling \(\alpha=32\), dropout 0.05, and no bias terms.  This path is useful for memory-constrained runs, but the default training configuration uses direct fine-tuning of the base causal language model.

\clearpage
\subsection{Policy training GRPO}

\label{sec:rl-training-implementation}

\begin{table}[H]
\centering
\small
\begin{tabular}{p{0.36\linewidth}p{0.54\linewidth}}
\hline
Training setting & Value \\
\hline
Base model & \texttt{Qwen/Qwen2.5-3B-Instruct} by default; task wrappers also support \texttt{Qwen/Qwen2.5-7B-Instruct} \\
Training algorithm & GRPO in VERL; DAPO-style runs use the GRPO estimator with asymmetric clipping \\
Validation file & level-1 test Parquet during training; full level 1--4 evaluation after merging \\
Train batch size & 128 prompts \\
Rollouts per prompt & 8 responses \\
Maximum prompt length & 1256 tokens for Role Assignment, MaxSat, and Polyomino Target Cover \\
Maximum response length & 2048 tokens \\
Epochs & 1 for the main task runs \\
Data order & \texttt{data.shuffle=false} \\
Actor learning rate & \(1\times10^{-6}\) \\
Optimizer backend & VERL actor optimizer with FSDP training \\
PPO mini-batch size & 128 \\
PPO micro-batch size & 4 per GPU \\
KL loss & enabled, coefficient \(0.001\), \texttt{low\_var\_kl} \\
KL controller coefficient & \(0.001\) \\
Gradient checkpointing & enabled \\
Remove padding optimization & enabled \\
Rollout engine & vLLM \\
Rollout tensor parallelism & 2 during training \\
Rollout GPU memory utilization & 0.4 \\
Hardware configuration & 1 node, 4 GPUs per node (4A100) \\
Checkpoint saving & effectively final checkpoint only; latest FSDP actor checkpoint is merged for inference \\
\hline
\end{tabular}
\caption{Main RL training hyperparameters for the synthetic reasoning tasks.}
\label{tab:rl-training-hyperparameters}
\end{table}

\clearpage

\section{Online RL results}
\label{sec:online-rl-results}

\subsection{Spatial reasoning}
\label{sec:spatial-reasoning-results}

\begin{table}[H]
\centering
\begin{tabular}{l*{8}{c}}
\toprule
 & \multicolumn{2}{c}{\textbf{Level 1}}  &  \multicolumn{2}{c}{\textbf{Level 2}}  &  \multicolumn{2}{c}{\textbf{Level 3}}  &  \multicolumn{2}{c}{\textbf{Level 4}} \\
\cmidrule(lr){2-3}
\cmidrule(lr){4-5}
\cmidrule(lr){6-7}
\cmidrule(lr){8-9}
\textbf{pass@k} & \textbf{Base} & \textbf{Trained} & \textbf{Base} & \textbf{Trained} & \textbf{Base} & \textbf{Trained} & \textbf{Base} & \textbf{Trained} \\
\midrule
\textbf{Pass@1} & 0.006 & 0.015 & 0.003 & 0.038 & 0.008 & 0.051 & 0.002 & 0.009 \\
\textbf{Pass@2} & 0.013 & 0.030 & 0.006 & 0.073 & 0.015 & 0.093 & 0.003 & 0.017 \\
\textbf{Pass@3} & 0.018 & 0.044 & 0.009 & 0.105 & 0.022 & 0.126 & 0.004 & 0.024 \\
\textbf{Pass@4} & 0.024 & 0.056 & 0.012 & 0.134 & 0.028 & 0.154 & 0.006 & 0.031 \\
\textbf{Pass@5} & 0.029 & 0.068 & 0.015 & 0.161 & 0.035 & 0.177 & 0.007 & 0.037 \\
\textbf{Pass@6} & 0.034 & 0.079 & 0.018 & 0.185 & 0.041 & 0.196 & 0.009 & 0.042 \\
\textbf{Pass@7} & 0.039 & 0.090 & 0.021 & 0.207 & 0.047 & 0.212 & 0.011 & 0.047 \\
\textbf{Pass@8} & 0.044 & 0.100 & 0.024 & 0.228 & 0.053 & 0.225 & 0.012 & 0.052 \\
\bottomrule
\end{tabular}
\caption{Polyomino (Base vs Trained) --- Progressive training L1--L4.}
\end{table}

\begin{table}[H]
\centering
\resizebox{0.95\linewidth}{!}{%
\begin{tabular}{l*{8}{c}}
\toprule
 & \multicolumn{4}{c}{\textbf{Level 1}}  &  \multicolumn{4}{c}{\textbf{Level 2}} \\
\cmidrule(lr){2-5}
\cmidrule(lr){6-9}
\textbf{Capability}  & \textbf{Base} & \textbf{L1--L4} & \textbf{L1 Only} & \textbf{L4 Only} & \textbf{Base} & \textbf{L1--L4} & \textbf{L1 Only} & \textbf{L4 Only} \\
\midrule
\textbf{Translate} & 0.520 & 0.680 & 0.320 & 0.360 & 0.240 & 0.320 & 0.280 & 0.240 \\
\textbf{Rotate \& Place} & 0.200 & 0.280 & 0.080 & 0.120 & 0.160 & 0.280 & 0.240 & 0.240 \\
\textbf{Reflect \& Place} & 0.360 & 0.440 & 0.320 & 0.360 & 0.160 & 0.280 & 0.200 & 0.200 \\
\textbf{Compose (A,B,C)} & 0.080 & 0.200 & 0.040 & 0.040 & 0.040 & 0.120 & 0.040 & 0.160 \\
\textbf{Equivalence} & 0.520 & 0.960 & 0.760 & 0.520 & 0.440 & 0.840 & 0.880 & 0.440 \\
\textbf{Symmetry} & 0.520 & 0.760 & 0.400 & 0.600 & 0.600 & 0.600 & 0.520 & 0.560 \\
\textbf{Overlap IoU} & 0.200 & 0.120 & 0.120 & 0.120 & 0.120 & 0.040 & 0.040 & 0.080 \\
\textbf{Coverage} & 0.160 & 0.240 & 0.200 & 0.240 & 0.200 & 0.240 & 0.280 & 0.280 \\
\bottomrule
\end{tabular}%
}
\caption{Spatial 2D capabilities --- Pass@8 at Level 1 and Level 2.}
\end{table}

\begin{table}[H]
\centering
\resizebox{0.95\linewidth}{!}{%
\begin{tabular}{l*{8}{c}}
\toprule
 & \multicolumn{4}{c}{\textbf{Level 1}}  &  \multicolumn{4}{c}{\textbf{Level 2}} \\
\cmidrule(lr){2-5}
\cmidrule(lr){6-9}
\textbf{Capability}  & \textbf{Base} & \textbf{L1--L4} & \textbf{L1 Only} & \textbf{L4 Only} & \textbf{Base} & \textbf{L1--L4} & \textbf{L1 Only} & \textbf{L4 Only} \\
\midrule
\textbf{Translate} & 0.140 & 0.205 & 0.070 & 0.090 & 0.090 & 0.125 & 0.085 & 0.095 \\
\textbf{Rotate \& Place} & 0.050 & 0.085 & 0.010 & 0.035 & 0.075 & 0.070 & 0.080 & 0.085 \\
\textbf{Reflect \& Place} & 0.190 & 0.190 & 0.145 & 0.165 & 0.055 & 0.075 & 0.045 & 0.045 \\
\textbf{Compose (A,B,C)} & 0.020 & 0.035 & 0.005 & 0.005 & 0.015 & 0.035 & 0.010 & 0.030 \\
\textbf{Equivalence} & 0.295 & 0.605 & 0.630 & 0.325 & 0.325 & 0.505 & 0.680 & 0.315 \\
\textbf{Symmetry} & 0.100 & 0.180 & 0.125 & 0.100 & 0.190 & 0.170 & 0.175 & 0.130 \\
\textbf{Overlap IoU} & 0.025 & 0.020 & 0.015 & 0.015 & 0.015 & 0.005 & 0.010 & 0.015 \\
\textbf{Coverage} & 0.040 & 0.055 & 0.035 & 0.060 & 0.065 & 0.065 & 0.065 & 0.065 \\
\bottomrule
\end{tabular}%
}
\caption{Spatial 2D capabilities --- Pass@1 at Level 1 and Level 2.}
\end{table}
\clearpage

\subsection{Math problems}
\label{app:math-problems}

\begin{table}[h]
\centering
\small
\setlength{\tabcolsep}{6pt}
\renewcommand{\arraystretch}{1.2}

\begin{tabular}{l *{6}{c}}
\toprule
& \multicolumn{2}{c}{\textbf{Llama 3.2 3B}}  & \multicolumn{2}{c}{\textbf{Qwen2.5 3B}}  & \multicolumn{2}{c}{\textbf{Qwen2.5 7B}} \\
\cmidrule(lr){2-3}\cmidrule(lr){4-5}\cmidrule(lr){6-7}
\textbf{pass@k} & \textbf{Base} & \textbf{Trained} & \textbf{Base} & \textbf{Trained} & \textbf{Base} & \textbf{Trained} \\
\midrule
\textbf{pass@1} & 0.163 & 0.197 & 0.344 & 0.378 & 0.481 & 0.503 \\
\textbf{pass@2} & 0.252 & 0.296 & 0.456 & 0.487 & 0.605 & 0.609 \\
\textbf{pass@3} & 0.312 & 0.367 & 0.523 & 0.553 & 0.671 & 0.662 \\
\textbf{pass@4} & 0.357 & 0.425 & 0.567 & 0.597 & 0.713 & 0.696 \\
\textbf{pass@5} & 0.390 & 0.473 & 0.598 & 0.629 & 0.741 & 0.721 \\
\textbf{pass@6} & 0.415 & 0.513 & 0.621 & 0.655 & 0.758 & 0.742 \\
\textbf{pass@7} & 0.434 & 0.547 & 0.637 & 0.678 & 0.769 & 0.759 \\
\textbf{pass@8} & 0.450 & 0.575 & 0.650 & 0.700 & 0.775 & 0.775 \\
\bottomrule
\end{tabular}
\caption{AMC'23: pass@k for each model (\textbf{Base} vs \textbf{Trained}), $k\in\{1,\dots,8\}$. Model was trained on the joint dataset of Role Assignment and Constrained MaxSAT. The results dataset still follows a sequential order (Level 1 to 4, but each level shuffled).}
\end{table}
\vspace{1cm}

\begin{table}[h]
\centering
\small
\setlength{\tabcolsep}{6pt}
\renewcommand{\arraystretch}{1.2}

\begin{tabular}{l *{6}{c}}
\toprule
& \multicolumn{2}{c}{\textbf{Llama 3.2 3B}}  & \multicolumn{2}{c}{\textbf{Qwen2.5 3B}}  & \multicolumn{2}{c}{\textbf{Qwen2.5 7B}} \\
\cmidrule(lr){2-3}\cmidrule(lr){4-5}\cmidrule(lr){6-7}
\textbf{pass@k} & \textbf{Base} & \textbf{Trained} & \textbf{Base} & \textbf{Trained} & \textbf{Base} & \textbf{Trained} \\
\midrule
\textbf{pass@1} & 0.025 & 0.042 & 0.058 & 0.046 & 0.096 & 0.113 \\
\textbf{pass@2} & 0.046 & 0.075 & 0.102 & 0.076 & 0.131 & 0.144 \\
\textbf{pass@3} & 0.064 & 0.101 & 0.136 & 0.098 & 0.150 & 0.165 \\
\textbf{pass@4} & 0.079 & 0.121 & 0.161 & 0.114 & 0.164 & 0.178 \\
\textbf{pass@5} & 0.089 & 0.137 & 0.182 & 0.129 & 0.174 & 0.186 \\
\textbf{pass@6} & 0.096 & 0.149 & 0.200 & 0.142 & 0.183 & 0.192 \\
\textbf{pass@7} & 0.100 & 0.158 & 0.217 & 0.154 & 0.192 & 0.196 \\
\textbf{pass@8} & 0.100 & 0.167 & 0.233 & 0.167 & 0.200 & 0.200 \\
\bottomrule
\end{tabular}
\caption{AIME'24: pass@k for each model (\textbf{Base} vs \textbf{Trained}), $k\in\{1,\dots,8\}$. Model was trained on the joint dataset of Role Assignment and Constrained MaxSAT. The results dataset still follows a sequential order (Level 1 to 4, but each level shuffled).}
\end{table}

\vspace{1cm}

\begin{table}[h]
\centering
\small
\setlength{\tabcolsep}{6pt}
\renewcommand{\arraystretch}{1.2}

\begin{tabular}{l *{6}{c}}
\toprule
& \multicolumn{2}{c}{\textbf{Llama 3.2 3B}}  & \multicolumn{2}{c}{\textbf{Qwen2.5 3B}}  & \multicolumn{2}{c}{\textbf{Qwen2.5 7B}} \\
\cmidrule(lr){2-3}\cmidrule(lr){4-5}\cmidrule(lr){6-7}
\textbf{pass@k} & \textbf{Base} & \textbf{Trained} & \textbf{Base} & \textbf{Trained} & \textbf{Base} & \textbf{Trained} \\
\midrule
\textbf{pass@1} & 0.004 & 0.013 & 0.021 & 0.037 & 0.079 & 0.087 \\
\textbf{pass@2} & 0.008 & 0.024 & 0.035 & 0.068 & 0.120 & 0.126 \\
\textbf{pass@3} & 0.013 & 0.034 & 0.043 & 0.092 & 0.147 & 0.153 \\
\textbf{pass@4} & 0.017 & 0.043 & 0.050 & 0.112 & 0.169 & 0.171 \\
\textbf{pass@5} & 0.021 & 0.051 & 0.054 & 0.128 & 0.189 & 0.183 \\
\textbf{pass@6} & 0.025 & 0.057 & 0.058 & 0.142 & 0.206 & 0.190 \\
\textbf{pass@7} & 0.029 & 0.062 & 0.062 & 0.154 & 0.221 & 0.196 \\
\textbf{pass@8} & 0.033 & 0.067 & 0.067 & 0.167 & 0.233 & 0.200 \\
\bottomrule
\end{tabular}
\caption{AIME'25: pass@k for each model (\textbf{Base} vs \textbf{Trained}), $k\in\{1,\dots,8\}$. Model was trained on the joint dataset of Role Assignment and Constrained MaxSAT. The results dataset still follows a sequential order (Level 1 to 4, but each level shuffled).}
\end{table}

\clearpage

\section{Optimization problems}
\label{app:optimization-results}

\begin{figure}[h]
  \centering
  \captionsetup{skip=4pt, belowskip=6pt}

  \begin{minipage}[t]{.6\textwidth}\centering
    \includegraphics[width=\linewidth]{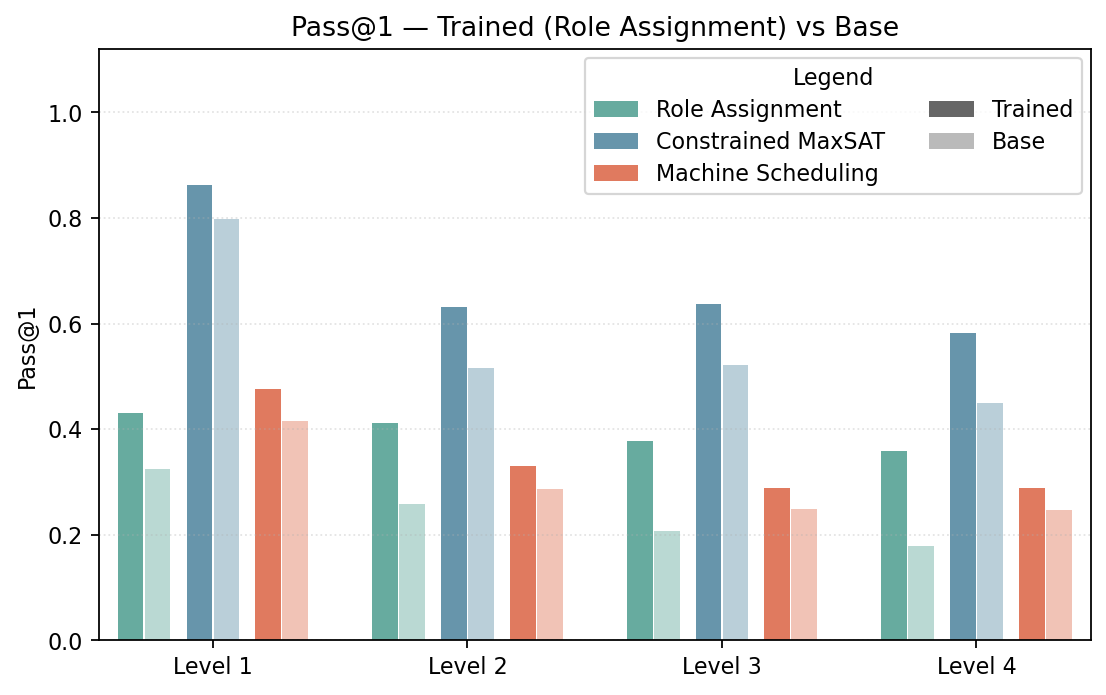}
    \captionof{figure}{(a) Qwen2.5-3B-Instruct}
  \end{minipage}

  \begin{minipage}[t]{.6\textwidth}\centering
    \includegraphics[width=\linewidth]{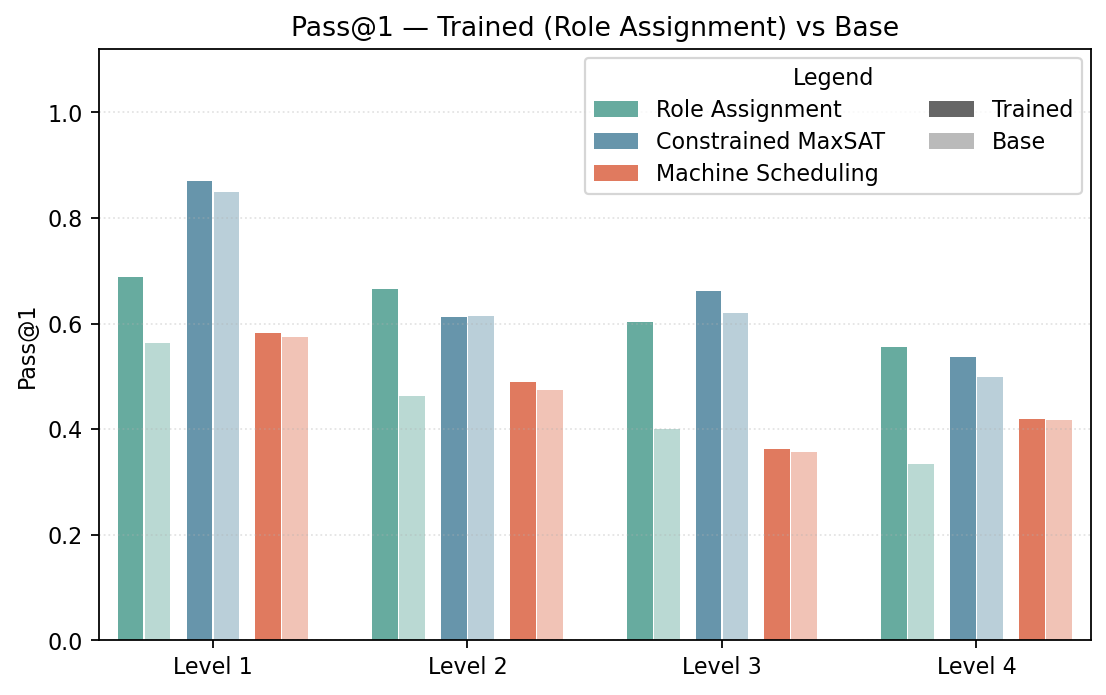}
    \captionof{figure}{(b) Qwen2.5-7B-Instruct}
  \end{minipage}

  \begin{minipage}[t]{.6\textwidth}\centering
    \includegraphics[width=\linewidth]{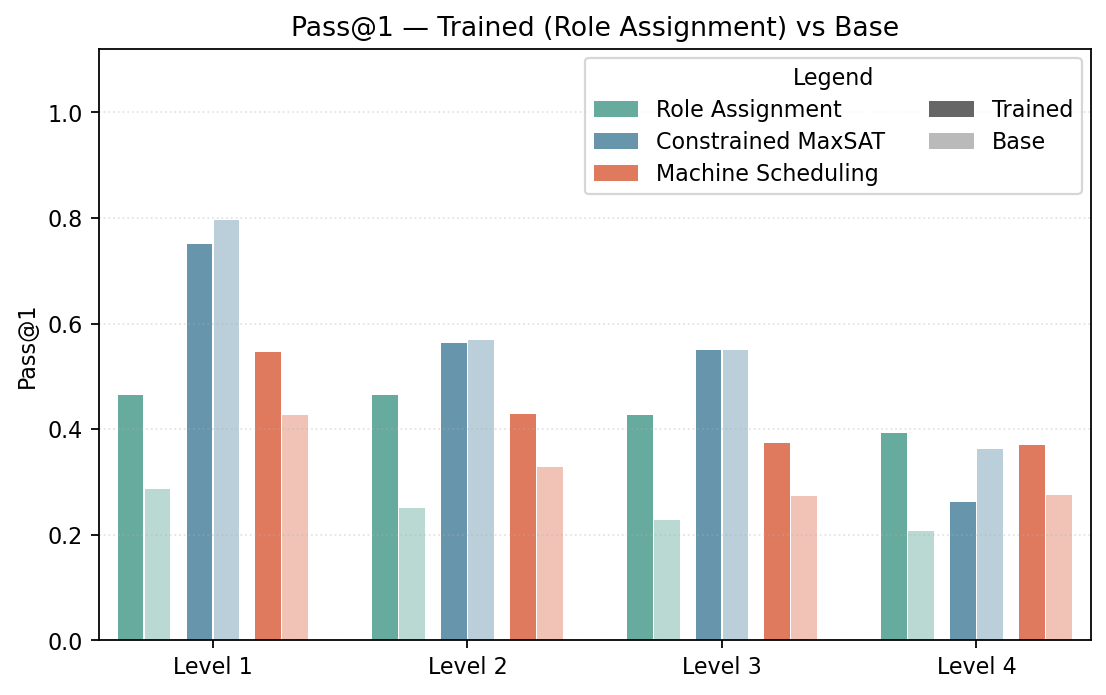}
    \captionof{figure}{(c) Llama3.2-3B-Instruct}
  \end{minipage}

  \caption{Test-set pass@1 accuracy on Role Assignment, Constrained MaxSAT, and Machine Scheduling, for models fine-tuned on Role Assignment. Each panel reports results for a different instruction-tuned base model.}
\end{figure}

\clearpage

\subsection{Qwen2.5-3B}
\label{app:qwen-25-3b-results}

\begin{table}[h]
\centering
\small
\setlength{\tabcolsep}{6pt}
\renewcommand{\arraystretch}{1.2}

\begin{tabular}{l *{8}{c}}
\toprule
& \multicolumn{2}{c}{\textbf{Level 1}}  & \multicolumn{2}{c}{\textbf{Level 2}}  & \multicolumn{2}{c}{\textbf{Level 3}}  & \multicolumn{2}{c}{\textbf{Level 4}} \\
\cmidrule(lr){2-3}\cmidrule(lr){4-5}\cmidrule(lr){6-7}\cmidrule(lr){8-9}
\textbf{pass@k} & \textbf{Base} & \textbf{Trained} & \textbf{Base} & \textbf{Trained} & \textbf{Base} & \textbf{Trained} & \textbf{Base} & \textbf{Trained} \\
\midrule
\textbf{Pass@1} & 0.325 & 0.445 & 0.259 & 0.400 & 0.207 & 0.352 & 0.178 & 0.311 \\
\textbf{Pass@2} & 0.529 & 0.651 & 0.440 & 0.598 & 0.361 & 0.535 & 0.317 & 0.491 \\
\textbf{Pass@3} & 0.661 & 0.765 & 0.569 & 0.713 & 0.479 & 0.646 & 0.427 & 0.607 \\
\textbf{Pass@4} & 0.749 & 0.834 & 0.665 & 0.786 & 0.569 & 0.721 & 0.516 & 0.686 \\
\textbf{Pass@5} & 0.809 & 0.879 & 0.736 & 0.835 & 0.641 & 0.774 & 0.588 & 0.743 \\
\textbf{Pass@6} & 0.852 & 0.910 & 0.791 & 0.871 & 0.699 & 0.813 & 0.649 & 0.786 \\
\textbf{Pass@7} & 0.883 & 0.931 & 0.833 & 0.897 & 0.746 & 0.844 & 0.699 & 0.819 \\
\textbf{Pass@8} & 0.907 & 0.947 & 0.867 & 0.917 & 0.785 & 0.869 & 0.741 & 0.844 \\
\bottomrule
\end{tabular}
\caption{Role Assignment: Results by Level and Setting (Base/Trained) for pass@k, $k\in\{1,\dots,8\}$. Training used Qwen2.5-3B-Instruct.}
\end{table}

\vspace{0.5cm}

\begin{table}[h]
\centering
\small
\setlength{\tabcolsep}{6pt}
\renewcommand{\arraystretch}{1.2}

\begin{tabular}{l *{8}{c}}
\toprule
& \multicolumn{2}{c}{\textbf{Level 1}}  & \multicolumn{2}{c}{\textbf{Level 2}}  & \multicolumn{2}{c}{\textbf{Level 3}}  & \multicolumn{2}{c}{\textbf{Level 4}} \\
\cmidrule(lr){2-3}\cmidrule(lr){4-5}\cmidrule(lr){6-7}\cmidrule(lr){8-9}
\textbf{pass@k} & \textbf{Base} & \textbf{Trained} & \textbf{Base} & \textbf{Trained} & \textbf{Base} & \textbf{Trained} & \textbf{Base} & \textbf{Trained} \\
\midrule
\textbf{Pass@1} & 0.799 & 0.912 & 0.516 & 0.674 & 0.521 & 0.683 & 0.451 & 0.629 \\
\textbf{Pass@2} & 0.908 & 0.965 & 0.720 & 0.825 & 0.713 & 0.824 & 0.649 & 0.780 \\
\textbf{Pass@3} & 0.943 & 0.982 & 0.815 & 0.884 & 0.800 & 0.884 & 0.752 & 0.848 \\
\textbf{Pass@4} & 0.961 & 0.990 & 0.866 & 0.915 & 0.847 & 0.917 & 0.812 & 0.888 \\
\textbf{Pass@5} & 0.972 & 0.994 & 0.898 & 0.934 & 0.876 & 0.938 & 0.851 & 0.914 \\
\textbf{Pass@6} & 0.980 & 0.997 & 0.919 & 0.947 & 0.896 & 0.951 & 0.877 & 0.932 \\
\textbf{Pass@7} & 0.985 & 0.999 & 0.935 & 0.956 & 0.910 & 0.960 & 0.895 & 0.945 \\
\textbf{Pass@8} & 0.989 & 1.000 & 0.948 & 0.962 & 0.922 & 0.966 & 0.908 & 0.956 \\
\bottomrule
\end{tabular}
\caption{Constrained MaxSAT: Results by Level and Setting (Base/Trained) for pass@k, $k\in\{1,\dots,8\}$. Training used Qwen2.5-3B-Instruct.}
\end{table}

\vspace{0.5cm}

\begin{table}[h]
\centering
\small
\setlength{\tabcolsep}{6pt}
\renewcommand{\arraystretch}{1.2}

\begin{tabular}{l *{8}{c}}
\toprule
& \multicolumn{2}{c}{\textbf{Level 1}}  & \multicolumn{2}{c}{\textbf{Level 2}}  & \multicolumn{2}{c}{\textbf{Level 3}}  & \multicolumn{2}{c}{\textbf{Level 4}} \\
\cmidrule(lr){2-3}\cmidrule(lr){4-5}\cmidrule(lr){6-7}\cmidrule(lr){8-9}
\textbf{pass@k} & \textbf{Base} & \textbf{Trained} & \textbf{Base} & \textbf{Trained} & \textbf{Base} & \textbf{Trained} & \textbf{Base} & \textbf{Trained} \\
\midrule
\textbf{Pass@1} & 0.416 & 0.479 & 0.287 & 0.325 & 0.250 & 0.278 & 0.248 & 0.283 \\
\textbf{Pass@2} & 0.629 & 0.659 & 0.465 & 0.491 & 0.413 & 0.428 & 0.412 & 0.434 \\
\textbf{Pass@3} & 0.749 & 0.746 & 0.582 & 0.586 & 0.525 & 0.518 & 0.526 & 0.525 \\
\textbf{Pass@4} & 0.820 & 0.797 & 0.662 & 0.647 & 0.606 & 0.579 & 0.608 & 0.586 \\
\textbf{Pass@5} & 0.866 & 0.832 & 0.718 & 0.691 & 0.667 & 0.622 & 0.669 & 0.631 \\
\textbf{Pass@6} & 0.897 & 0.858 & 0.760 & 0.724 & 0.714 & 0.657 & 0.716 & 0.666 \\
\textbf{Pass@7} & 0.918 & 0.879 & 0.793 & 0.749 & 0.751 & 0.684 & 0.751 & 0.695 \\
\textbf{Pass@8} & 0.934 & 0.896 & 0.819 & 0.770 & 0.781 & 0.708 & 0.780 & 0.718 \\
\bottomrule
\end{tabular}
\caption{Machine Scheduling: Results by Level and Setting (Base/Trained) for pass@k, $k\in\{1,\dots,8\}$. Training used Qwen2.5-3B-Instruct.}
\end{table}

\clearpage

\subsection{Qwen2.5-7B}
\label{app:qwen-25-7b-results}

\begin{table}[h]
\centering
\small
\setlength{\tabcolsep}{6pt}
\renewcommand{\arraystretch}{1.2}

\begin{tabular}{l *{8}{c}}
\toprule
& \multicolumn{2}{c}{\textbf{Level 1}}  & \multicolumn{2}{c}{\textbf{Level 2}}  & \multicolumn{2}{c}{\textbf{Level 3}}  & \multicolumn{2}{c}{\textbf{Level 4}} \\
\cmidrule(lr){2-3}\cmidrule(lr){4-5}\cmidrule(lr){6-7}\cmidrule(lr){8-9}
\textbf{pass@k} & \textbf{Base} & \textbf{Trained} & \textbf{Base} & \textbf{Trained} & \textbf{Base} & \textbf{Trained} & \textbf{Base} & \textbf{Trained} \\
\midrule
\textbf{Pass@1} & 0.564 & 0.751 & 0.463 & 0.704 & 0.400 & 0.660 & 0.334 & 0.599 \\
\textbf{Pass@2} & 0.756 & 0.777 & 0.671 & 0.744 & 0.608 & 0.704 & 0.528 & 0.659 \\
\textbf{Pass@3} & 0.845 & 0.786 & 0.780 & 0.761 & 0.727 & 0.723 & 0.651 & 0.686 \\
\textbf{Pass@4} & 0.893 & 0.792 & 0.843 & 0.771 & 0.802 & 0.737 & 0.734 & 0.704 \\
\textbf{Pass@5} & 0.923 & 0.799 & 0.884 & 0.779 & 0.852 & 0.747 & 0.792 & 0.718 \\
\textbf{Pass@6} & 0.943 & 0.804 & 0.911 & 0.786 & 0.885 & 0.755 & 0.835 & 0.729 \\
\textbf{Pass@7} & 0.957 & 0.809 & 0.930 & 0.792 & 0.908 & 0.761 & 0.866 & 0.738 \\
\textbf{Pass@8} & 0.967 & 0.813 & 0.943 & 0.797 & 0.925 & 0.767 & 0.890 & 0.747 \\
\bottomrule
\end{tabular}
\caption{Role Assignment: Results by Level and Setting (Base/Trained) for pass@k, $k\in\{1,\dots,8\}$. Training used Qwen2.5-7B-Instruct.}
\end{table}

\vspace{0.5cm}

\begin{table}[h]
\centering
\small
\setlength{\tabcolsep}{6pt}
\renewcommand{\arraystretch}{1.2}

\begin{tabular}{l *{8}{c}}
\toprule
& \multicolumn{2}{c}{\textbf{Level 1}}  & \multicolumn{2}{c}{\textbf{Level 2}}  & \multicolumn{2}{c}{\textbf{Level 3}}  & \multicolumn{2}{c}{\textbf{Level 4}} \\
\cmidrule(lr){2-3}\cmidrule(lr){4-5}\cmidrule(lr){6-7}\cmidrule(lr){8-9}
\textbf{pass@k} & \textbf{Base} & \textbf{Trained} & \textbf{Base} & \textbf{Trained} & \textbf{Base} & \textbf{Trained} & \textbf{Base} & \textbf{Trained} \\
\midrule
\textbf{Pass@1} & 0.849 & 0.961 & 0.614 & 0.808 & 0.620 & 0.813 & 0.498 & 0.793 \\
\textbf{Pass@2} & 0.936 & 0.986 & 0.772 & 0.890 & 0.793 & 0.905 & 0.688 & 0.902 \\
\textbf{Pass@3} & 0.961 & 0.993 & 0.837 & 0.923 & 0.869 & 0.937 & 0.783 & 0.941 \\
\textbf{Pass@4} & 0.972 & 0.996 & 0.872 & 0.942 & 0.911 & 0.953 & 0.837 & 0.962 \\
\textbf{Pass@5} & 0.978 & 0.998 & 0.892 & 0.954 & 0.936 & 0.963 & 0.871 & 0.973 \\
\textbf{Pass@6} & 0.982 & 0.999 & 0.906 & 0.963 & 0.952 & 0.969 & 0.895 & 0.981 \\
\textbf{Pass@7} & 0.984 & 1.000 & 0.916 & 0.970 & 0.962 & 0.973 & 0.912 & 0.986 \\
\textbf{Pass@8} & 0.986 & 1.000 & 0.924 & 0.976 & 0.968 & 0.976 & 0.925 & 0.989 \\
\bottomrule
\end{tabular}
\caption{Constrained MaxSAT: Results by Level and Setting (Base/Trained) for pass@k, $k\in\{1,\dots,8\}$. Training used Qwen2.5-7B-Instruct.}
\end{table}

\vspace{0.5cm}

\begin{table}[h]
\centering
\small
\setlength{\tabcolsep}{6pt}
\renewcommand{\arraystretch}{1.2}

\begin{tabular}{l *{8}{c}}
\toprule
& \multicolumn{2}{c}{\textbf{Level 1}}  & \multicolumn{2}{c}{\textbf{Level 2}}  & \multicolumn{2}{c}{\textbf{Level 3}}  & \multicolumn{2}{c}{\textbf{Level 4}} \\
\cmidrule(lr){2-3}\cmidrule(lr){4-5}\cmidrule(lr){6-7}\cmidrule(lr){8-9}
\textbf{pass@k} & \textbf{Base} & \textbf{Trained} & \textbf{Base} & \textbf{Trained} & \textbf{Base} & \textbf{Trained} & \textbf{Base} & \textbf{Trained} \\
\midrule
\textbf{Pass@1} & 0.575 & 0.585 & 0.474 & 0.510 & 0.357 & 0.352 & 0.418 & 0.427 \\
\textbf{Pass@2} & 0.780 & 0.769 & 0.680 & 0.698 & 0.538 & 0.521 & 0.623 & 0.613 \\
\textbf{Pass@3} & 0.872 & 0.851 & 0.785 & 0.787 & 0.645 & 0.620 & 0.736 & 0.714 \\
\textbf{Pass@4} & 0.919 & 0.895 & 0.845 & 0.837 & 0.714 & 0.684 & 0.805 & 0.775 \\
\textbf{Pass@5} & 0.946 & 0.921 & 0.884 & 0.869 & 0.762 & 0.728 & 0.849 & 0.815 \\
\textbf{Pass@6} & 0.963 & 0.939 & 0.911 & 0.890 & 0.798 & 0.761 & 0.879 & 0.844 \\
\textbf{Pass@7} & 0.974 & 0.952 & 0.930 & 0.906 & 0.824 & 0.786 & 0.900 & 0.865 \\
\textbf{Pass@8} & 0.982 & 0.962 & 0.945 & 0.917 & 0.845 & 0.806 & 0.915 & 0.882 \\
\bottomrule
\end{tabular}
\caption{Machine Scheduling: Results by Level and Setting (Base/Trained) for pass@k, $k\in\{1,\dots,8\}$. Training used Qwen2.5-7B-Instruct.}
\end{table}

\clearpage

\subsection{Llama3.2-3B}
\label{app:llama-32-3b-results}

\begin{table}[h]
\centering
\small
\setlength{\tabcolsep}{6pt}
\renewcommand{\arraystretch}{1.2}

\begin{tabular}{l *{8}{c}}
\toprule
& \multicolumn{2}{c}{\textbf{Level 1}}  & \multicolumn{2}{c}{\textbf{Level 2}}  & \multicolumn{2}{c}{\textbf{Level 3}}  & \multicolumn{2}{c}{\textbf{Level 4}} \\
\cmidrule(lr){2-3}\cmidrule(lr){4-5}\cmidrule(lr){6-7}\cmidrule(lr){8-9}
\textbf{pass@k} & \textbf{Base} & \textbf{Trained} & \textbf{Base} & \textbf{Trained} & \textbf{Base} & \textbf{Trained} & \textbf{Base} & \textbf{Trained} \\
\midrule
\textbf{Pass@1} & 0.287 & 0.435 & 0.251 & 0.514 & 0.228 & 0.494 & 0.207 & 0.441 \\
\textbf{Pass@2} & 0.482 & 0.468 & 0.424 & 0.553 & 0.392 & 0.533 & 0.359 & 0.492 \\
\textbf{Pass@3} & 0.616 & 0.484 & 0.548 & 0.572 & 0.513 & 0.554 & 0.474 & 0.520 \\
\textbf{Pass@4} & 0.711 & 0.495 & 0.639 & 0.585 & 0.605 & 0.567 & 0.563 & 0.538 \\
\textbf{Pass@5} & 0.778 & 0.502 & 0.706 & 0.595 & 0.675 & 0.576 & 0.634 & 0.552 \\
\textbf{Pass@6} & 0.825 & 0.508 & 0.757 & 0.603 & 0.730 & 0.583 & 0.690 & 0.562 \\
\textbf{Pass@7} & 0.860 & 0.513 & 0.796 & 0.609 & 0.773 & 0.589 & 0.736 & 0.571 \\
\textbf{Pass@8} & 0.884 & 0.517 & 0.825 & 0.614 & 0.806 & 0.594 & 0.774 & 0.577 \\
\bottomrule
\end{tabular}
\caption{Role Assignment: Results by Level and Setting (Base/Trained) for pass@k, $k\in\{1,\dots,8\}$. Training used Llama-3.2-3B-Instruct.}
\end{table}

\vspace{0.5cm}

\begin{table}[h]
\centering
\small
\setlength{\tabcolsep}{6pt}
\renewcommand{\arraystretch}{1.2}

\begin{tabular}{l *{8}{c}}
\toprule
& \multicolumn{2}{c}{\textbf{Level 1}}  & \multicolumn{2}{c}{\textbf{Level 2}}  & \multicolumn{2}{c}{\textbf{Level 3}}  & \multicolumn{2}{c}{\textbf{Level 4}} \\
\cmidrule(lr){2-3}\cmidrule(lr){4-5}\cmidrule(lr){6-7}\cmidrule(lr){8-9}
\textbf{pass@k} & \textbf{Base} & \textbf{Trained} & \textbf{Base} & \textbf{Trained} & \textbf{Base} & \textbf{Trained} & \textbf{Base} & \textbf{Trained} \\
\midrule
\textbf{Pass@1} & 0.796 & 0.946 & 0.568 & 0.755 & 0.550 & 0.737 & 0.362 & 0.710 \\
\textbf{Pass@2} & 0.912 & 0.972 & 0.766 & 0.825 & 0.738 & 0.809 & 0.540 & 0.784 \\
\textbf{Pass@3} & 0.946 & 0.982 & 0.849 & 0.858 & 0.819 & 0.841 & 0.640 & 0.816 \\
\textbf{Pass@4} & 0.963 & 0.986 & 0.893 & 0.881 & 0.863 & 0.860 & 0.702 & 0.837 \\
\textbf{Pass@5} & 0.973 & 0.989 & 0.918 & 0.898 & 0.889 & 0.874 & 0.745 & 0.853 \\
\textbf{Pass@6} & 0.980 & 0.991 & 0.935 & 0.911 & 0.906 & 0.885 & 0.776 & 0.866 \\
\textbf{Pass@7} & 0.985 & 0.992 & 0.947 & 0.922 & 0.918 & 0.895 & 0.802 & 0.877 \\
\textbf{Pass@8} & 0.989 & 0.993 & 0.955 & 0.931 & 0.927 & 0.902 & 0.824 & 0.886 \\
\bottomrule
\end{tabular}
\caption{Constrained MaxSAT: Results by Level and Setting (Base/Trained) for pass@k, $k\in\{1,\dots,8\}$. Training used Llama-3.2-3B-Instruct.}
\end{table}

\vspace{0.5cm}

\begin{table}[h]
\centering
\small
\setlength{\tabcolsep}{6pt}
\renewcommand{\arraystretch}{1.2}

\begin{tabular}{l *{8}{c}}
\toprule
& \multicolumn{2}{c}{\textbf{Level 1}}  & \multicolumn{2}{c}{\textbf{Level 2}}  & \multicolumn{2}{c}{\textbf{Level 3}}  & \multicolumn{2}{c}{\textbf{Level 4}} \\
\cmidrule(lr){2-3}\cmidrule(lr){4-5}\cmidrule(lr){6-7}\cmidrule(lr){8-9}
\textbf{pass@k} & \textbf{Base} & \textbf{Trained} & \textbf{Base} & \textbf{Trained} & \textbf{Base} & \textbf{Trained} & \textbf{Base} & \textbf{Trained} \\
\midrule
\textbf{Pass@1} & 0.426 & 0.537 & 0.329 & 0.430 & 0.273 & 0.367 & 0.275 & 0.360 \\
\textbf{Pass@2} & 0.621 & 0.590 & 0.511 & 0.482 & 0.435 & 0.422 & 0.440 & 0.415 \\
\textbf{Pass@3} & 0.726 & 0.610 & 0.622 & 0.501 & 0.540 & 0.444 & 0.549 & 0.436 \\
\textbf{Pass@4} & 0.789 & 0.623 & 0.694 & 0.516 & 0.613 & 0.461 & 0.624 & 0.451 \\
\textbf{Pass@5} & 0.831 & 0.635 & 0.744 & 0.528 & 0.667 & 0.475 & 0.680 & 0.464 \\
\textbf{Pass@6} & 0.860 & 0.645 & 0.780 & 0.540 & 0.709 & 0.488 & 0.722 & 0.476 \\
\textbf{Pass@7} & 0.881 & 0.654 & 0.808 & 0.550 & 0.742 & 0.499 & 0.755 & 0.487 \\
\textbf{Pass@8} & 0.897 & 0.662 & 0.829 & 0.560 & 0.769 & 0.510 & 0.782 & 0.497 \\
\bottomrule
\end{tabular}
\caption{Machine Scheduling: Results by Level and Setting (Base/Trained) for pass@k, $k\in\{1,\dots,8\}$. Training used Llama-3.2-3B-Instruct.}
\end{table}

\clearpage

\subsection{Additional results MCTS}
\label{sec:appendix:results-mcts}

\begin{figure}[H]
  \centering
  \includegraphics[width=1.\textwidth]{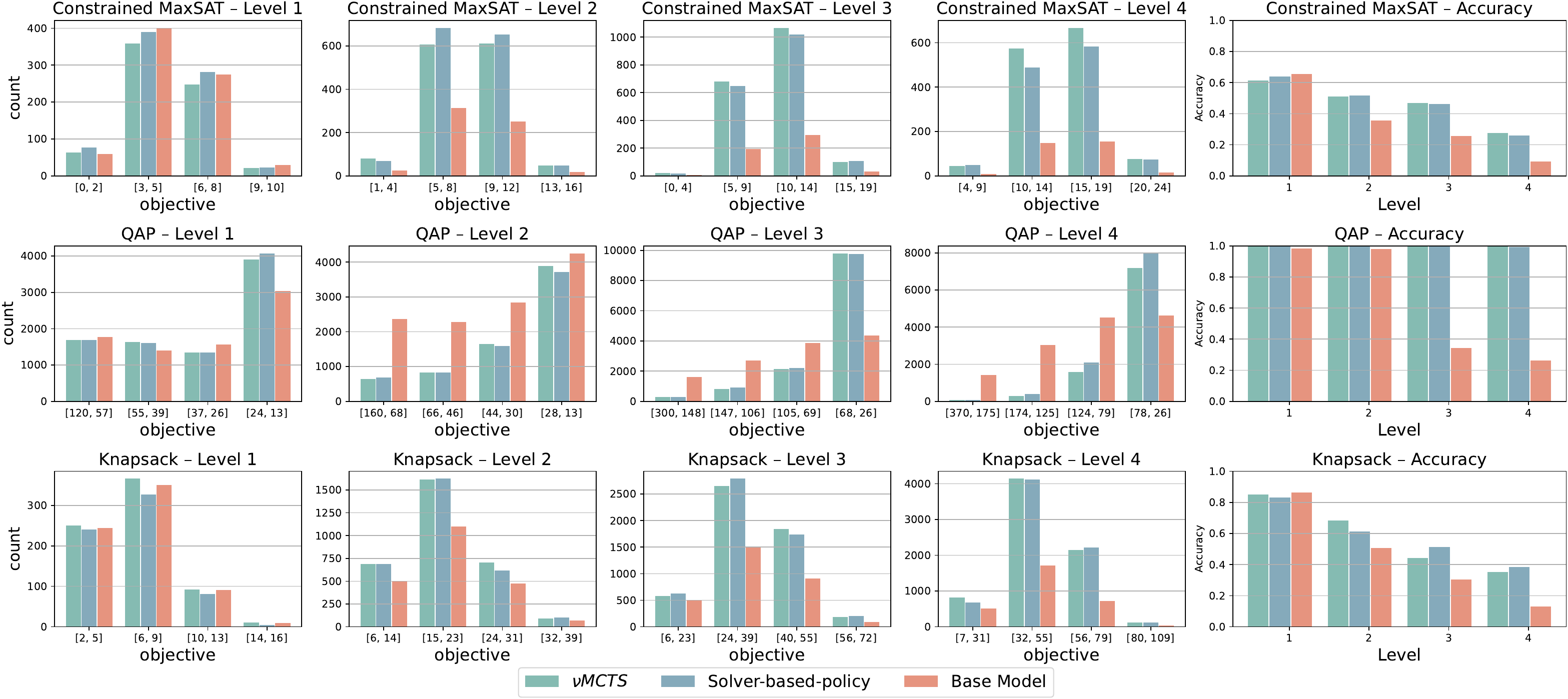}
  \vspace{-1em}
  \caption{Accuracy and solution-quality histograms for baseline, pure MCTS, and solver-guided MCTS on QAP, 0/1 Knapsack, and Constrained MaxSAT tasks as complexity (Level 1 $\rightarrow$ 4) increases.}
  \label{fig:optimization2}
  \vspace{-0.5em}
\end{figure}

\definecolor{mutedteal}{RGB}{103,171,159}
\definecolor{dustyblue}{RGB}{103,149,171}
\definecolor{softpeach}{RGB}{244,199,186}
\definecolor{mutedviolet}{RGB}{149,103,171}
\definecolor{appendixgray}{RGB}{122,122,122}

\subsection{Additional search-only diagnostics for Exp.~1 on OPT$^\star$}
\label{app:exp1-search-diagnostics}

\paragraph{Setup.}
This appendix provides additional search-only diagnostics for Exp.~1.  All results use the same no-training setting as in the main text, with the union reference pool, relative tolerance $\epsilon_{\mathrm{rel}}=0.05$, and target failure probability $\delta=0.10$ for the sample-complexity estimates.  We compare the full search configuration (S1: feasibility checking/pruning with duplicate merging), the no-check/pruning ablation (S2), the no-deduplication ablation (S3), the sequential baseline when available (S4), and the solver-reference trajectory generator (S5).  We use the same component labels for $\nu$MCTS, the parent-initialized $\nu$MCTS variant, and $\nu$BeamSearch as in the main text.

\paragraph{Metric definitions.}
For a task instance $t$, let $V_t^\star$ denote the best value found in the reference pool.  A terminal trajectory is counted as good if its task-normalized relative gap from $V_t^\star$ is at most $\epsilon_{\mathrm{rel}}$.  This definition applies to both maximization tasks and minimization tasks such as QAP.  We denote the rollout-level probability mass of good terminal trajectories by $p_g$.

Our main branching diagnostic is the effective branching proxy
\begin{equation}
  b_{\mathrm{eff}} = \frac{1}{\widetilde p_g},
\end{equation}
where $\widetilde p_g$ is the smoothed good-terminal mass used only to make derived quantities stable.  Lower values of $b_{\mathrm{eff}}$ indicate that a fixed search budget places more probability mass on high-value terminal solutions.  Figure~\ref{fig:app-exp1-branching-by-task-difficulty} plots $b_{\mathrm{eff}}$ directly against the OPT$^\star$ difficulty level, separately for each task.  This is the diagnostic most directly tied to the difficulty-dependent branching behavior predicted by the theorem.

We also report the sample-complexity estimate
\begin{equation}
  k_{90} = \left\lceil \frac{\log(0.10)}{\log(1-\widetilde p_g)} \right\rceil,
\end{equation}
which is the number of independent samples needed to obtain at least one good terminal with probability at least $0.90$ under the calibrated restart model.  More generally, pass@$k$ is defined as
\begin{equation}
  \mathrm{pass@}k = 1-(1-\widetilde p_g)^k,
\end{equation}
where $k$ is set to the search-sampling budget used in the report.  In addition, we report terminal feasibility, exact optimality, and the crossover ratio $\rho=b_{\mathrm{eff}}/b_{\mathrm{uniform}}$.  Values of $\rho<1$ indicate that the search policy concentrates more probability mass on good terminals than uniform sampling from the reference pool.

\paragraph{QAP.}
The Quadratic Assignment Problem (QAP) instances in OPT$^\star$ are minimization tasks.  Each instance assigns facilities bijectively to grid locations, with objective equal to the sum of flow-weighted Manhattan distances over facility pairs.  QAP is a useful stress test because the number of possible terminal assignments grows factorially with the number of unassigned facilities.  As a result, even small changes in good-terminal mass can lead to large changes in $b_{\mathrm{eff}}$ and pass@$k$.

\begin{figure}[t]
  \centering
  \includegraphics[width=\linewidth]{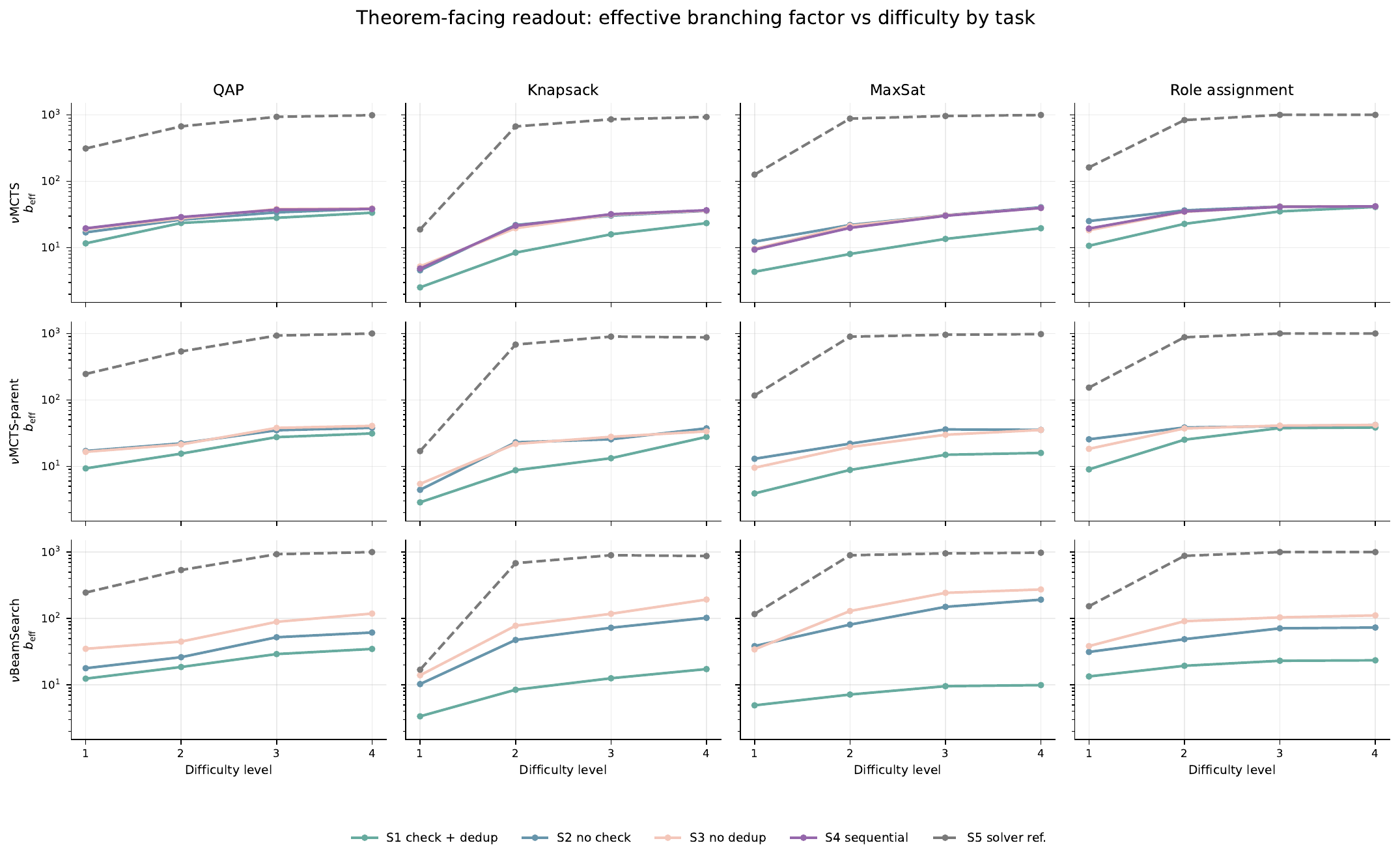}
  \caption{Effective branching factor versus OPT$^\star$ difficulty level, separated by task and search family.  Each column is a task, and each row is a search family.  The $y$-axis is logarithmic, with lower values indicating better search concentration.  Across tasks, the full S1 configuration achieves the lowest or near-lowest $b_{\mathrm{eff}}$ among model-based generators, while removing checking/pruning or duplicate merging increases the difficulty-dependent branching burden.}
  \label{fig:app-exp1-branching-by-task-difficulty}
\end{figure}

\begin{figure}[t]
  \centering
  \includegraphics[width=\linewidth]{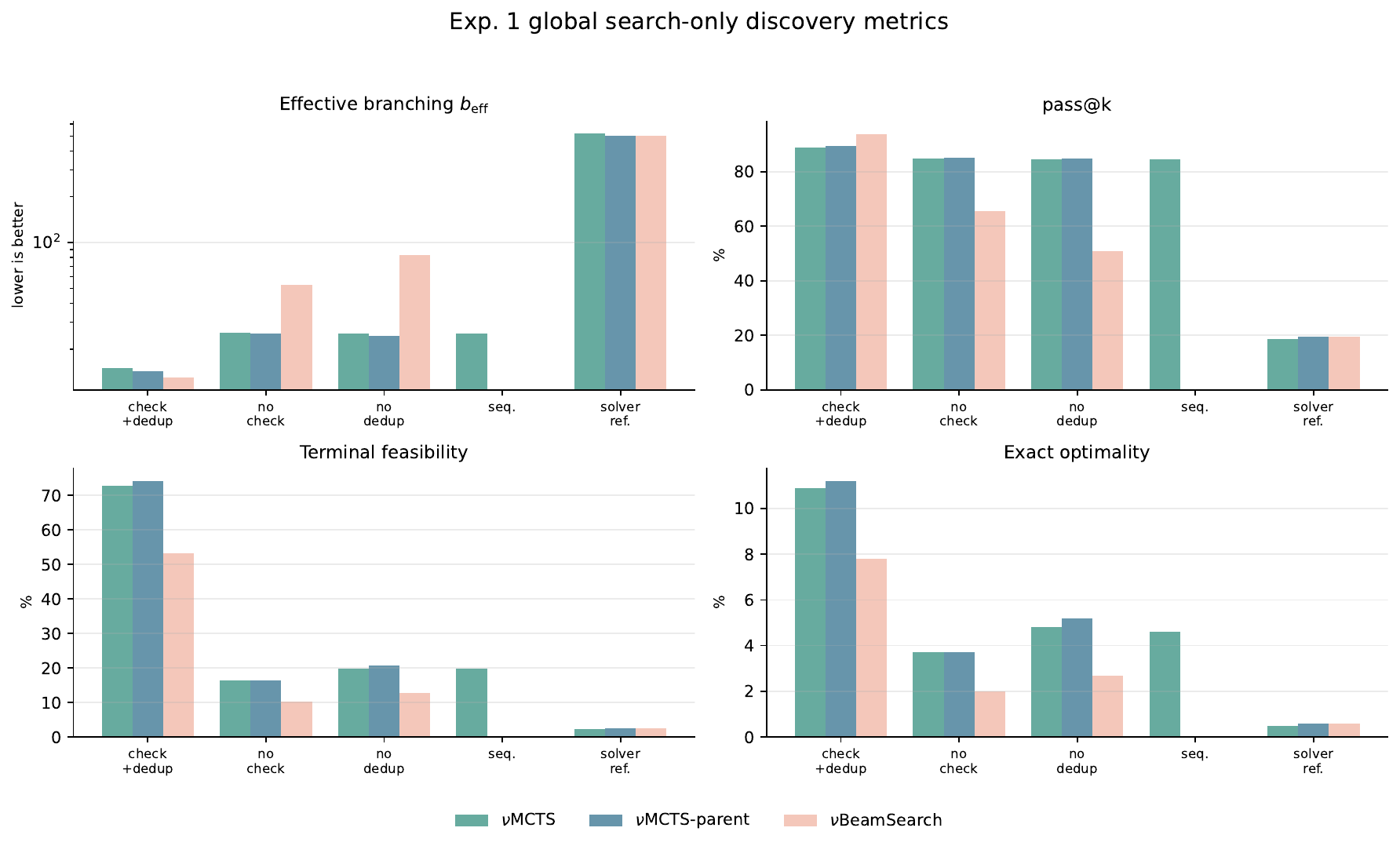}
  \caption{Global search-only metrics across the OPT$^\star$ tasks.  Among model-based generators, S1 consistently yields the smallest effective branching factor and the strongest pass@$k$.  Removing the checker/pruner or duplicate merging increases the effective search space and reduces terminal quality.}
  \label{fig:app-exp1-global-summary}
\end{figure}

\begin{figure}[t]
  \centering
  \includegraphics[width=\linewidth]{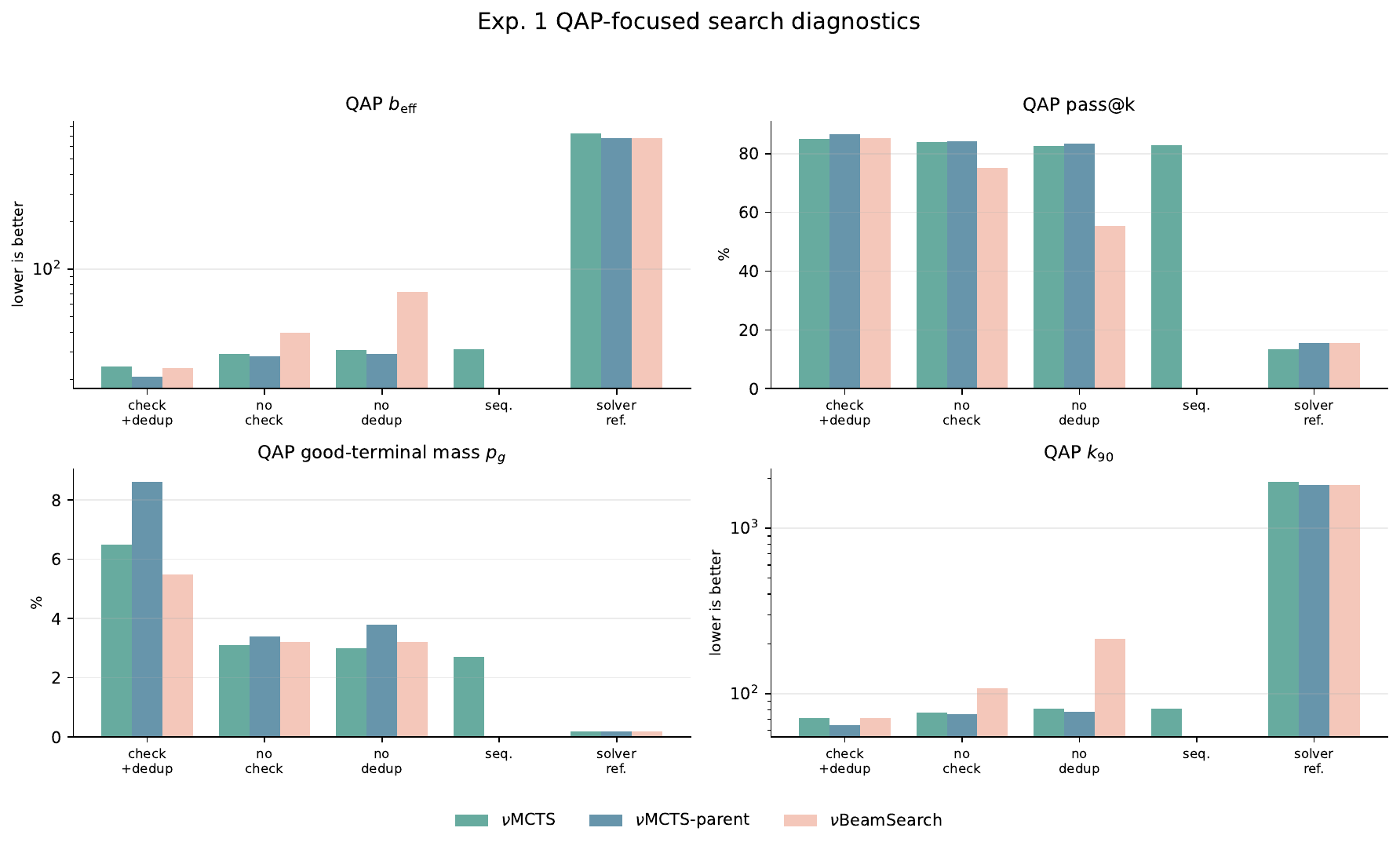}
  \caption{QAP-focused search metrics.  QAP follows the same component pattern as the global aggregate, but the separation from the solver-reference generator is sharper under the trajectory-mass readout.  Among the attached reports, the parent-initialized $\nu$MCTS variant achieves the lowest QAP $b_{\mathrm{eff}}$.}
  \label{fig:app-exp1-qap-summary}
\end{figure}

\begin{figure}[t]
  \centering
  \includegraphics[width=\linewidth]{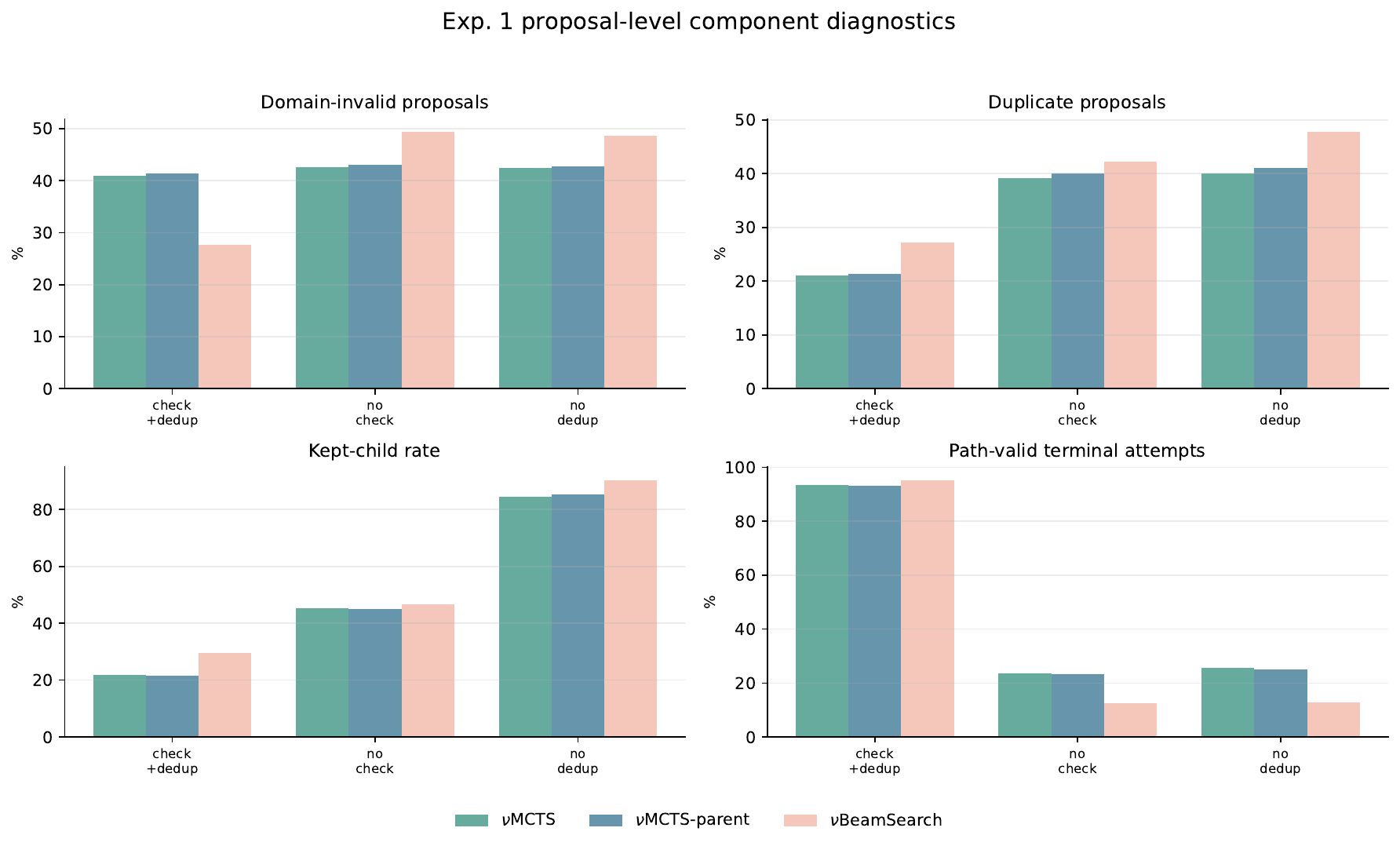}
  \caption{Proposal-level component diagnostics.  The full S1 configuration retains fewer raw proposals, but the retained proposals are substantially more likely to remain path-valid and terminal-feasible.  S2 exposes the search to invalid actions, while S3 retains many duplicate proposals; both effects reduce search efficiency.}
  \label{fig:app-exp1-component-diagnostics}
\end{figure}

\begin{figure}[t]
  \centering
  \includegraphics[width=\linewidth]{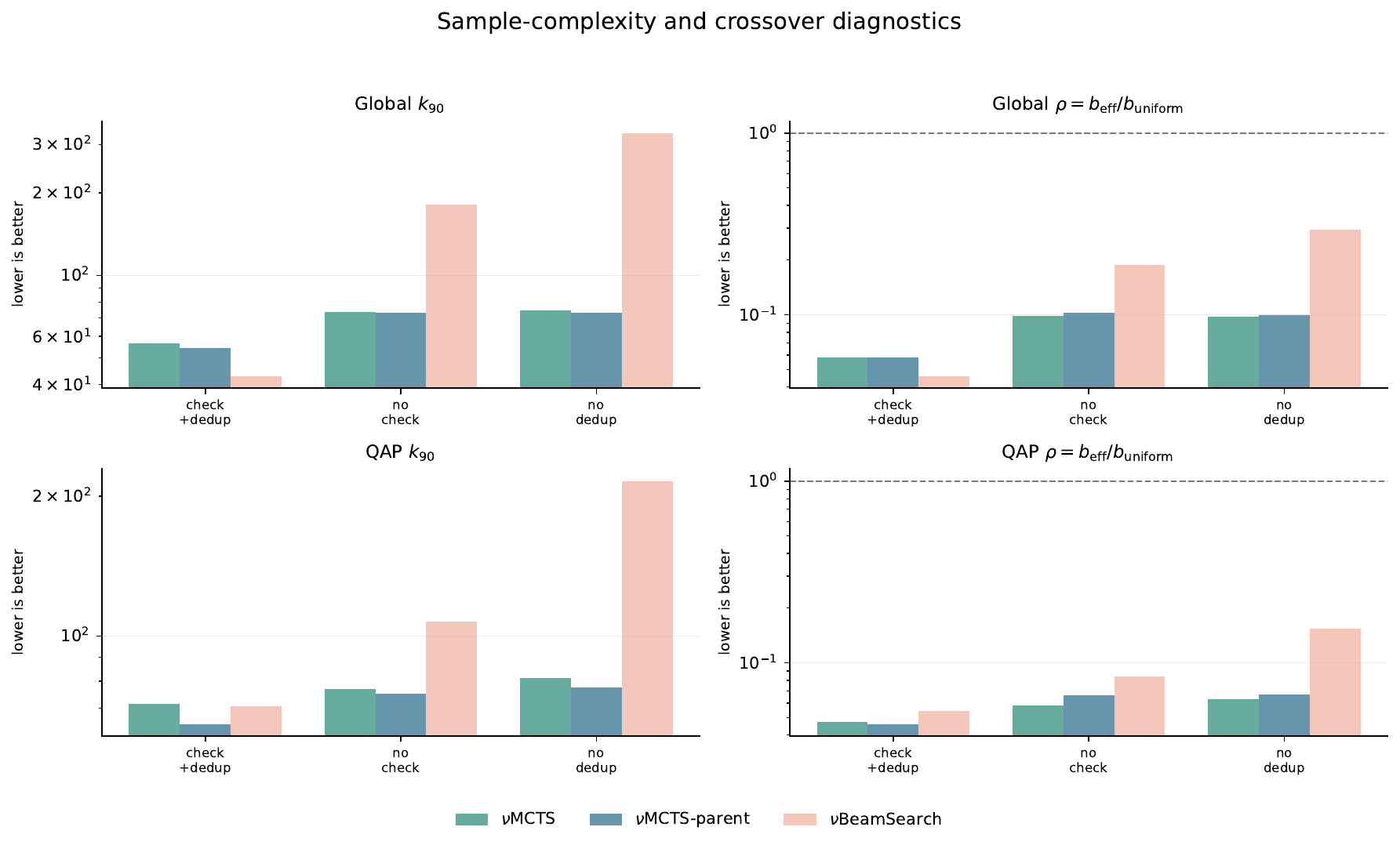}
  \caption{Sample-complexity and crossover readouts.  Lower $k_{90}$ means fewer samples are needed to observe an $\epsilon_{\mathrm{rel}}$-good terminal with high probability.  Lower $\rho=b_{\mathrm{eff}}/b_{\mathrm{uniform}}$ means the search procedure is more efficient than the uniform reference-pool proxy.}
  \label{fig:app-exp1-k-costratio}
\end{figure}

\paragraph{Discussion.}
The task-separated branching curves support the main-text claim that feasibility checking/pruning and duplicate merging reduce the effective branching burden before training.  On the branching readout, S1 achieves lower $b_{\mathrm{eff}}$ curves than the no-check and no-deduplication ablations across the tested difficulty levels.  By contrast, the solver-reference trajectory generator has high $b_{\mathrm{eff}}$, reflecting that it is not a calibrated stochastic search policy over all good terminals.

The aggregate and QAP-specific results show the same component effect through pass@$k$, terminal feasibility, and $k_{90}$.  It is important to distinguish pass@$k$ from exact optimality: pass@$k$ uses the $\epsilon_{\mathrm{rel}}$-good threshold, so a method can often reach near-optimal terminals even when its exact-optimality rate is lower.  For this reason, the solver-reference rows should be interpreted as a trajectory-generation baseline, not as a statement about solver optimality itself.


{\small
\begin{thebibliography}{54}
\providecommand{\natexlab}[1]{#1}
\providecommand{\url}[1]{\texttt{#1}}
\expandafter\ifx\csname urlstyle\endcsname\relax
  \providecommand{\doi}[1]{doi: #1}\else
  \providecommand{\doi}{doi: \begingroup \urlstyle{rm}\Url}\fi

\bibitem[Wei et~al.(2022)Wei, Wang, Schuurmans, Bosma, ichter, Xia, Chi, Le,
  and Zhou]{wei2022cot}
Jason Wei, Xuezhi Wang, Dale Schuurmans, Maarten Bosma, brian ichter, Fei Xia,
  Ed~Chi, Quoc~V Le, and Denny Zhou.
\newblock Chain-of-thought prompting elicits reasoning in large language
  models.
\newblock In S.~Koyejo, S.~Mohamed, A.~Agarwal, D.~Belgrave, K.~Cho, and A.~Oh,
  editors, \emph{Advances in Neural Information Processing Systems}, volume~35,
  pages 24824--24837. Curran Associates, Inc., 2022.

\bibitem[Zhou et~al.(2023)Zhou, Sch{\"a}rli, Hou, Wei, Scales, Wang,
  Schuurmans, Cui, Bousquet, Le, and Chi]{zhou2022least}
Denny Zhou, Nathanael Sch{\"a}rli, Le~Hou, Jason Wei, Nathan Scales, Xuezhi
  Wang, Dale Schuurmans, Claire Cui, Olivier Bousquet, Quoc Le, and Ed~H. Chi.
\newblock Least-to-most prompting enables complex reasoning in large language
  models.
\newblock In \emph{International Conference on Learning Representations}, 2023.
\newblock URL \url{https://openreview.net/forum?id=WZH7099tgfM}.

\bibitem[Lake et~al.(2017)Lake, Ullman, Tenenbaum, and
  Gershman]{lake2017buildingthinklikepeople}
Brenden~M Lake, Tomer~D Ullman, Joshua~B Tenenbaum, and Samuel~J Gershman.
\newblock Building machines that learn and think like people.
\newblock \emph{Behavioral and brain sciences}, 40:\penalty0 e253, 2017.

\bibitem[Yao et~al.(2023{\natexlab{a}})Yao, Zhao, Yu, Du, Shafran, Narasimhan,
  and Cao]{yao2023react}
Shunyu Yao, Jeffrey Zhao, Dian Yu, Nan Du, Izhak Shafran, Karthik Narasimhan,
  and Yuan Cao.
\newblock React: Synergizing reasoning and acting in language models.
\newblock In \emph{International Conference on Learning Representations
  (ICLR)}, 2023{\natexlab{a}}.

\bibitem[Zelikman et~al.(2022)Zelikman, Wu, Mu, and Goodman]{zelikman2022star}
Eric Zelikman, Yuhuai Wu, Jesse Mu, and Noah Goodman.
\newblock Star: Bootstrapping reasoning with reasoning.
\newblock \emph{Advances in Neural Information Processing Systems},
  35:\penalty0 15476--15488, 2022.

\bibitem[Yuan et~al.(2023)]{yuan2023scaling}
Zheng Yuan, Hongyi Yuan, Chengpeng Li, Guanting Dong, Keming Lu, Chuanqi Tan,
Chang Zhou, and Jingren Zhou.
\newblock Scaling relationship on learning mathematical reasoning with large
  language models.
\newblock \emph{arXiv preprint arXiv:2308.01825}, 2023.
\newblock URL \url{https://arxiv.org/abs/2308.01825}.

\bibitem[Singh et~al.(2023)]{singh2023beyond}
Avi Singh, John~D. Co-Reyes, Rishabh Agarwal, Ankesh Anand, Piyush Patil,
Xavier Garcia, Peter~J. Liu, James Harrison, Jaehoon Lee, Kelvin Xu,
Aaron Parisi, Abhishek Kumar, Alex Alemi, Alex Rizkowsky, Azade Nova,
Ben Adlam, Bernd Bohnet, Gamaleldin Elsayed, Hanie Sedghi, Igor Mordatch,
Isabelle Simpson, Izzeddin Gur, Jasper Snoek, Jeffrey Pennington, Jiri Hron,
Kathleen Kenealy, Kevin Swersky, Kshiteej Mahajan, Laura Culp, Lechao Xiao,
Maxwell~L. Bileschi, Noah Constant, Roman Novak, Rosanne Liu, Tris Warkentin,
Yundi Qian, Yamini Bansal, Ethan Dyer, Behnam Neyshabur,
Jascha Sohl-Dickstein, and Noah Fiedel.
\newblock Beyond human data: Scaling self-training for problem-solving with
  language models.
\newblock \emph{arXiv preprint arXiv:2312.06585}, 2023.
\newblock URL \url{https://arxiv.org/abs/2312.06585}.


\bibitem[Chen et~al.(2023)Chen, Lin, Sch{\"a}rli, and
  Zhou]{chen2023teachingselfdebug}
Xinyun Chen, Maxwell Lin, Nathanael Sch{\"a}rli, and Denny Zhou.
\newblock Teaching large language models to self-debug.
\newblock \emph{arXiv preprint arXiv:2304.05128}, 2023.

\bibitem[Zelikman et~al.(2024)Zelikman, Harik, Shao, Jayasiri, Haber, and
  Goodman]{zelikman2024quietstar}
Eric Zelikman, Georges~Raif Harik, Yijia Shao, Varuna Jayasiri, Nick Haber, and
  Noah~D. Goodman.
\newblock Quiet-{ST}a{R}: Language models can teach themselves to think before
  speaking.
\newblock In \emph{Conference on Language Modeling}, 2024.
\newblock URL \url{https://openreview.net/forum?id=oRXPiSOGH9}.

\bibitem[Hosseini et~al.(2024)Hosseini, Yuan, Malkin, Courville, Sordoni, and
  Agarwal]{hosseini2024v-star}
Arian Hosseini, Xingdi Yuan, Nikolay Malkin, Aaron Courville, Alessandro
  Sordoni, and Rishabh Agarwal.
\newblock {V}-{ST}a{R}: Training verifiers for self-taught reasoners.
\newblock In \emph{Conference on Language Modeling}, 2024.
\newblock URL \url{https://openreview.net/forum?id=stmqBSW2dV}.

\bibitem[Li et~al.(2023)Li, Hessel, Yu, Ren, Chang, and
  Choi]{li2023symbolicdistillcot}
Liunian~Harold Li, Jack Hessel, Youngjae Yu, Xiang Ren, Kai-Wei Chang, and
  Yejin Choi.
\newblock Symbolic chain-of-thought distillation: Small models can also
  ``think'' step-by-step.
\newblock In \emph{Proceedings of the 61st Annual Meeting of the Association
  for Computational Linguistics (Volume 1: Long Papers)}, pages 2665--2679.
  Association for Computational Linguistics, 2023.
\newblock \doi{10.18653/v1/2023.acl-long.150}.
\newblock URL \url{https://aclanthology.org/2023.acl-long.150/}.

\bibitem[Lanchantin et~al.(2025)Lanchantin, Chen, Lan, Li, Saha, Wang, Xu, Yu,
  Yuan, Weston, et~al.]{lanchantin2025bridginggapofflineonline}
Jack Lanchantin, Angelica Chen, Janice Lan, Xian Li, Swarnadeep Saha, Tianlu
  Wang, Jing Xu, Ping Yu, Weizhe Yuan, Jason~E Weston, et~al.
\newblock Bridging offline and online reinforcement learning for llms.
\newblock \emph{arXiv preprint arXiv:2506.21495}, 2025.

\bibitem[Silver et~al.(2016)Silver, Huang, Maddison, Guez, Sifre, Van
  Den~Driessche, Schrittwieser, Antonoglou, Panneershelvam, Lanctot,
  et~al.]{silver2016mastering}
David Silver, Aja Huang, Chris~J Maddison, Arthur Guez, Laurent Sifre, George
  Van Den~Driessche, Julian Schrittwieser, Ioannis Antonoglou, Veda
  Panneershelvam, Marc Lanctot, et~al.
\newblock Mastering the game of go with deep neural networks and tree search.
\newblock \emph{nature}, 529\penalty0 (7587):\penalty0 484--489, 2016.

\bibitem[Silver et~al.(2017)Silver, Schrittwieser, Simonyan, Antonoglou, Huang,
  Guez, Hubert, Baker, Lai, Bolton, et~al.]{silver2017mastering}
David Silver, Julian Schrittwieser, Karen Simonyan, Ioannis Antonoglou, Aja
  Huang, Arthur Guez, Thomas Hubert, Lucas Baker, Matthew Lai, Adrian Bolton,
  et~al.
\newblock Mastering the game of go without human knowledge.
\newblock \emph{nature}, 550\penalty0 (7676):\penalty0 354--359, 2017.

\bibitem[Shao et~al.(2024)Shao, Wang, Zhu, Xu, Song, Bi, Zhang, Zhang, Li, Wu,
  and Guo]{shao2024deepseekmath}
Zhihong Shao, Peiyi Wang, Qihao Zhu, Runxin Xu, Junxiao Song, Xiao Bi, Haowei
  Zhang, Mingchuan Zhang, Y.~K. Li, Y.~Wu, and Daya Guo.
\newblock {DeepSeekMath}: Pushing the limits of mathematical reasoning in open
  language models.
\newblock \emph{arXiv preprint arXiv:2402.03300}, 2024.

\bibitem[Zheng et~al.(2025)Zheng, Liu, Li, Chen, Yu, Gao, Dang, Liu, Men, Yang,
  Zhou, and Lin]{zheng2025gspo}
Chujie Zheng, Shixuan Liu, Mingze Li, Xiong-Hui Chen, Bowen Yu, Chang Gao, Kai
  Dang, Yuqiong Liu, Rui Men, An~Yang, Jingren Zhou, and Junyang Lin.
\newblock Group sequence policy optimization, 2025.
\newblock URL \url{https://arxiv.org/abs/2507.18071}.

\bibitem[Schulman et~al.(2017)Schulman, Wolski, Dhariwal, Radford, and
  Klimov]{schulman2017proximal}
John Schulman, Filip Wolski, Prafulla Dhariwal, Alec Radford, and Oleg Klimov.
\newblock Proximal policy optimization algorithms.
\newblock \emph{arXiv preprint arXiv:1707.06347}, 2017.

\bibitem[Yu et~al.(2025)Yu, Zhang, Zhu, Yuan, Zuo, Yue, Dai, Fan, Liu, Liu,
  et~al.]{yu2025dapo}
Qiying Yu, Zheng Zhang, Ruofei Zhu, Yufeng Yuan, Xiaochen Zuo, Yu~Yue, Weinan
  Dai, Tiantian Fan, Gaohong Liu, Lingjun Liu, et~al.
\newblock Dapo: An open-source llm reinforcement learning system at scale.
\newblock \emph{arXiv preprint arXiv:2503.14476}, 2025.

\bibitem[Chen et~al.(2025)Chen, He, Yuan, Chen, Cai, Dai, Yu, Yu, Li, Chen,
  et~al.]{chen2025enigmata}
Jiangjie Chen, Qianyu He, Siyu Yuan, Aili Chen, Zhicheng Cai, Weinan Dai,
  Hongli Yu, Qiying Yu, Xuefeng Li, Jiaze Chen, et~al.
\newblock Enigmata: Scaling logical reasoning in large language models with
  synthetic verifiable puzzles.
\newblock \emph{arXiv preprint arXiv:2505.19914}, 2025.

\bibitem[Wong et~al.(2025)Wong, Deng, He, Chen, You, Dong, Liang, Shen, Cui,
  and Zhang]{wong2025logicpuzzlerl}
Zhen~Hao Wong, Jingwen Deng, Runming He, Zirong Chen, Qijie You, Hejun Dong,
  Hao Liang, Chengyu Shen, Bin Cui, and Wentao Zhang.
\newblock Logicpuzzlerl: Cultivating robust mathematical reasoning in llms via
  reinforcement learning.
\newblock \emph{arXiv preprint arXiv:2506.04821}, 2025.

\bibitem[Wei et~al.(2025)Wei, Wu, Wan, Suresh, Tan, Zhou, Koyejo, Wang, and
  Aiken]{wei2025satbench}
Anjiang Wei, Yuheng Wu, Yingjia Wan, Tarun Suresh, Huanmi Tan, Zhanke Zhou,
  Sanmi Koyejo, Ke~Wang, and Alex Aiken.
\newblock Satbench: Benchmarking llms' logical reasoning via automated puzzle
  generation from sat formulas.
\newblock \emph{arXiv preprint arXiv:2505.14615}, 2025.

\bibitem[Zhu et~al.(2025)Zhu, Huang, Peng, Lu, Yu, Cheng, Qiu, Huang, and
  Lin]{zhu2025autologi}
Qin Zhu, Fei Huang, Runyu Peng, Keming Lu, Bowen Yu, Qinyuan Cheng, Xipeng Qiu,
  Xuanjing Huang, and Junyang Lin.
\newblock Autologi: Automated generation of logic puzzles for evaluating
  reasoning abilities of large language models.
\newblock \emph{arXiv preprint arXiv:2502.16906}, 2025.

\bibitem[Trinh et~al.(2024)Trinh, Wu, Le, He, and
  Luong]{AlphaGeometryTrinh2024}
Trieu Trinh, Yuhuai Wu, Quoc Le, He~He, and Thang Luong.
\newblock Solving olympiad geometry without human demonstrations.
\newblock \emph{Nature}, 2024.
\newblock \doi{10.1038/s41586-023-06747-5}.

\bibitem[{AlphaProof and AlphaGeometry teams}(2024)]{DeepMind2024AlphaProof}
{AlphaProof and AlphaGeometry teams}.
\newblock Ai achieves silver-medal standard solving international mathematical
  olympiad problems.
\newblock
  \url{https://deepmind.google/discover/blog/ai-solves-imo-problems-at-silver-medal-level/},
  July 2024.
\newblock Accessed 2025-09-25.

\bibitem[Patel et~al.(2025)Patel, Reddy, and Bahdanau]{patel2025getCHASE}
Arkil Patel, Siva Reddy, and Dzmitry Bahdanau.
\newblock How to get your llm to generate challenging problems for evaluation.
\newblock \emph{arXiv preprint arXiv:2502.14678}, 2025.

\bibitem[Chen et~al.(2021)Chen, Tworek, Jun, Yuan, Pinto, Kaplan, Edwards,
  Burda, Joseph, Brockman, Ray, Puri, Krueger, Petrov, Khlaaf, Sastry, Mishkin,
  Chan, Gray, Ryder, Pavlov, Power, Kaiser, Bavarian, Winter, Tillet, Such,
  Cummings, Plappert, Chantzis, Barnes, Herbert-Voss, Guss, Nichol, Paino,
  Tezak, Tang, Babuschkin, Balaji, Jain, Saunders, Hesse, Carr, Leike, Achiam,
  Misra, Morikawa, Radford, Knight, Brundage, Murati, Mayer, Welinder, McGrew,
  Amodei, McCandlish, Sutskever, and Zaremba]{chen2021evaluatingcodex}
Mark Chen, Jerry Tworek, Heewoo Jun, Qiming Yuan, Henrique Ponde de~Oliveira
  Pinto, Jared Kaplan, Harri Edwards, Yuri Burda, Nicholas Joseph, Greg
  Brockman, Alex Ray, Raul Puri, Gretchen Krueger, Michael Petrov, Heidy
  Khlaaf, Girish Sastry, Pamela Mishkin, Brooke Chan, Scott Gray, Nick Ryder,
  Mikhail Pavlov, Alethea Power, Lukasz Kaiser, Mohammad Bavarian, Clemens
  Winter, Philippe Tillet, Felipe~Petroski Such, Dave Cummings, Matthias
  Plappert, Fotios Chantzis, Elizabeth Barnes, Ariel Herbert-Voss,
  William~Hebgen Guss, Alex Nichol, Alex Paino, Nikolas Tezak, Jie Tang, Igor
  Babuschkin, Suchir Balaji, Shantanu Jain, William Saunders, Christopher
  Hesse, Andrew~N. Carr, Jan Leike, Josh Achiam, Vedant Misra, Evan Morikawa,
  Alec Radford, Matthew Knight, Miles Brundage, Mira Murati, Katie Mayer, Peter
  Welinder, Bob McGrew, Dario Amodei, Sam McCandlish, Ilya Sutskever, and
  Wojciech Zaremba.
\newblock Evaluating large language models trained on code.
\newblock \emph{arXiv preprint arXiv:2107.03374}, 2021.

\bibitem[Hendrycks et~al.(2021{\natexlab{a}})Hendrycks, Basart, Kadavath,
  Mazeika, Arora, Guo, Burns, Puranik, He, Song, and
  Steinhardt]{hendrycks2021apps}
Dan Hendrycks, Steven Basart, Saurav Kadavath, Mantas Mazeika, Akul Arora,
  Ethan Guo, Collin Burns, Samir Puranik, Horace He, Dawn Song, and Jacob
  Steinhardt.
\newblock Measuring coding challenge competence with {APPS}.
\newblock In \emph{Advances in Neural Information Processing Systems},
  2021{\natexlab{a}}.
\newblock arXiv:2105.09938.

\bibitem[Austin et~al.(2021)Austin, Odena, Nye, Bosma, Michalewski, Dohan,
  Jiang, Cai, Terry, Le, and Sutton]{austin2021programsynthesis}
Jacob Austin, Augustus Odena, Maxwell Nye, Maarten Bosma, Henryk Michalewski,
  David Dohan, Ellen Jiang, Carrie Cai, Michael Terry, Quoc Le, and Charles
  Sutton.
\newblock Program synthesis with large language models.
\newblock \emph{arXiv preprint arXiv:2108.07732}, 2021.

\bibitem[Li et~al.(2022)Li, Choi, Chung, Kushman, Schrittwieser, Leblond,
  Eccles, Keeling, Gimeno, Dal~Lago, Hubert, Choy, de~Masson~d'Autume,
  Babuschkin, Chen, Huang, Welbl, Gowal, Cherepanov, Molloy, Mankowitz,
  Sutherland~Robson, Kohli, de~Freitas, Kavukcuoglu, and
  Vinyals]{li2022alphacode}
Yujia Li, David Choi, Junyoung Chung, Nate Kushman, Julian Schrittwieser,
  R{\'e}mi Leblond, Tom Eccles, James Keeling, Felix Gimeno, Agustin Dal~Lago,
  Thomas Hubert, Peter Choy, Cyprien de~Masson~d'Autume, Igor Babuschkin,
  Xinyun Chen, Po-Sen Huang, Johannes Welbl, Sven Gowal, Alexey Cherepanov,
  James Molloy, Daniel~J. Mankowitz, Esme Sutherland~Robson, Pushmeet Kohli,
  Nando de~Freitas, Koray Kavukcuoglu, and Oriol Vinyals.
\newblock Competition-level code generation with {AlphaCode}.
\newblock \emph{Science}, 378\penalty0 (6624):\penalty0 1092--1097, 2022.
\newblock \doi{10.1126/science.abq1158}.
\newblock arXiv:2203.07814.

\bibitem[Stojanovski et~al.(2025)Stojanovski, Stanley, Sharratt, Jones,
  Adefioye, Kaddour, and K{\"o}pf]{stojanovski2025GYMreasoning}
Zafir Stojanovski, Oliver Stanley, Joe Sharratt, Richard Jones, Abdulhakeem
  Adefioye, Jean Kaddour, and Andreas K{\"o}pf.
\newblock {REASONING GYM}: Reasoning environments for reinforcement learning
  with verifiable rewards, 2025.
\newblock URL \url{https://arxiv.org/abs/2505.24760}.

\bibitem[Li et~al.(2025)]{li2025internbootcamp}
Peiji Li, Jiasheng Ye, Yongkang Chen, Yichuan Ma, Zijie Yu, Kedi Chen,
Xiaozhe Li, Ganqu Cui, Haozhan Li, Jiacheng Chen, Chengqi Lyu, Wenwei Zhang,
Linyang Li, Qipeng Guo, Dahua Lin, Bowen Zhou, and Kai Chen.
\newblock {InternBootcamp} technical report: Boosting {LLM} reasoning with
  verifiable task scaling.
\newblock \emph{arXiv preprint arXiv:2508.08636}, 2025.
\newblock URL \url{https://arxiv.org/abs/2508.08636}.


\bibitem[Saxton et~al.(2019)Saxton, Grefenstette, Hill, and
  Kohli]{saxton2019mathematicsdataset}
David Saxton, Edward Grefenstette, Felix Hill, and Pushmeet Kohli.
\newblock Analysing mathematical reasoning abilities of neural models.
\newblock In \emph{International Conference on Learning Representations
  (ICLR)}, 2019.
\newblock arXiv:1904.01557.

\bibitem[Hendrycks et~al.(2021{\natexlab{b}})Hendrycks, Burns, Kadavath, Arora,
  Basart, Tang, Song, and Steinhardt]{hendrycks2021mathdataset}
Dan Hendrycks, Collin Burns, Saurav Kadavath, Akul Arora, Steven Basart, Eric
  Tang, Dawn Song, and Jacob Steinhardt.
\newblock Measuring mathematical problem solving with the {MATH} dataset.
\newblock In \emph{Proceedings of the Neural Information Processing Systems
  Track on Datasets and Benchmarks}, 2021{\natexlab{b}}.

\bibitem[Yao et~al.(2023{\natexlab{b}})Yao, Yu, Zhao, Shafran, Griffiths, Cao,
  and Narasimhan]{yao2023tree}
Shunyu Yao, Dian Yu, Jeffrey Zhao, Izhak Shafran, Thomas~L. Griffiths, Yuan
  Cao, and Karthik Narasimhan.
\newblock Tree of thoughts: Deliberate problem solving with large language
  models.
\newblock In \emph{Advances in Neural Information Processing Systems},
  volume~36, 2023{\natexlab{b}}.

\bibitem[Feng et~al.(2023)]{feng2023alphazero}
Xidong Feng, Ziyu Wan, Muning Wen, Stephen Marcus McAleer, Ying Wen,
Weinan Zhang, and Jun Wang.
\newblock Alphazero-like Tree-Search can Guide Large Language Model Decoding
  and Training.
\newblock \emph{arXiv preprint arXiv:2309.17179}, 2023.
\newblock URL \url{https://arxiv.org/abs/2309.17179}.

\bibitem[Xie et~al.(2024)Xie, Goyal, Zheng, Kan, Lillicrap, Kawaguchi, and
  Shieh]{xie2024monte}
Yuxi Xie, Anirudh Goyal, Wenyue Zheng, Min-Yen Kan, Timothy~P Lillicrap, Kenji
  Kawaguchi, and Michael Shieh.
\newblock Monte carlo tree search boosts reasoning via iterative preference
  learning.
\newblock \emph{arXiv preprint arXiv:2405.00451}, 2024.

\bibitem[Chen et~al.(2024{\natexlab{a}})Chen, Liao, Li, and
  Fan]{chen2024alphamathzerops}
Guoxin Chen, Minpeng Liao, Chengxi Li, and Kai Fan.
\newblock Alphamath almost zero: process supervision without process,
  2024{\natexlab{a}}.
\newblock URL \url{https://arxiv.org/abs/2405.03553}.

\bibitem[Chen et~al.(2024{\natexlab{b}})Chen, Liao, Li, and
  Fan]{chen-etal-2024-step}
Guoxin Chen, Minpeng Liao, Chengxi Li, and Kai Fan.
\newblock Step-level value preference optimization for mathematical reasoning.
\newblock In \emph{Findings of the Association for Computational Linguistics:
  EMNLP 2024}, pages 7889--7903, Miami, Florida, USA, November
  2024{\natexlab{b}}. Association for Computational Linguistics.
\newblock \doi{10.18653/v1/2024.findings-emnlp.463}.
\newblock URL \url{https://aclanthology.org/2024.findings-emnlp.463/}.

\bibitem[Luo et~al.(2024)Luo, Liu, Liu, Phatale, Guo, Lara, Li, Shu, Zhu, Meng,
  et~al.]{luo2024improveomegaprm}
Liangchen Luo, Yinxiao Liu, Rosanne Liu, Samrat Phatale, Meiqi Guo, Harsh Lara,
  Yunxuan Li, Lei Shu, Yun Zhu, Lei Meng, et~al.
\newblock Improve mathematical reasoning in language models by automated
  process supervision.
\newblock \emph{arXiv preprint arXiv:2406.06592}, 2024.

\bibitem[Zhang et~al.(2024{\natexlab{a}})Zhang, Zhoubian, Hu, Yue, Dong, and
  Tang]{zhang2024mctsrest}
Dan Zhang, Sining Zhoubian, Ziniu Hu, Yisong Yue, Yuxiao Dong, and Jie Tang.
\newblock Rest-mcts*: Llm self-training via process reward guided tree search.
\newblock \emph{arXiv preprint arXiv:2406.03816}, 2024{\natexlab{a}}.

\bibitem[Zhang et~al.(2024{\natexlab{b}})Zhang, Huang, Zhou, Li, and
  Ouyang]{zhang2024accessing}
Di~Zhang, Xiaoshui Huang, Dongzhan Zhou, Yuqiang Li, and Wanli Ouyang.
\newblock Accessing gpt-4 level mathematical olympiad solutions via monte carlo
  tree self-refine with llama-3 8b.
\newblock \emph{arXiv preprint arXiv:2406.07394}, 2024{\natexlab{b}}.

\bibitem[Guan et~al.(2025)Guan, Zhang, Liu, Shang, Sun, Zhu, Yang, and
  Yang]{guan2025rstar}
Xinyu Guan, Li~Lyna Zhang, Yifei Liu, Ning Shang, Youran Sun, Yi~Zhu, Fan Yang,
  and Mao Yang.
\newblock rstar-math: Small llms can master math reasoning with self-evolved
  deep thinking.
\newblock \emph{arXiv preprint arXiv:2501.04519}, 2025.

\bibitem[Besta et~al.(2024)Besta, Blach, Kubicek, Gerstenberger, Podstawski,
  Gianinazzi, Gajda, Lehmann, Niewiadomski, Nyczyk, and
  Hoefler]{besta2024graphofthoughts}
Maciej Besta, Nils Blach, Ales Kubicek, Robert Gerstenberger, Michal
  Podstawski, Lukas Gianinazzi, Joanna Gajda, Tomasz Lehmann, Hubert
  Niewiadomski, Piotr Nyczyk, and Torsten Hoefler.
\newblock Graph of thoughts: Solving elaborate problems with large language
  models.
\newblock \emph{Proceedings of the AAAI Conference on Artificial Intelligence},
  38\penalty0 (16):\penalty0 17682--17690, 2024.

\bibitem[Kocsis and Szepesv{\'a}ri(2006)]{kocsis2006uct}
Levente Kocsis and Csaba Szepesv{\'a}ri.
\newblock Bandit based monte-carlo planning.
\newblock In \emph{Machine Learning: ECML 2006}, pages 282--293. Springer,
  2006.
\newblock \doi{10.1007/11871842_29}.

\bibitem[Coulom(2007)]{coulom2006efficient}
R{\'e}mi Coulom.
\newblock Efficient selectivity and backup operators in {Monte-Carlo} tree
  search.
\newblock In H.~Jaap van~den Herik, Paolo Ciancarini, and
  H.~H.~L.~M. Donkers, editors, \emph{Computers and Games: CG 2006},
  volume 4630 of \emph{Lecture Notes in Computer Science}, pages 72--83.
  Springer, 2007.
\newblock \doi{10.1007/978-3-540-75538-8_7}.

\bibitem[Browne et~al.(2012)Browne, Powley, Whitehouse, Lucas, Cowling,
  Rohlfshagen, Tavener, Perez, Samothrakis, and Colton]{browne2012survey}
Cameron~B. Browne, Edward Powley, Daniel Whitehouse, Simon~M. Lucas, Peter~I.
  Cowling, Philipp Rohlfshagen, Stephen Tavener, Diego Perez, Spyridon
  Samothrakis, and Simon Colton.
\newblock A survey of monte carlo tree search methods.
\newblock \emph{IEEE Transactions on Computational Intelligence and AI in
  Games}, 4\penalty0 (1):\penalty0 1--43, 2012.
\newblock \doi{10.1109/TCIAIG.2012.2186810}.

\bibitem[Albalak et~al.(2025)Albalak, Phung, Lile, Rafailov, Gandhi,
  Castricato, Singh, Blagden, Xiang, Mahan, and Haber]{albalak2025bigmath}
Alon Albalak, Duy Phung, Nathan Lile, Rafael Rafailov, Kanishk Gandhi, Louis
  Castricato, Anikait Singh, Chase Blagden, Violet Xiang, Dakota Mahan, and
  Nick Haber.
\newblock Big-math: A large-scale, high-quality math dataset for reinforcement
  learning in language models, 2025.
\newblock URL \url{https://arxiv.org/abs/2502.17387}.

\bibitem[Sheng et~al.(2025)Sheng, Zhang, Ye, Wu, Zhang, Zhang, Peng, Lin, and
  Wu]{sheng2025hybridflow}
Guangming Sheng, Chi Zhang, Zilingfeng Ye, Xibin Wu, Wang Zhang, Ru~Zhang,
  Yanghua Peng, Haibin Lin, and Chuan Wu.
\newblock Hybridflow: A flexible and efficient {RLHF} framework.
\newblock In \emph{Proceedings of the Twentieth European Conference on Computer
  Systems}. ACM, 2025.
\newblock \doi{10.1145/3689031.3696075}.

\bibitem[{Qwen Team}(2024)]{qwen25technicalreport}
{Qwen Team}.
\newblock Qwen2.5 technical report, 2024.
\newblock URL \url{https://arxiv.org/abs/2412.15115}.

\bibitem[{Meta}(2024)]{meta2024llama32modelcard}
{Meta}.
\newblock Llama 3.2 3b instruct model card.
\newblock Hugging Face model card, 2024.
\newblock URL \url{https://huggingface.co/meta-llama/Llama-3.2-3B-Instruct}.

\bibitem[Lightman et~al.(2024)Lightman, Kosaraju, Burda, Edwards, Baker, Lee,
  Leike, Schulman, Sutskever, and Cobbe]{lightman2024letsverify}
Hunter Lightman, Vineet Kosaraju, Yuri Burda, Harrison Edwards, Bowen Baker,
  Teddy Lee, Jan Leike, John Schulman, Ilya Sutskever, and Karl Cobbe.
\newblock Let's verify step by step.
\newblock In \emph{The Twelfth International Conference on Learning
  Representations}, 2024.
\newblock URL \url{https://openreview.net/forum?id=v8L0pN6EOi}.


\bibitem[{Mathematical Association of America}(2023)]{MAA_AMC_2023}
{Mathematical Association of America}.
\newblock American mathematics competitions ({AMC}) 2023.
\newblock Mathematical Association of America, 2023.
\newblock URL \url{https://maa.org/student-programs/amc/}.
\newblock Accessed 2026-06-03.

\bibitem[{Mathematical Association of America}(2024)]{MAA_AIME_2024}
{Mathematical Association of America}.
\newblock American invitational mathematics examination ({AIME}) 2024.
\newblock Mathematical Association of America, 2024.
\newblock URL \url{https://maa.org/maa-invitational-competitions/}.
\newblock Accessed 2026-06-03.

\bibitem[{Mathematical Association of America}(2025)]{MAA_AIME_2025}
{Mathematical Association of America}.
\newblock American invitational mathematics examination ({AIME}) 2025.
\newblock Mathematical Association of America, 2025.
\newblock URL \url{https://maa.org/maa-invitational-competitions/}.
\newblock Accessed 2026-06-03.

\end{thebibliography}
}
\end{document}